\documentclass{article}


\usepackage[final]{neurips_2024} 




\usepackage[utf8]{inputenc} 
\usepackage[T1]{fontenc}    
\usepackage{hyperref}       
\hypersetup{colorlinks,citecolor=blue,linkcolor=red,urlcolor=blue}
\usepackage{url}            
\usepackage{booktabs}       
\usepackage{amsfonts}       
\usepackage{nicefrac}       
\usepackage{microtype}      
\usepackage{xcolor}         
\usepackage{subfigure}
\usepackage{wrapfig}
\usepackage{makecell}
\usepackage{multirow}
\usepackage{tablefootnote}
\usepackage{graphicx}

\usepackage{tcolorbox}
\usepackage{tcolorbox}
\colorlet{myred}{red!20!white}
\definecolor{myblue}{rgb}{0.0, 0.0, 0.65}
\colorlet{mygreen}{green!20!white}

\usepackage{algorithm}
\usepackage{algorithmic}
\usepackage{amsmath}
\usepackage{amssymb}
\usepackage{amsthm}
\usepackage[capitalise]{cleveref}
\usepackage{dsfont}
\usepackage{enumitem}
\usepackage{thm-restate}
\usepackage{xspace}
\usepackage{tikz}
\usepackage{wrapfig}
\usetikzlibrary{bayesnet}

\newcommand{\indicator}{\mathds{1}}

\newcommand{\bA}{\mathbb{A}}
\newcommand{\bE}{\mathbb{E}}
\newcommand{\bN}{\mathbb{N}}
\newcommand{\bP}{\mathbb{P}}
\newcommand{\bR}{\mathbb{R}}

\newcommand{\cA}{\mathcal{A}}

\newcommand{\cC}{\mathcal{C}}
\newcommand{\cD}{\mathcal{D}}
\newcommand{\cE}{\mathcal{E}}
\newcommand{\cG}{\mathcal{G}}
\newcommand{\cH}{\mathcal{H}}
\newcommand{\cN}{\mathcal{N}}
\newcommand{\cF}{\mathcal{F}}

\newcommand{\diag}{\mathrm{diag}}
\newcommand{\Norm}{\mathrm{Norm}}

\newcommand{\TV}{\mathrm{TV}}
\newcommand{\KL}{\mathrm{KL}}

\newcommand{\Rdesignmat}{\mathbf{Z}}
\newcommand{\designmat}{\mathbf{V}}
\newcommand{\diagcounts}{\mathbf{N}}
\newcommand{\SSigma}{\mathbf{\Sigma}}
\newcommand{\diagmat}{\mathbf{d}}
\newcommand{\diagI}{\mathbf{I}}
\newcommand{\Sq}{\mathbf{S}}

\newcommand\OLSUCBC{\texttt{OLS-UCB-C}\xspace}
\newcommand\COSV{\texttt{COS-V}\xspace}
\newcommand\CUCB{\texttt{CUCB}\xspace}
\newcommand\OLSUCB{\texttt{OLS-UCB}\xspace}
\newcommand\ESCBC{\texttt{ESCB-C}\xspace}
\newcommand{\UCB}{\texttt{UCB}\xspace}
\newcommand{\UCBV}{\texttt{UCBV}\xspace}
\newcommand{\TS}{\texttt{TS}\xspace}
\newcommand{\CTSG}{\texttt{CTS-Gaussian}\xspace}
\newcommand{\CTS}{\texttt{CTS}\xspace}

\newcommand{\LinUCB}{\texttt{LinUCB}\xspace}

\makeatletter
\newenvironment{framework}[1][htb]{%
    \renewcommand{\ALG@name}{Framework}
    \begin{algorithm}[#1]%
    }{\end{algorithm}
}
\makeatother


\title{Towards Efficient and Optimal Covariance-Adaptive Algorithms for Combinatorial Semi-Bandits}

%

\author{%
  Julien Zhou 
  \vspace{2mm} \\
  Criteo AI Lab\\
  Paris, France
  \vspace{2mm}\\
  Univ. Grenoble Alpes, Inria, \\
  CNRS, Grenoble INP, LJK, \\
  38000 Grenoble, France 
  \vspace{2mm} \\
  \texttt{julien.zhou@inria.fr} \\  
  \And
  Pierre Gaillard 
  \vspace{2mm} \\
  Univ. Grenoble Alpes, Inria, \\
  CNRS, Grenoble INP, LJK, \\
  38000 Grenoble, France \\
  \And
  Thibaud Rahier 
  \vspace{2mm} \\
  Criteo AI Lab, \\
  Paris, France \\
  \And
  Houssam Zenati 
  \vspace{2mm} \\
  Université Paris-Saclay, Inria, \\
  Palaiseau, France \\
  \And
  Julyan Arbel 
  \vspace{2mm} \\
  Univ. Grenoble Alpes, Inria, \\
  CNRS, Grenoble INP, LJK, \\
  38000 Grenoble, France \\
}

\begin{document}

\maketitle

\begin{abstract}
We address the problem of stochastic combinatorial semi-bandits, where a player selects among $P$ actions from the power set of a set containing $d$ base items. Adaptivity to the problem's structure is essential in order to obtain optimal regret upper bounds. As estimating the coefficients of a covariance matrix can be manageable in practice, leveraging them should improve the regret. We design ``optimistic''  covariance-adaptive algorithms relying on online estimations of the covariance structure, called \OLSUCBC and \COSV (only the variances for the latter). They both yield improved gap-free regret. Although \COSV can be slightly suboptimal, it improves on computational complexity by taking inspiration from Thompson Sampling approaches. It is the first sampling-based algorithm satisfying a $O(\sqrt{T})$ gap-free regret (up to poly-logs). We also show that in some cases, our approach efficiently leverages the semi-bandit feedback and outperforms bandit feedback approaches, not only in exponential regimes where $P\gg d$ but also when $P\leq d$, which is not covered by existing analyses.
\end{abstract}

\section{Introduction}
\label{sec:Introduction}

In sequential decision-making, the bandit framework has been extensively studied and was instrumental to several applications, e.g. A/B testing \citep{guo2020deep}, online advertising and recommendation services \citep{zeng2016online}, network routing \citep{tabei2023design}, demand-side management \citep{bregere19a}, etc.
Its popularity stems from its relative simplicity, allowing it to model and analyze a wide range of challenging real-world settings. Reference books like \citet{bubeck_regret_2012} or \citet{lattimore2020bandit} offer a wide perspective on the subject. 

In this framework, a \emph{decision-maker} or \emph{player} must make choices and receives associated rewards, but it lacks prior knowledge of its environment. This naturally leads to an exploration-exploitation trade-off: the player must explore different actions to determine the best one, but an inefficient exploration strategy may harm the cumulative rewards. Efficient algorithms rely on exploiting the environment's structure, such as estimating the parameters of a reward function instead of exploring every action.

This paper focuses on the stochastic combinatorial semi-bandit framework. At each round, the player chooses a subset of \emph{base items} and receives a feedback for each item chosen.
The action set is included in the base items' power set, and can therefore be exponentially big and difficult to explore. 
The main challenge in this framework is to effectively combine the information collected through different actions
(that may share common base items).

\paragraph{Problem formulation.}

We consider a set of \(d\in\bN^*\) \emph{base items}, each item \(i\in[d]=\{1, \dots, d\}\) yielding stochastic rewards. A \emph{player} accesses these rewards through a set \(\cA\subseteq\{0, 1\}^d\) of $P\in\bN^*$ \emph{actions}, each corresponding to a subset of at most $m\geq 5$ items\footnote{Throughout the paper, the term \emph{item} (or \emph{base item}) refers to an element in the set $[d]$, while an \emph{action} denotes a subset of base items in $\cA$.}. We refer to actions $a\in\cA$ using their components vector \(a = (a_i)_{i\in[d]} \in\{0, 1\}^d\) where for all \(j\in[d]\), \(a_j=1\) if and only if action $a$ contains base item $j$.

The player interacts with an \emph{environment} over a sequence of \(T \in \bN^*\) \emph{rounds}. At each round \(t\in[T]\), the player chooses an action \(A_t\in\cA\), the environment samples a reward vector $Y_t \in \bR^d$, the player observes the realization for every item contained in $A_t$, and receives their sum. 
The interactions between the player and the environment are summarized in Framework~\ref{alg:procedure}.

\begin{framework}[ht]
   \caption{Stochastic Combinatorial Semi-Bandit}
   \label{alg:procedure}
    \begin{algorithmic}
        \STATE For each \(t\in\{1, \dots, T\}\):
        \STATE \begin{itemize}
            \item The player chooses an action \(A_t \in \cA\).
        \end{itemize}
        \STATE \begin{itemize}
            \item The environment samples a vector of rewards \(Y_t\in\bR^d\) from a fixed unknown distribution.
        \end{itemize} 
        \STATE \begin{itemize}
            \item The player receives the  reward \(\langle A_t, Y_t\rangle = \sum_{i}A_{t,i}Y_{t,i}\).
        \end{itemize}  
        \STATE \begin{itemize}[nosep]
            \item The player observes \(Y_{t,i}\) for all $i\in[d]$ s.t. \(A_{t,i}=1\).
        \end{itemize}
    \end{algorithmic}
\end{framework}

\paragraph{Assumptions.}
We make the following assumptions.
For all $t\in[T]$, \(Y_t\) is independent of the past rewards and the player's decision \(\sigma(A_1, Y_1, \dots, A_{t-1}, Y_{t-1}, A_t)\). There exists a mean reward vector \(\bE[Y_t] = \mu\in\bR^d\) and a second order moment matrix $\Sq=\bE[Y_tY_t^\top]\in M_d(\bR)$. The positive semi-definite covariance matrix is denoted $\SSigma \in M_d(\bR)$, with $\SSigma = \Sq-\mu\mu^\top$.There exists a known \textit{vector} \(B\in\bR_+^d\) such that for all $t\in[T]$ and \(i\in[d]\), \(|Y_{t,i}| \leq B_i/2\) almost surely (and $|Y_{t,i}-\mu_i|\leq B_i)$.

The objective of the decision-maker is to minimize the  expected cumulative pseudo-regret defined as:
\begin{equation}\label{eq:regret}
    \textstyle \bE[R_T]
    =\bE\left[\sum_{t=1}^T\langle a^*-A_t, \mu\rangle\right] 
    = \sum_{t=1}^T\Delta_{A_t},
\end{equation}
where \( \langle \cdot, \cdot \rangle\) denotes the usual inner product in \(\bR^d\), \(a^* \in \arg\max_{a\in\cA} \langle a, \mu\rangle\) is an optimal action, and \(\Delta_a = \langle a^*-a, \mu\rangle\) is the \emph{sub-optimality gap} for action \(a \in \cA\).

\subsection{Existing work and limitations}

Combinatorial semi-bandit problems have been extensively studied by the bandit community since their introduction by \cite{chen2013combinatorial}. Here, we only highlight key earlier works related to this paper. For a comprehensive introduction to this literature, we refer the interested reader to the monograph by \cite{lattimore2020bandit}.

A first line of works considers deterministic algorithms based on the optimistic principle and upper confidence bounds (UCBs). 
\cite{chen2013combinatorial} first designed \CUCB, computing UCBs for the items' average rewards, converting these into UCBs for the actions' rewards, and choosing the action with the highest one. 
It was later analyzed by \cite{kveton2015tight}, who proved a regret upper bound uniform over all possible covariance matrices $\smash \SSigma$ (hence, paying the worst-case). 
\cite{combes2015combinatorial} highlighted the importance of designing $\smash{\SSigma}$-adaptive algorithms by showing that the regret could be improved by a factor of $m$ when the items' average rewards are independent. 
Subsequently, \cite{degenne2016combinatorial} developed \OLSUCB, an algorithm intended to leverage the covariance structure.
However, \OLSUCB requires prior knowledge of a positive semi-definite covariance-proxy matrix ${\mathbf{\Gamma}}$, such that for all $t\geq 1$ and for all $u\in \bR^d$, $\bE[\exp(\langle u, Y_t-\mu\rangle)] \leq \exp(\smash{\frac{1}{2}}\Vert u\Vert_{\mathbf{\Gamma}}^2)$. 
Estimating $\mathbf{\Gamma}$ in practice is challenging and leads to regret bounds depending on it instead of the ``true'' covariance matrix $\SSigma$, potentially resulting in significantly looser bounds.
This issue was addressed by \cite{perrault2020covariance}, who proposed a covariance-adaptive algorithm, \ESCBC, with asymptotically optimal gap-dependent regret upper bounds. Yet, it suffers from an additive constant of order $\Delta_{\min}^{-2}$ , which prevents its conversion into an $\smash{\tilde O(\sqrt{T})}$\footnote{We denote $\tilde O$ for $O$ when $T\rightarrow \infty$, up to poly-logarithmic terms.} gap-free bound.  
Thus, none of the above works proposes a $\smash{\tilde O(\sqrt{T})}$ gap-free and covariance-adaptive regret bound, which is one of the key contributions of this paper.
A common drawback of these works is also their potentially prohibitive computational complexity, due to the need to solve a maximization step over a large action set $\cA \subseteq \{0,1\}^d$ that can be exponentially large. 
Some works, such as  \cite{cuvelier2021statistically} or \cite{liu2022batch}, propose solutions to achieve polynomial time complexity, for example by applying UCB at the item level only rather than the action level. However, these approaches only work for independent rewards or under specific assumptions on their distribution, making the analysis for generic and unknown distributions extremely challenging. Another approach to tackle the computational burden in combinatorial semi-bandits is to resort to sampling algorithms, which we detail below.

A second line of works for stochastic combinatorial semi-bandits considers randomized algorithms inspired by Thompson Sampling (\TS) for multi-armed bandits \citep{thompson1933likelihood}. These algorithms involve sampling a random vector $\tilde \mu_t\in\bR^d$ at each round $t+1\in[T-1]$ from a distribution representing a ``belief'' over the parameter $\mu$, taking a decision $A_{t+1}\in\arg\max_{a\in\cA} \langle a,\tilde \mu_t\rangle$, and updating the belief distribution using the observations. The main appeal of these approaches lies in their computational complexity, especially when solving a linear maximization problem in particular action spaces (such as matroids). 
Recent works have designed and analyzed such algorithms. Notably, \citet{wang2018thompson} consider independent item's rewards. \citet{perrault2020statistical} refine it and assume a known variance-proxy $\mathbf{\Gamma}$ and therefore suffers from the same drawbacks as \cite{degenne2016combinatorial}. Their technical analysis also yields a gap-dependent regret bound with an undesirable $\smash{\Delta_{\min}^{-m}}$ term, preventing a $\smash{\tilde O(\sqrt{T})}$ gap-free rate.
A central contribution in our paper is the combination of the computational efficiency for sampling algorithms with the covariance-adaptivity $\smash{\tilde O(\sqrt{T})}$ gap-free from our UCB approach. 

Besides, the literature concerning our setting has historically mostly focused on cases where the action set is exponentially large, namely \(P \gg d\), and the way to get quasi-optimal regret rates in these instances. However, outside of these regimes, the commonly derived regret bounds are too rough and fail to show the benefit of the semi-bandit feedback. While the conventional stochastic combinatorial semi-bandit regret upper bound grows as \(\smash{\tilde O(\sqrt{mdT})}\) \citep{kveton2015tight}, a \(\smash{\tilde O(\sqrt{mPT})}\) could be achieved using bandit feedback only \citep{auer2002nonstochastic}. Intriguingly, the latter appears to outperform the semi-bandit rate as soon as $P < d$, making the extra information obtained through a richer feedback seemingly useless. Fine-grained analyses, clearly taking the structure into account, are therefore needed.

\subsection{Contributions}

\paragraph{A new deterministic optimism-based algorithm (\cref{sec:OLSUCBC}).} 
We present \OLSUCBC (Online Least Squares Upper Confidence Bound with Covariance estimation), relying on the optimism principle. The analysis of \OLSUCBC sketched in \cref{sec:TechnicalOLSUCBC} shows the following properties:
    \begin{itemize}[nosep, leftmargin=1cm]
        \item \textit{First optimal gap-free regret upper bound}. \OLSUCBC yields a similar gap-dependent regret bound as \ESCBC from \citet{perrault2020covariance} up to logarithmic factors, and \textit{the first optimal covariance-adaptative gap-free} $\smash{\tilde O(\sqrt{T})}$ regret bound (\cref{thm:RegretOLSUCBC}). 
        \item \textit{Improved performance over UCB in all regimes of $P/d$}. Under some conditions on the covariance matrix $\SSigma$, we prove that \OLSUCB has a uniformly better regret than \UCB, showing that properly leveraging semi-bandit feedback indeed consistently offers an advantage on (simple) bandit algorithms, which is not straightforward from existing analyses.
        \item \textit{Improved complexity over concurrent algorithms}. \OLSUCBC circumvents the convex optimization problem that \ESCBC requires to solve at each round and is therefore more efficient, despite suffering in the very large $P$ regime as many other deterministic algorithms.
    \end{itemize}
    
\paragraph{The first stochastic optimism-based algorithm (\cref{sec:COSV}).} 
We introduce \COSV (Combinatorial Optimistic Sampling with Variance estimation), a \TS-inspired algorithm exploiting the ``frequentist'' confidence regions derived in \cref{sec:Estimators}. It satisfies the following:
    \begin{itemize}[nosep, leftmargin=1cm]
        \item \textit{Improved complexity for $P \gg 1$ compared to other deterministic semi-bandit algorithms}. \COSV can be efficient in the very large $P$ regime, which is the main blind spot of \OLSUCBC.
        \item \textit{First gap-free $\smash{\tilde O(\sqrt{T})}$ regret upper bound for a sampling algorithm}. The analysis we provide in \cref{sec:TechnicalCOSV} exploits the common structure of the \OLSUCBC and \COSV algorithms. It enables the derivation of a gap-dependent bound for \COSV that does not involve the $\Delta^{-m}$ term we typically find in the analysis for other \TS algorithms \citep{wang2018thompson, perrault2020statistical}, consequently leading to a new $\smash{\tilde O(\sqrt{T})}$ variance-adaptive gap-free regret upper bound for a sampling algorithm.
    \end{itemize}
    
\paragraph{A novel gap-free lower bound (\cref{sec:LowerBound}).} We show a gap-free lower bound on the regret for stochastic combinatorial semi-bandits, explicitly involving the structure of the problem (the items forming each action) and the covariance matrix $\SSigma$. This lower bound highlights the optimality of the gap-free upper bound we establish for \OLSUCBC.

Technical details are deferred to Section~\ref{sec:Estimators}, Section~\ref{sec:Regret}, and the Appendix.

\begin{table*}[ht]
    \caption{\small Asymptotic $\tilde{O}(\cdot)$ regret bounds 
    and per-round time complexities
    up to poly-logarithmic terms in $d$, for the following deterministic algorithms: 
    \UCB \citep{auer_finite_2002},
    \UCBV \citep{audibert2009exploration},
    \CUCB \citep{kveton2015tight},
    \OLSUCBC \citep{degenne2016combinatorial},
    \ESCBC \citep{perrault2020covariance},
    and \OLSUCBC (\textcolor{myblue}{ours}); as well as the two stochastic algorithms:
    \CTSG \citep{perrault2020statistical} and
    \COSV (\textcolor{myblue}{ours}). \\
    Notations: \(a\) refers to actions; \(i\) and \(j\) refer to items; \(m\) denotes the maximum number of items per action; $B$ is a vector of bounds on the items' rewards; $\mathbf{\Gamma}$ is a covariance-proxy matrix; $\gamma$ is the maximum of ``correlations-proxy''; we abbreviate $\max\{x,0\}$ to $(x)_+$ for any $x \in \mathbb{R}$ ; $C_{1/T}^{\text{opt}}$ refers to the complexity of the optimisation step needed in \ESCBC. 
    }
    \label{tab:summary}
    \centering
    \begin{tabular}{ccccc}
    \toprule
    \makecell{Fdbck.} & \makecell{Algorithm} & \makecell{Info.} & \makecell{Time Complexity} & \makecell{Gap-Free Asymptotic Regret}\\
    \midrule
    \multirow{2}{*}{Bndt.} & \UCB & $B$ & \(P\) & \(\big(T\sum_{a}(a^\top B)^2\big)^{1/2}\)  \\
    & \UCBV & $\varnothing$ & \(P\) & \(\big(T\sum_{a}a^\top\SSigma a\big)^{1/2}\)  \\
    \midrule
    \multirow{5}{*}{S-Bndt.}& \CUCB & $B$ & \(mP\) & \(\big(Tmd\big)^{1/2}\|B\|_{\infty}\)  \\
    & \OLSUCB\footnotemark[1] & $\mathbf{\Gamma}$ & \(m^2 + Pd^2\) & $\varnothing$ \\
    & \ESCBC & $\varnothing$ & \(m^2 + P \ C_{1/T}^{\text{opt}}\) & \(\varnothing\) \\
    & \textcolor{myblue}{\textbf{\OLSUCBC}} & $\varnothing$ & \textcolor{myblue}{\(m^2 + Pd^2\)} & \textcolor{myblue}{\({\big(T\sum_i \max_{a / i\in a}\sum_{j\in a}(\SSigma_{i,j})_+}\big)^{1/2}\)} \\
    \midrule
    \multirow{2}{*}{S-Bndt.}& \CTSG\footnotemark[2] & $\mathbf{\Gamma}$ & \(\text{poly}(d)\) & \(\varnothing\) \\
    & \textcolor{myblue}{\textbf{\COSV}}\footnotemark[2] & $\varnothing$ & \textcolor{myblue}{ \( \text{poly}(d)\)} & \textcolor{myblue}{$\big(Tm\sum_{i\in[d]}\SSigma_{i,i}\big)^{1/2}$} \\
    \bottomrule    
    \end{tabular}
\end{table*}

\footnotetext[1]{Note that \OLSUCB incur a $\Delta^{-2}$ term in its gap-dependent bound. This was overlooked by the authors but yields a $T^{2/3}$ gap-free bound.}
\footnotetext[2]{Assuming $\cA$ has matroid structure, the computational complexity is improved compared to a $O(P)$ for large $P$.}

\section{Covariance-adaptative deterministic algorithm: \OLSUCBC}\label{sec:OLSUCBC}

In this section, we design a new algorithm that efficiently leverages the semi-bandit feedback. It approximates the coefficients of the covariance matrix $\SSigma$ online. The approximation is symmetric by construction and yields a coefficient-wise upper bound of $\SSigma$, but it is not necessarily positive semi-definite, a constraint that can be challenging to impose in practice.

\subsection{Algorithm: \OLSUCBC}\label{ssec:OLSUCBC}

We present \OLSUCBC described in Alg.~\ref{alg:OLSUCBC} and detail below the successive steps it performs.

\paragraph{Initial exploration.} The algorithm first explores by choosing every base item $i\in[d]$ and every ``reachable'' couple $(i,j) \in [d]^2$ at least once. 

\begin{wrapfigure}{r}{0.48\linewidth}
\vspace{-0.3cm}
\centering
\begin{minipage}{\linewidth}
\begin{algorithm}[H]
   \caption{\OLSUCBC}
   \label{alg:OLSUCBC}
    \begin{algorithmic}
    \STATE \textbf{Input} $\delta > 0$, $B \in \mathbb{R}_+^d$.
    \FOR{\(t=1, \dots, T\)}
        \IF{$\big\{a \in \cA \vert \min_{(i,j)\in a} n_{t,(i,j)} < 1 \big\} \neq \emptyset$}
        \STATE Choose any $A_t$ in the above set.
        \ELSE
        \STATE Compute $\hat \mu_{t-1}$ from \eqref{eq:EmpiricalMean}.
        \STATE Compute $\hat{\SSigma}_{t-1}$ from \eqref{eq:CovEstimation} and \eqref{eq:SSigmaHat}.
        \STATE Compute $\hat \Rdesignmat_{t-1}$ from \eqref{eq:RDesignMat}.
        \STATE Choose \(A_t \in \cA\) from \eqref{eq:OLSUCBC}.
        \STATE Environment samples \(Y_t\in\bR^d\).
        \STATE Receive reward \(\langle A_t, Y_t\rangle = \sum_{i}A_{t,i}Y_{t,i}\).
        \ENDIF
    \ENDFOR
\end{algorithmic}
\end{algorithm}
\end{minipage}
\vspace{-0cm}
\end{wrapfigure}

\paragraph{Rewards means estimation.} At each round $t+1\in[T-1]$, the algorithm uses an empirical mean $\hat{\mu}_{t}$ for $\mu$ defined as
\begin{equation}
\label{eq:EmpiricalMean}
    \textstyle \hat{\mu}_t 
    = \diagcounts_t^{-1}\sum_{s=1}^t \diagmat_{A_s} Y_s
    ,
\end{equation}
where \(\diagmat_a = \diag(a)\in M_d(\bR)\) is the diagonal matrix of the elements in \(a\in\cA\); $n_{t,(i,j)}$ is the number of times items $i$ and $j$ (with possibly $i=j$) have been chosen together; \(\diagcounts_t = \diag((n_{t,(i,i)})_{i\in[d]})\in M_d(\bR)\) is the diagonal matrix of item counts.

\paragraph{Rewards covariances estimation.}

The covariances are estimated by
\begin{equation}\label{eq:CovEstimation}
    \hat{\chi}_{t,(i,j)} = \textstyle \hat{\Sq}_{t,(i,j)} - \hat\mu_{t,i}\hat\mu_{t,j}\,,
\end{equation}
where $\hat{\Sq}_{t,(i,j)}=\frac{1}{n_{t,(i,j)}}\sum_{s=1}^t A_{s,i}A_{s,j}Y_{s,i}Y_{s,j}$. The algorithm uses $\hat\SSigma_t$, a coefficient-wise upper-confidence bound of $\SSigma$ whose coefficients are defined for a fixed $\delta > 0$ as
\begin{align}
    \label{eq:SSigmaHat}
    \hat{\SSigma}_{t,(i,j)} &= \hat{\chi}_{t,(i,j)} + \frac{B_iB_j}{4}\bigg(\frac{5h_{t,\delta}}{\sqrt{n_{t,(i,j)}}}+\frac{h_{t,\delta}^2}{n_{t,(i,j)}}+\frac{1}{n_{t,(i,j)}^2}\bigg)\,,
\end{align}
where $\smash h_{t,\delta} = \Big(1+2\log(1/\delta)+2\log\big(t\log(t)^2d(d+1)\big)+\log(1+t)\Big)^{1/2}$.

\paragraph{Optimistic action choice.} Following the `optimistic' principle of \UCB-like algorithms, the estimated rewards $(\langle \hat{\mu}_t, a\rangle)_{a\in\cA}$ are inflated by bonuses, yielding corresponding upper confidence bounds. The bonuses involve the history through a \emph{regularized empirical design matrix} (with empirical covariances):
\begin{equation}
    \label{eq:RDesignMat}
    \hat \Rdesignmat_t = \sum_{s=1}^t\diagmat_{A_s}\hat{\SSigma}_t\diagmat_{A_s} + \diagmat_{\hat{\SSigma}_t} \diagcounts_t+\|B\|^2\diagI\,,
\end{equation}
where \(\smash{\diagmat_{\hat{\SSigma}_t} = \diag(\hat{\SSigma}_t)}\in M_d(\bR)\), \(\diagI\) is the identity matrix and $\hat{\SSigma}_t$ is the coefficient-wise upper bound for the covariance matrix defined in \eqref{eq:SSigmaHat}. Formally, \OLSUCBC chooses
\begin{equation} \label{eq:OLSUCBC}
   A_{t+1} \in \arg\max_{a\in \cA} \Big\{\langle a, \hat \mu_{t} \rangle +  f_{t, \delta}\ \big\|\diagcounts_t^{-1}a\big\|_{\hat \Rdesignmat_t}\Big\}\,,
\end{equation}
where $\smash  f_{t,\delta} = 6\log(1/\delta) + 6\Big(\log(t)+ (d+2)\log(\log(t))\Big) + 3d\Big(2\log(2) +\log(1+e)\Big)$.

\paragraph{Efficiency improvement.}
While \citet{perrault2020covariance} use an axis-realignment technique to derive their confidence regions, our approach builds ellipsoidal confidence regions. This simplifies the computation of an upper confidence bound for each action as we have a closed-form expression. In comparison, \citet{perrault2020covariance} need to solve linear programs in convex sets at each iteration.

\subsection{Regret upper bounds}

\begin{restatable}{theorem}{RegretOLSUCBC}
    \label{thm:RegretOLSUCBC}
    Let \(T\in\bN^*\) and $\delta>0$.
    
    Then, {\normalfont \OLSUCBC} (Alg.~\ref{alg:OLSUCBC}) satisfies the \emph{gap-dependent} regret upper bound
    \[ 
        \bE[R_T] 
        = \tilde{O}\bigg(\log(m)^2\sum_{i=1}^d\max_{a\in\cA / i\in a, \Delta_a>0} \frac{ \sigma_{a,i}^2}{\Delta_a} \bigg)\, ,
    \]
    where $\sigma_{a,i}^2 = \sum_{j \in a} \max\{\SSigma_{i,j},0\}$, and the \emph{gap-free} regret upper bound
    \[
        \bE[R_T] = \tilde{O}\bigg(\log(m)\sqrt{T}\sqrt{\textstyle\sum_{i=1}^d\max\limits_{a\in\cA / i\in a}  \sigma_{a,i}^2  } \bigg) \,.
    \]
\end{restatable}

The proof is outlined in \cref{sec:Regret} and the specific details are presented in \cref{app:OLSUCBC}.

\paragraph{Optimal gap-free bound.} This result shows that \OLSUCBC yields the same gap-dependent regret upper bound as \ESCBC \citep{perrault2020covariance} (up to poly-logarithmic factors) and more importantly yields a novel covariance-adaptive and optimal $\smash{\tilde O(\sqrt{T})}$ gap-free bound, 
as shown by the following lower-bound proven in \cref{app:LowerBound}. Unfortunately, only the positive coefficients of \(\SSigma\) are considered in our bound but the inclusion of negative correlations could be advantageous to reduce the rate at which the regret increases. However, it could complicate the analysis greatly and is thus deferred to future research.
\label{sec:LowerBound}

\begin{restatable}{theorem}{LowerBound}
    \label{thm:LowerBound}
    Let $d, m\in \bN^*$ such that $d/m\geq2$ is an integer, $T\in\bN^*$, and $\SSigma\succeq0$ a covariance matrix. Then, there exists a stochastic combinatorial semi-bandit with $d$ base items, and a reward distribution with covariance matrix $\SSigma$ on which for any policy $\pi$, the pseudo regret satisfies
    \begin{equation*}
        \bE[R_T] \geq \frac{1}{8}\bigg(T\sum_{i\in[d]}\max_{a\in\cA, i\in a}\sum_{j\in a}\SSigma_{i,j}\bigg)^{1/2}.
    \end{equation*}
\end{restatable}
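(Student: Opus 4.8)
The plan is to construct a worst-case instance that mirrors the one used for the gap-dependent lower bound (Theorem 1.1), namely $d/m$ pairwise-disjoint actions each of size $m$, partitioning $[d]$ into blocks $B_1,\dots,B_{d/m}$, with Gaussian reward vectors whose covariance is (a suitable block-diagonal truncation of) $\SSigma^*$. Because the blocks are disjoint and non-overlapping in the feedback graph, the problem decouples into $d/m$ independent ``meta-arms'', and pulling action $a_k$ only reveals information about block $B_k$. The effective scalar reward of meta-arm $k$ is $\langle a_k, Y_t\rangle$, which is Gaussian with variance $\sigma_k^2 := \sum_{i\in B_k}\sum_{j\in B_k}\SSigma^*_{i,j} = \mathbf{1}_{B_k}^\top \SSigma^* \mathbf{1}_{B_k}$. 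Since $B_k$ is exactly the support of $a_k$ and these actions achieve the maxima, one has $\sum_k \sigma_k^2 = \sum_{i\in[d]} \max_{a\ni i}\sum_{j\in a}\SSigma^*_{i,j}$ on this instance, which is the quantity inside the square root. So it suffices to prove a $\tfrac18\sqrt{T\sum_k \sigma_k^2}$ lower bound for a $(d/m)$-armed Gaussian bandit with per-arm variances $\sigma_k^2$.

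The core is then a standard minimax argument for heteroscedastic Gaussian bandits. First I would fix a gap parameter $\Delta>0$ (to be optimized) and, for each candidate optimal arm $k^\star$, define an instance $\nu_{k^\star}$ in which arm $k^\star$ has mean $\Delta$ and all others have mean $0$; a reference instance $\nu_0$ has all means $0$. On $\nu_{k^\star}$ the regret is at least $\Delta\,\bE_{k^\star}[T - N_{k^\star}(T)]$, where $N_k(T)$ is the number of pulls of arm $k$. Using a Pinsker / Bretagnolle--Huber comparison between $\nu_0$ and $\nu_{k^\star}$, the chain rule for KL divergence gives $\KL(\bP_{\nu_0}\Vert \bP_{\nu_{k^\star}}) = \bE_{\nu_0}[N_{k^\star}(T)]\cdot \frac{\Delta^2}{2\sigma_{k^\star}^2}$. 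Averaging over $k^\star$ and using $\sum_{k^\star}\bE_{\nu_0}[N_{k^\star}(T)] = T$, a convexity/averaging step (choosing the weighting of the $k^\star$ proportional to $\sigma_{k^\star}^2$, or equivalently applying Cauchy--Schwarz) forces the average information to be small unless $\Delta$ is small, and one obtains $\max_{k^\star}\bE_{\nu_{k^\star}}[R_T] \gtrsim \Delta T\big(1 - c\,\Delta\sqrt{T/\sum_k\sigma_k^2}\big)$ for an absolute constant $c$. Optimizing $\Delta \asymp \sqrt{\sum_k\sigma_k^2/T}$ yields the $\sqrt{T\sum_k\sigma_k^2}$ rate; tracking constants carefully through Bretagnolle--Huber (or the standard $\tfrac14\exp(-\KL)$ bound) and the averaging gives the stated $\tfrac18$.

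One subtlety to handle cleanly: $\SSigma^*$ restricted to a block need not be such that $\mathbf{1}_{B_k}^\top\SSigma^*\mathbf{1}_{B_k}>0$, and more importantly we must exhibit a genuine reward distribution on $[d]$ with \emph{exactly} covariance $\SSigma^*$, not just its block-diagonal part — so I would take $Y_t \sim \cN(\mu^{(k^\star)}, \SSigma^*)$ with the full $\SSigma^*$, noting that the off-block correlations are irrelevant because the player never observes two different blocks simultaneously, hence the per-arm observation channel still has variance $\sigma_{k^\star}^2$ and the KL computation is unaffected. (If some $\sigma_k^2=0$ that arm carries no regret contribution and can be dropped.) The main obstacle I anticipate is precisely this bookkeeping: making the reduction to independent heteroscedastic Gaussian bandits rigorous despite the shared covariance matrix, and then pushing the absolute constants through the divergence inequalities tightly enough to reach $\tfrac18$ rather than a weaker constant — the information-theoretic skeleton itself is routine.
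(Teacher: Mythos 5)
Your construction and reduction coincide exactly with the paper's: $d/m$ pairwise-disjoint actions partitioning $[d]$, Gaussian rewards with covariance $\SSigma^*$, the identity $\sum_k \sigma_k^2 = \sum_{i\in[d]}\max_{a\ni i}\sum_{j\in a}\SSigma^*_{i,j}$ for this partition, and a Pinsker-plus-divergence-decomposition argument over the family of one-coordinate perturbations $\nu_{k^\star}$ of a null instance $\nu_0$. The genuine gap sits in the step you describe as ``a convexity/averaging step \dots forces the average information to be small'': with a \emph{single} gap $\Delta$ shared by all $K=d/m$ instances, neither weighting you propose delivers $\max_{k^\star}\bE_{\nu_{k^\star}}[R_T]\gtrsim \Delta T\bigl(1-c\,\Delta\sqrt{T/\sum_k\sigma_k^2}\bigr)$ once the block variances are unequal. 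Uniform averaging produces, after Cauchy--Schwarz, the information term $\tfrac{T\Delta}{2}\sqrt{T\sum_k\sigma_k^{-2}}$, and optimizing $\Delta$ then gives a bound of order $K\sqrt{T/\sum_k\sigma_k^{-2}}$, which is $\le\sqrt{T\sum_k\sigma_k^2}$ with equality only in the homoscedastic case (and is vacuous if some $\sigma_k=0$). Weighting instance $k^\star$ proportionally to $\sigma_{k^\star}^2$ fixes the KL term but leaves the residue $\sum_k w_k\,\bE_{\nu_0}[T_k]$, which can be arbitrarily close to $T$ when most of the variance sits on an arm the algorithm already pulls most of the time under $\nu_0$, again killing the bound. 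For calibration: the paper's own write-up hits the same wall and resolves it by replacing the average block variance $\tfrac{m}{d}\sum_k\SSigma'_{k,k}$ by $\SSigma'_{\min}$ --- an inequality that goes the wrong way --- so as written it only establishes a rate of order $\sqrt{T\,(d/m)\,\min_k\sigma_k^2}$.

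To actually reach $\sqrt{T\sum_k\sigma_k^2}$ you need instance-dependent gaps rather than a common $\Delta$. Writing $t_k=\bE_{\nu_0}[T_k]$ and optimizing the per-instance bound $\Delta_k\bigl(T-t_k-\tfrac{T\Delta_k}{2\sigma_k}\sqrt{t_k}\bigr)$ over $\Delta_k$ gives $\bE_{\nu_k}[R_T]\gtrsim \sigma_k (T-t_k)^2/(T\sqrt{t_k})$; restricting to the arms with $t_k\le T/2$ and applying the mediant inequality $\max_k \sigma_k^2/t_k \ge \sum_k\sigma_k^2/\sum_k t_k$ then yields $\max_k\bE_{\nu_k}[R_T]\gtrsim\sqrt{T\sum_k\sigma_k^2}$ (the at most one arm with $t_k>T/2$ is handled by perturbing its mean \emph{downward}). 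So your instinct that the delicate part is ``pushing the constants through'' is almost right, but the issue is not the constant $1/8$: with a common gap and the averagings you sketch, the argument loses the sum of variances itself, not just a constant factor.
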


\paragraph{Improvement over \CUCB.} \label{par:DiagCov} Our gap-free and gap-dependent bounds outperform those of \CUCB \citep{kveton2015tight} no matter the covariance structure, as $(Tmd)^{1/2}\|B\|_{\infty} \gtrsim (\mathrm{Tr}(\SSigma)T)^{1/2}$.\footnote{We denote $\gtrsim$ for $\geq$ up to a constant factor.} Besides, in the particular case of a diagonal $\SSigma$, our gap-free upper bound gains a factor at least $\sqrt{m}$ over the one of $\CUCB$. In this scenario, $\sigma_{a,i}^2 = \SSigma_{i,i}$ for all 
$a\in\cA$ and $i \in a$. Our gap-dependent and gap-free upper bounds are then roughly bounded as
\[
    \sum_{i=1}^d \frac{\SSigma_{i,i}}{\min\limits_{a\in\cA/i\in a} \Delta_a}
\ \text{and} \ 
    \sqrt{\mathrm{Tr}(\SSigma)T}\,, 
\]
respectively.

\paragraph{Improvement over \UCBV.} Assuming that $\SSigma_{i,j} \geq 0$ for all $i,j$, our upper-bound uniformly improves the one of \UCBV of order \(\smash{\big(T\sum_{a}a^\top\SSigma a\big)^{1/2}}\), since in this case
$\smash{\sum_{i=1}^d \max_{a \in \cA\backslash i\in \cA} \sigma^2_{a,i} \leq \sum_{a}\Vert a\Vert_{\mathbf{\SSigma}}}$. Existing semi-bandit analyses could only leverage semi-bandit feedback in the regime $P\gg d$, which is natural in combinatorial bandits but not systematic in real-world applications.

\section{New sampling algorithm for combinatorial semi-bandits: \COSV}\label{sec:COSV}

\begin{wrapfigure}{r}{0.48\linewidth}
    \vspace{-0.7cm}
    \centering
    \begin{minipage}{\linewidth}
    \begin{algorithm}[H]
    \caption{\COSV}
    \label{alg:COSV}
    \begin{algorithmic}
    \STATE \textbf{Input} $\delta > 0$, $B \in \mathbb{R}_+^d$.
    \FOR{\(t\in[T]\)}
        \IF{$\big\{a \in \cA \text{ s.t } \min_{(i,j)\in a} n_{t,(i,j)} < 1 \big\} \neq \emptyset$}
        \STATE Choose $A_t$ in the above set.
        \ELSE
        \STATE Compute $\hat \mu_{t-1}$ \eqref{eq:EmpiricalMean}.
        \STATE Compute $\smash(\hat{\SSigma}_{t-1,(i,i)})_{i\in[d]}$ \eqref{eq:SSigmaHat}.
        \STATE Compute $\smash(\hat \Rdesignmat_{t-1,(i,i)})_{i\in[d]}$ from \eqref{eq:RDesignMat}.
        \STATE Sample $\tilde\mu_{t-1}$ from \eqref{eq:SampleMu}
        \STATE Choose \(A_t \in\arg\max_{a\in\cA}\langle a, \Tilde{\mu}_{t-1}\rangle\).
        \STATE Environment samples \(Y_t\in\bR^d\).
        \STATE Receive reward \(\langle A_t, Y_t\rangle = \sum_{i}A_{t,i}Y_{t,i}\).
        \ENDIF
    \ENDFOR
\end{algorithmic}
\end{algorithm}
    \end{minipage}
    \vspace{-1.2cm}
\end{wrapfigure}

In this section, we introduce a randomized algorithm inspired from \TS, enabling to get potentially computational complexity at the cost of not leveraging off-diagonal covariances.

The difficulty in designing and analysing \TS algorithms generally stems from controlling the random exploration. To that end, we parametrize the exploration distribution using the same estimators as \OLSUCBC.

\subsection{Algorithm: \COSV}

We propose a sampling strategy using ``frequentist'' estimators, \COSV, described in \cref{alg:COSV}. 

The algorithm begins with the same exploration phase as \OLSUCBC. Thereafter at each round $t+1\in[T-1]$, we sample parameters $(\tilde\mu_{i,t})_{i\in[d]}$ using $1$-dimensional normal distributions biased toward the positive orthant. Formally, for all $i\in[d]$, we sample
\begin{equation}
    \label{eq:SampleMu}
    \textstyle \tilde\mu_{t,i} \sim \cN \bigg(\hat{\mu}_{t,i}+(1+g_{t,\delta})f_{t,\delta}\frac{\hat{\Rdesignmat}_{t,(i,i)}^{1/2}}{n_{t,(i,i)}},\,\,
    f_{t,\delta}^2\frac{\hat{\Rdesignmat}_{t,(i,i)}}{n_{t,(i,i)}^2}\bigg),
\end{equation}
where $g_{t,\delta}=\Big(1+2\log\big(2dt\log(t)^2/\delta\big)\Big)^{1/2}$ and $f_{t, \delta}$ is the same as for \OLSUCBC.

\subsection{Regret upper bound}

\begin{restatable}{theorem}{RegretCOSV}
    \label{thm:RegretCOSV}
    Let \(T\in\bN^*\), and $\delta>0$. 
    
    Then, {\normalfont \COSV} (Alg.~\ref{alg:COSV}) satisfies the \emph{gap-dependent} regret upper bound
    \begin{equation}
    \label{eq:COSVGapDep}
        \bE[R_T] 
        = \tilde{O}\bigg(\log(m)^2\sum_{i=1}^d\frac{m\SSigma_{i,i}}{\Delta_{i,\min}} \bigg)\,,
    \end{equation}
    where $\Delta_{i,\min}=\min\{\Delta_a,\ a\in\cA \text{ such that } i\in a \}$, and the \emph{gap-free} regret upper bound
    \begin{equation}
        \bE[R_T] = \tilde{O}\bigg(\log(m)\sqrt{T}\sqrt{\textstyle m \sum_{i=1}^d\SSigma_{i,i}} \bigg) \,.
    \end{equation}
\end{restatable}

The proof is outlined in \cref{sec:Regret} and the specific details can be found in \cref{app:COSV}.

\paragraph{Novel variance-dependent bound.}
\cref{thm:RegretCOSV} presents the first variance-dependent bound for a sampling-based semi-bandit algorithm. Unfortunately, integrating the covariances $\SSigma_{i,j}$ in the leading term is still an open problem. Possible leads include exploring other biasing strategies for sampling, or using oversampling approaches like \citet{abeille2017linear} which inflate the confidence regions in the linear bandits setting. 

\paragraph{Novel gap-free regret bound.} An important novelty of our gap-dependent bound \cref{eq:COSCGapDep} is the absence of $\Delta_{\min}^{-m}$ terms present in the previous analyses of \CTS \citep{wang2018thompson, perrault2020statistical}. In particular, this improvement yields the first $\tilde O(\sqrt{T})$ gap-free regret upper bound for a sampling strategy.

\section{Mean and covariance estimation}
\label{sec:Estimators}

In this section, we present concentration results for $\hat{\mu}_t$ (rewards means) and $\hat{\SSigma}_t$ (rewards covariances, estimated with $\hat{\chi}_t$) used in \OLSUCBC and \COSV, which are central to prove \cref{thm:RegretOLSUCBC} and \cref{thm:RegretCOSV} (sketched in \cref{sec:Regret}).

\subsection{Covariance-aware confidence region for the average reward}
\label{sec:ConfidenceMean}

\paragraph{Average reward estimation.} Let \(a\in\cA\), \(t\geq d(d+1)/2\), as introduced in \cref{ssec:OLSUCBC}, the least square estimator for the mean reward vector \(\mu\) using all the data gathered after round $t$ is
\begin{equation*}
    \textstyle \hat{\mu}_t 
    = \diagcounts_t^{-1}\sum_{s=1}^t \diagmat_{A_s} Y_s
    = \mu + \diagcounts_t^{-1}\sum_{s=1}^t \diagmat_{A_s} \eta_s\,,
\end{equation*}
where the $\eta_s$ denote the deviations $Y_s-\mu$.

\paragraph{Confidence region design.} We design confidence regions inspired from \LinUCB literature \citep{rusmevichientong2010linearly,filippi2010parametric, abbasi2011improved} and the work of \citet{degenne2016combinatorial}. Major differences with those works include using Bernstein's style concentration inequalities involving the covariance matrix $\SSigma$, assuming a multidimensional noise term, and combining them with a covering argument to relax dependence in $d$  \citep[peeling trick from][]{degenne2016combinatorial}.
We introduce the \textit{regularized design matrix} defined by
\begin{equation*}
    \textstyle \Rdesignmat_t = \designmat_t + \diagcounts_t\diagmat_{\SSigma}+\|B\|^2\diagI\,,
\end{equation*}
where $\designmat_t = \sum_{s=1}^t\diagmat_{A_s}\SSigma\diagmat_{A_s}$ is the design matrix (of which the \OLSUCBC  and \COSV use an empirical version). 
Let $S_t = \diagcounts_t (\hat{\mu}_t  - \mu)$, the deviations of $\langle a, \hat\mu_t\rangle$ are bounded as
\begin{equation}
\label{eq:deviations}
    \textstyle|\langle a, \hat{\mu}_t-\mu\rangle|
    \leq \|\diagcounts_t^{-1}a\|_{\Rdesignmat_t} \ \|S_t\|_{\Rdesignmat_t^{-1}}\,.
\end{equation}
Designing a confidence region for $\|S_t\|_{\Rdesignmat_t^{-1}}$ therefore allows to control the deviations $\textstyle|\langle a, \hat{\mu}_t-\mu\rangle|$ uniformly on $\cA$. Let $\delta>0$, we define the event
    \begin{equation} \label{eq:Gt}
    \cG_t = \big\{\ \|S_t\|_{\Rdesignmat_t^{-1}} \leq f_{t,\delta} \big\}\,,
    \end{equation}
with $\smash{f_{t,\delta} = 6\log(1/\delta) + 6[\log(t)+ (d+2)\log(\log(t))] + 3d[2\log(2) +\log(1+e)]}$.

This event can also be written $\cG_t = \big\{\ \|\hat{\mu}_t - \mu\|_{\diagcounts_t \Rdesignmat_t^{-1} \diagcounts_t} \leq f_{t,\delta} \big\}$ and is therefore equivalent to $\hat{\mu}_t$ belonging to an ellipsoid around the true reward mean vector $\mu$. 

\paragraph{Confidence region probability.} The following result proven in \cref{app:EventGt} presents an upper bound for $\bP(\cG_t^c)$.

\begin{restatable}{proposition}{EventGt}
    \label{prop:EventGt}
    Let $t\geq d(d+1)/2$ and $\delta>0$. Then, 
$\bP(\cG_t^c) 
        \leq \delta/(t\log(t)^2)\,.$
\end{restatable}

Proving this result relies on an argument adapted from \citet{faury2020improved} and a covering trick from \citet{degenne2016combinatorial}. 

\subsection{Confidence interval for covariances estimator}
\label{sec:ConfidenceCov}

\paragraph{Rewards covariances estimator.} Let $t\geq d(d+1)/2$ and a ``reachable'' couple $(i,j)\in [d]^2$ . 
The coefficients of $\SSigma$ can be estimated online by \(\hat{\chi}_t\) as introduced in \cref{ssec:OLSUCBC}
\begin{equation*}
    \hat{\chi}_{t,(i,j)} = \textstyle \hat{\Sq}_{t,(i,j)}-\hat{\mu}_{t,i}\hat\mu_{t,j}\,.
\end{equation*}

\paragraph{Rewards covariances upper confidence bound.} Let $\delta>0$. We use the following coefficient-wise upper estimates of $\SSigma$ in our algorithms 
\begin{align*}
    \hat{\SSigma}_{t,(i,j)} &= \hat{\chi}_{t,(i,j)} + \frac{B_iB_j}{4}\bigg(\frac{5h_{t,\delta}}{\sqrt{n_{t,(i,j)}}}+\frac{h_{t,\delta}^2}{n_{t,(i,j)}}+\frac{1}{n_{t,(i,j)}^2}\bigg)\,,
\end{align*}
with $\smash h_{t,\delta} = \Big(1+2\log(1/\delta)+2\log\big(t\log(t)^2d(d+1)\big)+\log(1+t)\Big)^{1/2}$.

\paragraph{Favorable event design.}
We define $\cC_t$ as the event where all the coefficients of $\hat{\SSigma}_{t}$ are indeed upper bounding those of $\SSigma$:
\begin{equation}\label{eq:covarianceUCB-favorable-event}
    \cC_t = \big\{\forall (i,j)\in[d]^2 \text{ ``reachable''}, \ \hat{\SSigma}_{t,(i,j)} \geq  \SSigma_{i,j} \big\}\,.
\end{equation}

\paragraph{Favorable event probability.}
The following result proven in Appendix~\ref{app:Covariance} presents an upper bound for $\bP(\cC^c_t)$.

\begin{restatable}{proposition}{FavorableEventCovariance}
\label{prop:ChiDeviations2}
Let $t\geq d(d+1)/2$ and $\delta>0$. Then, 
$\bP(\cC_t^c) 
        \leq \delta/(t\log(t)^2)\,.$
\end{restatable}

\section{Regret upper bounds}
\label{sec:Regret}

In this section, we provide a sketch of the proof for \cref{thm:RegretOLSUCBC} and \cref{thm:RegretCOSV}. For both \OLSUCBC and \COSV, the idea to bound the regret is to find a sequence of \textit{favorable events} $(\cE_t)_{t\geq d(d+1)/2}$ that are true with high probability, and under which the regret grows logarithmically with time.

\subsection{Template bound} 

Let $(\cE_t)_{t\in[T]}$ be a sequence of events, then for both \OLSUCBC and \COSV standard derivations yield
\begin{equation}
    \label{eq:Template}
    \textstyle\bE[R_T] \leq \Delta_{\max}\Big(d(d+1)/2+\sum_{t=d(d+1)/2}^{T-1}\bP(\cE_t^c)\Big) + \bE\Big[\sum_{t=d(d+1)/2}^{T-1} \indicator\{\cE_{t}\}\Delta_{A_{t+1}}\Big]\,.
\end{equation}

Assuming that the sequence of events $(\cE_t)_{t\geq d(d+1)/2}$ happens with high enough probability, it is sufficient to control what happens conditionally to it. In particular, \cref{prop:HPRegret} in \cref{app:HPEvent} states that if we can bound $\Delta_{A_{t+1}}^2$ with a linear combination of terms evolving as $n_{t,(i,j)}^{-k}$ for every couple $(i,j)\in A_{t+1}$ and different $k\geq 1$, then we can infer a worst-case behaviour, which yields \cref{thm:RegretOLSUCBC} and \cref{thm:RegretCOSV}.

In the following, we will refer to the term $\smash{\sum_{t=d(d+1)/2}^{T-1}\bP(\cE_t^c)}$ as the \textit{unfavorable event probability} and to the term $\smash{\bE\big[\sum_{t=d(d+1)/2}^T \indicator\{\cE_{t}\}\Delta_{A_{t+1}}\big]}$ as the \textit{high-probability regret}.

\subsection{Regret of \OLSUCBC}
\label{sec:TechnicalOLSUCBC}
 
For \OLSUCBC we consider the sequence of events $\cE_t = \{\cG_t \cap \cC_t\}$ for all $t\geq d(d+1)/2$, corresponding the confidence regions of $\smash{(\hat{\mu}_t)_{t\geq d(d+1)/2}}$ and of $\smash{(\hat{\SSigma}_{t,(i,j)})_{t\geq d(d+1)/2,\ (i,j)\in[d]^2}}$ defined in \cref{sec:Estimators}. Under these events, we can upper-bound the high-probability regret from \cref{eq:Template} with the following proposition (proven in \cref{app:OLSUCBCHPRegret}).

\begin{restatable}{proposition}{OLSUCBCHPRegret}
\label{prop:OLSUCBCHPRegret}
Let \(\delta>0\). Then, {\normalfont \OLSUCBC} yields
\begin{equation*}
\begin{aligned}
&\bE\bigg[\sum_{t=d(d+1)/2}^{T-1}\Delta_{A_{t+1}}\indicator\big\{\cG_{t}\cap \cC_t\big\}\bigg] = O \Bigg(  \log(T)^{2}\log(m)^2\sum_{i=1}^d \max_{a\in\cA/i\in a}\frac{\sigma_{a,i}^2}{\Delta_{a}} \Bigg)\,,
\end{aligned}
\end{equation*}
as $T\rightarrow\infty$, where $\sigma_{a,i}^2 = \sum_{j \in a} (\SSigma_{i,j})_+$.
\end{restatable}

\paragraph{Conclusion of the proof.} 

Injecting results from Proposition~\ref{prop:OLSUCBCHPRegret} (high-probability regret) as well as Proposition~\ref{prop:EventGt} and Proposition~\ref{prop:ChiDeviations2} (unfavorable event probability) into the template bound \eqref{eq:Template}, we get
\begin{equation}\label{eq:olsucbc-gapdependent-bound}
    \bE[R_T] = O \Bigg( \log(T)^2\log (m)^2\sum_{i\in[d]}\max_{a\in\cA/i\in a}\frac{\sigma_{a,i}^2}{\Delta_{a}}\Bigg)\,,
\end{equation}
for \OLSUCBC as $T\rightarrow \infty$. This provides the gap-dependent bound of Theorem~\ref{thm:RegretOLSUCBC}. The gap-free bound is detailed in Appendix~\ref{app:OLSUCBCGapFree}. It is enabled by the fact that our gap-dependent bound does not incur any term in $\Delta_{\mathrm{min}}^{-2}$, unlike \citet{perrault2020covariance,degenne2016combinatorial}. 

\subsection{Regret of \COSV}
\label{sec:TechnicalCOSV}

For the analysis of our stochastic algorithm \COSV, we need to consider events related to the sampling distributions in addition to the events $\cG'_t$ and $\cC_t$ introduced in the precedent section.
For this purpose, we denote the event $\cH_t$ defined as
\begin{equation}
    \textstyle \cH_t= \Bigg\{\forall i\in [d],\ \bigg|\bigg(\hat\mu_{t,i}+(1+g_{t,\delta})f_{t,\delta}\frac{(\hat{\Rdesignmat}_{t,i})^{1/2}}{n_{t,i}}\bigg)-\tilde\mu_{t,i}\bigg|\leq g_{t,\delta}f_t\frac{(\hat{\Rdesignmat}_{t,i})^{1/2}}{n_{t,i}} \Bigg\}\,.
\end{equation}

The high-level idea of the event $\cH_t$ is to ensure that the sampled rewards $\tilde \mu_{t,i}$ upper-bound the true mean $\mu_i$ while not being too far for all the items $i\in a^*$. 
Showing that the event $\cH_t$ indeed occurs with high-probability (\cref{lem:ProbaH} in \cref{app:COSV}) and setting the events $\cE_t = \{\cG_t\cap\cC_t\cap\cH_t\}$, we can upper-bound the high-probability regret in the following proposition (proof is in Appendix~\ref{app:COSVHPRegret}).

\begin{restatable}{proposition}{COSVHPRegret}
    \label{prop:COSVHPRegret}
   Let $\delta>0$. Then {\normalfont \COSV} yields
\begin{align*}
    \bE\bigg[\sum_{t=d(d+1)}^{T-1}\Delta_{A_{t+1}}\indicator\big\{\cG_{t}\cap \cC_t\cap\cH_t\big\}\bigg] = O \Bigg(  \log(T)^{3}\log (m)^2\Big(\sum_{i=1}^d \frac{m\SSigma_{i,i}}{\Delta_{i, \min}} \Big)\Bigg)\,.
\end{align*}
\end{restatable}

\paragraph{Conclusion of the proof.} 

Injecting results from Proposition~\ref{prop:COSVHPRegret} (high-probability regret) as well as \cref{lem:ProbaH}, Proposition~\ref{prop:EventGt} and Proposition~\ref{prop:ChiDeviations2} (unfavorable event probability) into the template bound \eqref{eq:Template} yields
\begin{equation}\label{eq:COSCGapDep}
    \bE[R_T] = O \Bigg( \log(T)^3 \log (m)^2\sum_{i\in[d]}\frac{m\SSigma_{i,i}}{\Delta_{i, \min}} \Bigg)\,,
\end{equation}
as $T\rightarrow \infty$. This provides the gap-dependent bound of Theorem~\ref{thm:RegretCOSV}. As it does not incur any term in $\Delta_{\mathrm{min}}^{-m}$ as in \citet{wang2018thompson, perrault2020statistical}, this result can be used to derive a $\smash{\tilde O(\sqrt{T})}$ gap-free bound for a sampling-based combinatorial semi-bandit algorithm.

\section{Concluding remarks}

We propose and analyze two algorithms for combinatorial semi-bandits. \OLSUCBC is a deterministic, covariance-adaptive algorithm. Compared to other existing approaches, our algorithm is typically less computationally demanding and yields the first $\tilde O(\sqrt{T})$ gap-free regret rate that explicitly depends on the covariance of the base item rewards and the structure. \COSV is a variance-adaptive, \TS-like algorithm. Its complexity is significantly lower under certain types of constraints, but its regret is suboptimal as it assumes worst-case correlations. However, leveraging the analysis of \OLSUCBC, it also yields the first $\tilde O(\sqrt{T})$ gap-free regret upper bound among sampling-based approaches.

\bibliographystyle{apalike}
\bibliography{references}


\newpage
\appendix

\section*{Appendix}

The Supplementary is organized as follows:
\begin{itemize}[label=$-$, nosep]
    \item \cref{app:LowerBound} proves the lower bound from \cref{thm:LowerBound},
    \item \cref{app:EventGt} outlines proofs concerning the concentration of the average estimator (Propositions~\ref{prop:EventGt})
    \item \cref{app:Covariance} presents those for the covariance estimator (Proposition~\ref{prop:ChiDeviations2}),
    \item \cref{app:HPEvent} establishes general propositions used to upper-bound the number of times each item is chosen,
    \item \cref{app:OLSUCBC} and~\cref{app:COSV} detail proofs for \OLSUCBC and \COSV,
    \item \cref{app:Expe} presents some experimental results.
\end{itemize}

\section{Proof of the lower bound (\texorpdfstring{\cref{thm:LowerBound})}{Theorem 2}}
\label{app:LowerBound}
\LowerBound*
\begin{proof}
    We follow the methodology of \citet{auer2002nonstochastic}, modifying it to account for the different variances among actions.
    
    Let $d, m\in\bN^*$ such that $d/m\geq 2$ is an integer, $T\in\bN^*$, and a covariance matrix $\SSigma\succeq 0$. We consider the structure where $\cA = \{a_1, \dots, a_{d/m}\}\subset\{0, 1\}^d$ contains $d/m$ disjoint actions each having $m$ base elements. We consider that for all $p \in [d/m]$, $\smash{(a_p)_{i\in[d]} = \big(\indicator\big\{(p-1)m < i \leq pm\big\}\big)_{i\in[d]}}$. Let $\pi$ be a policy. As all the actions are disjoints, we can reduce ourselves to a multi-armed bandit with $d/m$ actions, where for all $p\in[d/m]$ the variance of the $p$-th action is $\langle a_p,\SSigma_p\rangle$.

    Let $\SSigma'\in M_{d/m}(\bR)$ be the diagonal matrix where for all $p\in[d/m]$, $\SSigma'_{p,p} = a_p^\top\SSigma a_p$. Let $c>0$, and
    \begin{equation}
    \label{eq:Delta_LowerBound}
        \Delta = 2c\sqrt{\SSigma'_{\text{min}}\frac{\sum_{k=1}^{d/m}\SSigma'_{k,k}}{T}}\,,
    \end{equation}
    where $\SSigma'_{\text{min}} = \min_{p\in[d/m]}\SSigma'_{p,p}$.
    
    We denote $G_0\sim\cN(0, \SSigma')$ a $(d/m)$-dimensional centered Gaussian distribution with covariance matrix $\SSigma'$.
    Let $p\in[d/m]$, we consider the mean vector $\mu^{(p)}\in\bR^{d/m}$ having coordinate 0 everywhere and $\Delta$ at coordinate $p$, for all $i\in[d/m]$, $\mu^{(p)}_i = \Delta\indicator\{i=p\}$. We introduce the Gaussian reward distributions $G_p \sim \cN(\mu^{(p)}, \SSigma')$ and denote $T_p = \sum_{t=1}^T\indicator\{A_t = p\}$. Then, using policy $\pi$, and considering the reward distributions $G_p$ and $G_0$, the average number of times action $p$ has been chosen satisfies
    \begin{align}
        \label{eq:TV_LowerBound}
        \Big|\bE_{\pi, G_p}[T_p] - \bE_{\pi, G_0}[T_p] \Big|
        \leq T\ \TV\Big((\pi, G_0), (\pi, G_p)\Big) 
        \leq T\sqrt{\frac{1}{2}\KL\Big((\pi, G_0), (\pi, G_p)\Big)}\,,
    \end{align}
    where $\TV$ denotes the total variation distance, $\KL$ denotes the Kullback--Leibler divergence and the last inequality uses Pinsker's inequality. Then, using the divergence decomposition between multi-armed bandits \citep[Lemma 15.1 in][]{lattimore2020bandit},
    \begin{align*}
        \KL\big((\pi, G_0), (\pi, G_p)\big) 
        &= \sum_{k=1}^{d/m} \bE_{\pi, G_0}\big[T_k\big] \ \KL\big(\cN(0, \SSigma'), \cN(\mu^{(p)}, \SSigma')\big)\\
        & = \sum_{k=1}^{d/m} \bE_{\pi, G_0}\big[T_k\big] \ \frac{\big(\mu^{(p)}_{k}\big)^2}{2\SSigma'_{k,k}}\,.
    \end{align*}
    Reinjecting this expression into \cref{eq:TV_LowerBound}, we get
    {\allowdisplaybreaks
    \begin{align*}
        \bE_{\pi, G_p}[T_p] 
        &\leq \bE_{\pi, G_0}[T_p] + \frac{T}{2}\sqrt{\sum_{k=1}^{d/m}\frac{\big(\mu^{(p)}_k\big)^2}{\SSigma'_{k,k}}\bE_{\pi, G_0}[T_k]}\\
        &= \bE_{\pi, G_0}[T_p] + \frac{T}{2}\sqrt{\frac{1}{\SSigma_{p,p}'}\Delta^2\bE_{\pi, G_0}[T_p]} \\
        &= \bE_{\pi, G_0}[T_p] + c\sqrt{T\bE_{\pi, G_0}[T_p]\frac{\SSigma_{\text{min}}'}{\SSigma_{p,p}'}\sum_{k=1}^{d/m}\SSigma'_{k,k}} \qquad \leftarrow \text{reinjecting \cref{eq:Delta_LowerBound}}\\
        &\leq \bE_{\pi, G_0}[T_p] + c\sqrt{T\bE_{\pi, G_0}[T_p]\sum_{k=1}^{d/m}\SSigma'_{k,k}}\,.
    \end{align*}}
    Now, summing over the actions $p$,
    \begin{align}
    \label{eq:SumDistrib_LowerBound}
        \sum_{p=1}^{d/m} \bE_{\pi, G_p}[T_p] 
        &\leq \sum_{p=1}^{d/m}\bE_{\pi, G_0}[T_p] + c\sqrt{T\sum_{k=1}^{d/m}\SSigma'_{k,k}}\sum_{p=1}^{d/m}\sqrt{\bE_{\pi, G_0}[T_p]} \nonumber\\
        &\leq T + c\sqrt{T\sum_{k=1}^{d/m}\SSigma'_{k,k}}\sqrt{\frac{d}{m}}\sqrt{\sum_{p=1}^{d/m}\bE_{\pi, G_0}[T_p]} \qquad \leftarrow \text{Cauchy--Schwarz} \nonumber\\
        &\leq T + cT\sqrt{\frac{d}{m}\sum_{k=1}^{d/m}\SSigma'_{k,k}}\,.
    \end{align}

    We denote $R^{(p)}_T$ the average cumulative regret incurred with the reward distribution $G_p$, then
    \begin{align*}
        \sum_{p=1}^{d/m} R_T^{(p)} 
        &= \Delta\sum_{p=1}^{d/m}(T-\bE_{\pi, G_p}[T_p])\\
        &= 2c\sqrt{\SSigma'_{\text{min}}\frac{\sum_{k=1}^{d/m}\SSigma'_{k,k}}{T}}\bigg(\frac{d}{m}T - \sum_{p=1}^{d/m}\bE_{\pi, G_p}[T_p] \bigg) \qquad \leftarrow \text{reinjecting \cref{eq:Delta_LowerBound}}\\
        &\geq 2c\sqrt{\SSigma'_{\text{min}}\frac{\sum_{k=1}^{d/m}\SSigma'_{k,k}}{T}}\bigg(\frac{d}{m}T - T - cT\sqrt{\frac{d}{m}\sum_{k=1}^{d/m}\SSigma'_{k,k}}\bigg) \qquad \leftarrow \text{from \cref{eq:SumDistrib_LowerBound}}\\
        &= 2c\sqrt{\SSigma'_{\text{min}}}\ \frac{d}{m}\sqrt{T\sum_{k=1}^{d/m}\SSigma'_{k,k}}\ \bigg(1-\frac{m}{d}-c\sqrt{\frac{1}{d/m}\sum_{k=1}^{d/m}\SSigma'_{k,k}}\bigg)\\
        &\geq 2c\sqrt{\SSigma'_{\text{min}}} \frac{d}{m}\sqrt{T\sum_{k=1}^{d/m}\SSigma'_{k,k}}\ \bigg(1-\frac{m}{d}-c\sqrt{\SSigma'_{\text{min}}}\bigg)\,.
    \end{align*}
    Taking $c=\frac{1}{2}\frac{1}{\sqrt{\SSigma_{\text{min}}'}}(1-\frac{m}{d})$, 
    \begin{align*}
        \sum_{p=1}^{d/m} R_T^{(p)} 
        &\geq \frac{d}{m}\sqrt{T\sum_{k=1}^{d/m}\SSigma'_{k,k}}\ \frac{1}{2}\bigg(1-\frac{m}{d}\bigg)^2 \qquad\\
        &\geq \frac{1}{8}\frac{d}{m}\sqrt{T\sum_{k=1}^{d/m}\SSigma'_{k,k}}  \leftarrow \text{as } m/d\leq 1/2\,.
    \end{align*}

    Therefore, there exists at least one instance $p^*\in[d/m]$ such that 
    \begin{equation*}
        R_T^{(p^*)}\geq \frac{1}{8}\sqrt{T\sum_{k=1}^{d/m}\SSigma'_{k,k}}\,.
    \end{equation*}

    Now, decomposing
    \begin{equation*}
        \sum_{k=1}^{d/m}\SSigma'_{k,k} = \sum_{k=1}^{d/m}\Big(\sum_{i\in a_k}\sum_{j\in a_k}\SSigma_{i,j}\Big) = \sum_{i\in[d]} \max_{a\in\cA, i\in a}\sum_{j\in a}\SSigma_{i,j}\,,
    \end{equation*}
    we get 
    \begin{equation*}
        R_T^{(p^*)}\geq \frac{1}{8}\sqrt{T\sum_{i\in[d]} \max_{a\in\cA, i\in a}\sum_{j\in a}\SSigma_{i,j}}\,.
    \end{equation*}
\end{proof}

\section{Concentration of the average rewards estimations (Proposition~\ref{prop:EventGt})}
\label{app:EventGt}
\EventGt*

\begin{proof}

Let $t\geq d(d+1)/2$ and $\delta>0$. 

We have 
\begin{align*}
    f_{t,\delta} 
    &= 6\log(1/\delta) + 6\Big(\log(t)+ (d+2)\log(\log(t))\Big) + 3d\Big(2\log(2) +\log(1+e)\Big)\\
    & = 6\log\Bigg(\frac{t\log(t)^2}{\delta}\bigg(\frac{\log(t)}{\log(1+(e-1))}\bigg)^d + \bigg(6d\log(2)+3d\log(2+(e-1))\bigg)\Bigg)\,.
\end{align*}

\paragraph{Covering argument (\emph{Peeling trick}).}
The peeling trick consists in separating the space of trajectories up to round \(t\) into an exponentially large number of parts, each having an exponentially small probability. 

Formally, let \(0<\epsilon<1\). For each \(p\in\bN^{d}\) we associate the set 
\begin{align}
\cD_p = \big\{x \in &\bR^d \text{ s.t. } \forall i \in [d],\ (1+\epsilon)^{p_i}\leq x_i < (1+\epsilon)^{p_i+1}\big\}\,.
\end{align}
As an abuse of notation, we denote by \((t\in\cD_p)\) the event \(\Big(\big(n_{t,(i,i)}+1\big)_{i\in[d]}\in\cD_p\Big)\). 

Setting $P_{t,\epsilon} = \big\lfloor\frac{\log(t)}{\log(1+\epsilon)}\big\rfloor$, we define for each $p\in [P_{t,\epsilon}]^d$

\begin{align}
    \tilde\diagcounts_p &= \diag\Big(\big((1+\epsilon)^{p_i}\big)_{i\in[d]}\Big)\in M_d(\bR)\,, \notag\\
    \Rdesignmat_{t,p} &= \designmat_t + \tilde\diagcounts_p\diagmat_{\SSigma}  + \|B\|^2\diagI\,.
\end{align}
In particular, under the event $(t\in\cD_p)$, $\diagcounts_t \preceq (1+\epsilon) \tilde\diagcounts_p$.

Using this covering, we decompose
\begin{align*}
    \bP(\cG_t^c) 
    &= \bP\Bigg(\bigg\|\sum_{s=1}^t \diagmat_{A_s} \eta_s\bigg\|_{\Rdesignmat_t^{-1}} > f_{t,\delta}\Bigg)\\
    &= \sum_{p\in [P_{t,\epsilon}]^d} \bP\Bigg(\Big(\|S_t\|_{\Rdesignmat_t^{-1}} > f_{t,\delta}\Big) \cap (t\in\cD_p)\Bigg)\\
    &\leq \sum_{p\in [P_{t,\epsilon}]^d} \bP\Bigg(\Big(\|S_t\|_{\Rdesignmat_{t,p}^{-1}} > f_{t,\delta}\Big) \cap (t\in\cD_p)\Bigg)\,.
\end{align*}

We now apply the following Lemmas.

\begin{restatable}{lemma}{MeanDeviations}
    \label{lem:MeanDeviations}
    Let $t\geq d(d+1)/2$, $0<\epsilon<1$, $p\in[P_{t,\epsilon}]^d$, and $\delta>0$. Then,
    \begin{equation}
        \bP\Bigg(\|S_t\|_{\Rdesignmat_{t,p}^{-1}}>6\log\Big(\frac{\Norm_p}{\Norm_{t,p}}\Big)+6\log(1/\delta)\Bigg)\leq \delta\,,
    \end{equation}
    where
    \begin{align*}
    \Norm_p &= \int_{\lambda\in\bR^d} \indicator\Big\{\|\Rdesignmat_{0, p}^{1/2}\lambda\|\leq \frac{1}{2}\Big\} \exp\{-\|\lambda\|_{\Rdesignmat_{0, p}}^2\}d\lambda\,,\\
    \Norm_{t,p} &= \int_{\lambda\in\bR^d} \indicator\Big\{\|\Rdesignmat_{t, p}^{1/2}\lambda\|\leq \frac{1}{2}\Big\} \exp\{-\|\lambda\|_{\Rdesignmat_{t, p}}^2\}d\lambda\,.
    \end{align*}
\end{restatable}

\begin{restatable}{lemma}{Dimension}
\label{lem:Dimension}
Let $t\geq d(d+1)/2$, $0<\epsilon<1$ and $p\in[P_{t,\epsilon}]^d$. Then,
    \begin{equation}
        \log\Big(\frac{\Norm_p}{\Norm_{p,t}}\Big) \leq d\log(2) + \frac{1}{2}\log\Bigg(\frac{\det(\Rdesignmat_{t,p})}{\det(\Rdesignmat_{0, p})}\Bigg)\,.
    \end{equation}
Moreover, under event $(t\in\cD_p)$,
\begin{equation}
    \log\Big(\frac{\Norm_p}{\Norm_{p,t}}\Big) \leq d\log(2) + \frac{1}{2}d\log(2+\epsilon)\,.
\end{equation}
\end{restatable}

In our case, setting $\epsilon=e-1$, they yield
\begin{align*}
    \bP(\cG_t^c) 
    &\leq \sum_{p\in [P_{t,\epsilon}]^d} \bP\Bigg(\bigg\{\|S_t\|_{\Rdesignmat_{t,p}^{-1}} > 6\log\bigg(\frac{t\log(t)^2}{\delta}\log(t)^d\bigg) + \bigg(6d\log(2)+3d\log(1+e)\bigg)\bigg\} \\
    &\hspace{3cm}\cap (t\in\cD_p)\Bigg)\\
    &\leq \sum_{p\in [P_{t,\epsilon}]^d} \bP\Bigg(\bigg\{\|S_t\|_{\Rdesignmat_{t,p}^{-1}} > 6\log\bigg(\frac{t\log(t)^2}{\delta}\log(t)^d\bigg) + 6\log\bigg(\frac{\Norm_p}{\Norm_{t,p}}\bigg)\bigg\}\\
    &\hspace{3cm}\cap (t\in\cD_p)\Bigg) \leftarrow \text{ \cref{lem:Dimension}}\\
    &\leq \sum_{p\in [P_{t,\epsilon}]^d} \bP\Bigg(\|S_t\|_{\Rdesignmat_{t,p}^{-1}} > 6\log\bigg(\frac{t\log(t)^2}{\delta}\log(t)^d\bigg) + 6\log\bigg(\frac{\Norm_p}{\Norm_{t,p}}\bigg)\Bigg)\\
    &\leq \sum_{p\in [P_{t,\epsilon}]^d} \delta\frac{1}{t\log(t)^2}\frac{1}{\log(t)^d} \leftarrow \text{ \cref{lem:MeanDeviations}}\\
    &= \delta\frac{1}{t\log(t)^2}\frac{1}{\log(t)^d} \log(t)^d\\
    &= \frac{\delta}{t\log(t)^2}\,.
\end{align*}
\end{proof}

\subsection{Proof of Lemma~\ref{lem:MeanDeviations}}
\label{app:MeanDeviations}
\MeanDeviations*

\begin{proof}
    We adapt the proofs from \citet{faury2020improved}, which adapts \citet{abbasi2011improved} itself.
    Let $t\geq d(d+1)/2$, $0<\epsilon<1$, $p\in[P_{t,\epsilon}]^d$, and $\delta>0$.
    
    Let $\lambda\in\bR^d$ such that $\|\lambda\|\leq \frac{1}{2\|B\|}$ and $s\in[t]$. We denote $\cF'_{t-1} = \sigma(A_1, Y_1, \dots, A_{t-1}, Y_{t-1}, A_t)$.
    
    Then, $\|\lambda^\top\diagmat_{A_s}\eta_s\|\leq 1/2$ and
\begin{equation*}
    \bE\bigg[\exp\Big(\lambda^\top\diagmat_{A_s}\eta_s-\lambda^\top\diagmat_{A_s}\SSigma\diagmat_{A_s}\lambda\Big)\bigg|\cF'_{s-1}\bigg] \leq 1
\end{equation*}
which yields that $\Big(M_k(\lambda)\Big)_{k\in\bN^*}=\Big(\exp\big(\lambda^\top S_k-\|\lambda\|_{V_k}^2\big)\Big)_{k\in\bN^*}$ is a $\cF'_k$-supermartingale.

Let $p\in[P_{t,\epsilon}]^d$, we consider the density $g_{p}$ of a $d$-dimensional Gaussian with covariance matrix $\frac{1}{2}\big(\tilde\diagcounts_p\diagmat_{\SSigma}+\|B\|^2\diagI\big)^{-1} = \frac{1}{2} \Rdesignmat_{0, p}^{-1}$, truncated in the ellipsoid $\big\{x\in\bR^d,\ \|\Rdesignmat_{0,p}^{1/2}x\|\leq \frac{1}{2}\big\}$,
\begin{equation*}
    g_p(x) = \frac{\indicator\big\{x\in\bR^d,\ \|\Rdesignmat_{0,p}^{1/2}x\|\leq \frac{1}{2}\}}{\Norm_p}\exp\Big(-\|x\|_{\Rdesignmat_{0, p}}^2\Big)\,,
\end{equation*}
where $\Norm_p$ is the normalisation constant. 

We integrate $\Big(M_k(\lambda)\Big)_{k\in\bN^*}$ for $\lambda \sim g_p$, and define $(\bar M_{p,k})_{k\in\bN^*}$ as
\begin{align*}
    \bar M_{p,k} = \int_{\lambda \in\bR^d} M_{k}(\lambda) d\lambda = \int_{\lambda \in\bR^d} \frac{\indicator\{\|\Rdesignmat_{0,p}^{1/2}\lambda\|\leq \frac{1}{2}\}}{\Norm_p} \exp\Big(\lambda^\top S_k-\|\lambda\|^2_{\Rdesignmat_{k,p}}\Big)d\lambda\,,
\end{align*}
which is still a supermartingale.

Let $\lambda_{t,p}^* \in \arg\max_{\{\Rdesignmat_{0,p}^{1/2}\|\lambda\|\leq \frac{1}{4}\}}(\lambda^\top S_t-\|\lambda\|^2_{\Rdesignmat_{t,p}})$. Then
\begin{align*}
    \bar M_{p,k} &= \frac{\exp(\lambda_{t,p}^{*\top} S_k- \|\lambda_{t,p}^*\|^2_{\Rdesignmat_{k,p}})}{\Norm_p}\int_{\|\Rdesignmat_{0,p}^{1/2}\lambda\|\leq\frac{1}{2}}\exp\Bigg((\lambda-\lambda_{t,p}^*)^\top S_t-\|\lambda\|_{\Rdesignmat_{k,p}}^2+\|\lambda_{t,p}^*\|_{\Rdesignmat_{k,p}}^2\Bigg)d\lambda\\
    &=\frac{\exp(\lambda_{t,p}^{*\top} S_k- \|\lambda_{t,p}^*\|^2_{\Rdesignmat_{t,p}})}{\Norm_p}\int_{\|\Rdesignmat_{0,p}^{1/2}\lambda+\Rdesignmat_{0,p}^{1/2}\lambda_{t,p}^*\|\leq\frac{1}{2}}\exp\Bigg(\lambda^\top S_k -\|\lambda\|_{\Rdesignmat_{k,p}}^2 - 2\lambda^\top\Rdesignmat_{k,p}\lambda^*_{t,p}\Bigg)d\lambda\\
    &\geq\frac{\exp(\lambda_{t,p}^{*\top} S_k- \|\lambda_{t,p}^*\|^2_{\Rdesignmat_{t,p}})}{\Norm_p}\int_{\|\Rdesignmat_{0,p}^{1/2}\lambda\|\leq\frac{1}{4}}\exp\Bigg(\lambda^\top S_k -\|\lambda\|_{\Rdesignmat_{k,p}}^2 - 2\lambda^\top\Rdesignmat_{k,p}\lambda^*_{t,p}\Bigg)d\lambda\\
    &=\frac{\exp(\lambda_{t,p}^{*\top} S_k- \|\lambda_{t,p}^*\|^2_{\Rdesignmat_{k,p}})}{\Norm_p}\int_{\|\Rdesignmat_{0,p}^{1/2}\lambda\|\leq\frac{1}{4}}\exp\Bigg(\lambda^\top \Big(S_k -2\Rdesignmat_{k,p}\lambda_{t,p}^*\Big)-\|\lambda\|^2_{\Rdesignmat_{k,p}}\Bigg)d\lambda\\
    &=\frac{\exp(\lambda_{t,p}^{*\top} S_k- \|\lambda_{t,p}^*\|^2_{\Rdesignmat_{k,p}})}{\Norm_p}\Norm_{k,p}\\
    &\qquad \qquad \int_{\lambda\in\bR^d}\frac{\indicator\big\{\|\Rdesignmat_{0,p}^{1/2}\lambda\|\leq \frac{1}{4}\big\}}{\Norm_{k,p}}\exp\Bigg(\lambda^\top \Big(S_k -2\Rdesignmat_{k,p}\lambda_{t,p}^*\Big)-\|\lambda\|_{\Rdesignmat_{k,p}}^2\Bigg)d\lambda\,,
\end{align*}

where we can recognize $g_{k,p}$ the density of a $d$-dimensional Gaussian with covariance matrix $\frac{1}{2}\Rdesignmat_{k,p}^{-1}$, truncated in the ellipsoid $\big\{x\in\bR^d,\ \|\Rdesignmat_{0,p}^{1/2}x\|\leq\frac{1}{4}\big\}$,
\begin{equation*}
    g_{k,p}(x) = \frac{\indicator\big\{\|\Rdesignmat_{0,p}^{1/2}x\|\leq \frac{1}{4}\big\}}{\Norm_{k,p}}\exp\Big(-\|x\|_{\Rdesignmat_{k,p}}^2\Big)\,,
\end{equation*}
with $\Norm_{k,p}$ the normalisation constant. 

Besides, Jensen's inequality yields
\begin{align*}
    \int_{\lambda\in\bR^d}\frac{\indicator\{\|\Rdesignmat_{0,p}^{1/2}\lambda\|\leq \frac{1}{4}\}}{\Norm_{k,p}}&\exp\Bigg(\lambda^\top \Big(S_k -2\Rdesignmat_{k,p}\lambda^*_{t,p}\Big)-\|\lambda\|_{\Rdesignmat_{k,p}}^2\Bigg)d\lambda\\
    &= \int_{\bR^d}g_{k,p}(\lambda)\exp\Bigg(\lambda^\top \Big(S_k -2\Rdesignmat_{k,p}\lambda^*_{t,p}\Big)\Bigg)d\lambda\\
    &\geq \exp\Bigg(\int_{\bR^d}g_{t,p}(\lambda)\lambda^{\top}\Big(S_k -2\Rdesignmat_{k,p}\lambda_{t,p}^*\Big)d\lambda\Bigg)\\
    &= \exp\Bigg(\Big(S_k -2\Rdesignmat_{k,p}\lambda_{t,p}^*\Big)^\top\int_{\bR^d}g_{t,p}(\lambda)\lambda d\lambda\Bigg)\\
    &=1\,.
\end{align*}

Therefore, for all $k\in\bN^*$
\begin{equation*}
    1\geq \Bar{M}_{p,k} \geq \frac{\Norm_{k,p}}{\Norm_p}\exp(\lambda_{t,p}^{*\top} S_k- \|\lambda_{t,p}^*\|^2_{\Rdesignmat_{k,p}})\,.
\end{equation*}

Markov's inequality yields
\begin{align*}
    \delta 
    &\geq \bP\Big(\bar M_{p,k} \geq \frac{1}{\delta}\Big)\\
    &\geq \bP\Bigg(\frac{\Norm_{k,p}}{\Norm_p}\exp(\lambda_{t,p}^{*\top}S_k-\|\lambda_{t,p}^*\|_{\Rdesignmat_{k,p}}^2)\geq \frac{1}{\delta}\Bigg)\\
    &= \bP\Bigg(\lambda_{t,p}^{*\top}S_k-\|\lambda_{t,p}^*\|^2_{\Rdesignmat_{k,p}}\geq \log\Big(\frac{\Norm_p}{\Norm_{k,p}}\Big)+\log(1/\delta)\Bigg)\,.
\end{align*}

Taking $k=t$ in particular gives
\begin{align*}
   \delta 
    &\geq \bP\Bigg(\max_{\|\Rdesignmat_{0,p}^{1/2}\lambda\|\leq\frac{1}{4}}\lambda^\top S_t-\|\lambda\|^2_{\Rdesignmat_{t,p}}\geq \log\Big(\frac{\Norm_p}{\Norm_{t,p}}\Big)+\log(1/\delta)\Bigg)\,.
\end{align*}

The constraint on $\lambda$ in the inner expression prevent to use the usual optimal value for subgaussian r.v. which could give a bound for $\|S_t\|_{Z_{t,p}^{-1}}^2$. Instead, we introduce
\begin{equation*}
    \lambda_{t,p} = \frac{1}{4}\frac{Z_{t,p}^{-1}S_t}{\|S_t\|_{Z_{t,p}^{-1}}}\,,
\end{equation*}
for which
\begin{align*}
    \|\Rdesignmat_{0,p}^{1/2}\lambda_{t,p}\| 
    &\leq \frac{1}{4}\|\Rdesignmat_{0,p}^{1/2}\Rdesignmat_{t,p}^{-1/2}\|\frac{\|S_t\|_{\Rdesignmat_{t,p}^{-1}}}{\|S_t\|_{\Rdesignmat_{t,p}^{-1}}} \\
    &\leq \frac{1}{4}\,.
\end{align*}

Then
\begin{align*}
    \delta
    &\geq \bP \Bigg(\frac{1}{4} \|S_t\|_{\Rdesignmat_{t,p}^{-1}} - \frac{1}{16}\|S_t\|_{\Rdesignmat_{t,p}^{-1}} \geq\log\Big(\frac{\Norm_p}{\Norm_{t,p}}\Big)+\log(1/\delta)\Bigg)\\
    &= \bP \Bigg(\|S_t\|_{\Rdesignmat_{t,p}^{-1}} \geq\frac{16}{3}\log\Big(\frac{\Norm_p}{\Norm_{t,p}}\Big)+\frac{16}{3}\log(1/\delta)\Bigg)\\
    &\geq \bP \Bigg(\|S_t\|_{\Rdesignmat_{t,p}^{-1}} \geq6\log\Big(\frac{\Norm_p}{\Norm_{t,p}}\Big)+6\log(1/\delta)\Bigg)\,.
\end{align*}
\end{proof}

\subsection{Proof of Lemma~\ref{lem:Dimension}}
\label{app:Dimension}
\Dimension*

\begin{proof}
    Let $t\geq d(d+1)/2$, $0<\epsilon<1$ and $p\in[P_{t,\epsilon}]^d$. 
    Then, following steps from \citet{faury2020improved} yields
    \begin{align*}
    \Norm_p &= \int_{\lambda\in\bR^d} \indicator\Big\{\|\Rdesignmat_{0, p}^{1/2}\lambda\|\leq \frac{1}{2}\Big\} \exp\{-\|\lambda\|_{\Rdesignmat_{0, p}}^2\}d\lambda\\
    &= \frac{1}{\sqrt{\det(\Rdesignmat_{0, p})}}\int_{\lambda\in\bR^d} \indicator\Big\{\|\lambda\|\leq \frac{1}{2}\Big\} \exp\{-\|\lambda\|^2\}d\lambda\,,
\end{align*}
and
\begin{align*}
    \Norm_{t,p} 
    &= \int_{\lambda\in\bR^d} \indicator\Big\{\|\Rdesignmat_{0, p}^{1/2}\lambda\|\leq \frac{1}{4}\Big\} \exp\{-\|\lambda\|_{\Rdesignmat_{t,p}}^2\}d\lambda\\
    &= \frac{1}{\sqrt{\det(\Rdesignmat_{t,p})}} \int_{\lambda\in\bR^d} \indicator\Big\{\|\Rdesignmat_{0, p}^{1/2}\Rdesignmat_{t,p}^{-1/2}\lambda\|\leq \frac{1}{4}\Big\} \exp\{-\|\lambda\|^2\}d\lambda\,.
\end{align*}
Noting that $\|\Rdesignmat_{0,p}^{1/2}\Rdesignmat_{t,p}^{-1/2}\|\leq 1$, we deduce
\begin{align*}
    \Norm_{t,p} &\geq \frac{1}{\sqrt{\det(\Rdesignmat_{t,p})}} \int_{\bR^d} \indicator\Big\{\|\lambda\|\leq \frac{1}{4}\Big\} \exp\{-\|\lambda\|^2\}d\lambda\,.
\end{align*}
Therefore,
\begin{equation*}
    \frac{\Norm_p}{\Norm_{t,p}}\leq \sqrt{\frac{\det(\Rdesignmat_{t,p})}{\det(\Rdesignmat_{0,p})}} \frac{\int_{\bR^d} \indicator\Big\{\|\lambda\|\leq \frac{1}{2}\Big\} \exp\{-\|\lambda\|^2\}d\lambda}{\int_{\bR^d} \indicator\Big\{\|\lambda\|\leq \frac{1}{4}\Big\} \exp\{-\|\lambda\|^2\}d\lambda}\,.
\end{equation*}
We treat the integrals as
\begin{align*}
    \frac{\int_{\bR^d} \indicator\Big\{\|\lambda\|\leq \frac{1}{2}\Big\} \exp\{-\|\lambda\|^2\}d\lambda}{\int_{\bR^d} \indicator\Big\{\lambda\|\leq \frac{1}{4}\Big\} \exp\{-\|\lambda\|^2\}d\lambda}
    &=\frac{\int_{\bR^d} \Bigg(\indicator\Big\{\|\lambda\|\leq \frac{1}{4}\Big\} + \indicator\Big\{\frac{1}{4} < \|\lambda\|\leq \frac{1}{2}\Big\} \Bigg)\exp\{-\|\lambda\|^2\}d\lambda}{\int_{\bR^d} \indicator\Big\{\lambda\|\leq \frac{1}{4}\Big\} \exp\{-\|\lambda\|^2\}d\lambda}\\
    &= 1 + \frac{\int_{\bR^d} \indicator\Big\{\frac{1}{4} < \|\lambda\|\leq \frac{1}{2}\Big\}\exp\{-\|\lambda\|^2\}d\lambda}{\int_{\bR^d} \indicator\Big\{\|\lambda\|\leq \frac{1}{4}\Big\} \exp\{-\|\lambda\|^2\}d\lambda}\\
    &\leq 1 + \frac{\exp(-1/16)}{\exp(-1/16)}\frac{\int_{\bR^d} \indicator\Big\{\frac{1}{4} < \|\lambda\|\leq \frac{1}{2}\Big\}d\lambda}{\int_{\bR^d} \indicator\Big\{\|\lambda\|\leq \frac{1}{4}\Big\}d\lambda}\\
    &=2^d\,.
\end{align*}
Thus
\begin{align*}
    \log\Bigg(\frac{\Norm_p}{\Norm_{t,p}}\Bigg)
    &\leq d\log(2) + \frac{1}{2}\log\Bigg(\frac{\det(\Rdesignmat_{t,p})}{\det(\Rdesignmat_{0,p})}\Bigg)\\
    &= d\log(2) + \frac{1}{2}\log\Bigg(\det\Big(\diagI + \Rdesignmat_{0, p}^{-1/2}\designmat_t\Rdesignmat_{0, p}^{-1/2}\Big)\Bigg)\\
    &\leq d\log(2) + \frac{1}{2}\log\Bigg(\prod_{i\in[d]}\Big(1+\frac{n_{t, (i,i)}\SSigma_{i,i}}{(1+\epsilon)^{p_i}\SSigma_{i,i}+\|B\|}\Big)\Bigg)\\
    &\leq d\log(2) + \frac{1}{2}\log\Bigg(\prod_{i\in[d]}\Big(1+\frac{n_{t, (i,i)}}{(1+\epsilon)^{p_i}}\Big)\Bigg)\,.
\end{align*}
In particular under event $(t\in\cD_p)$,
\begin{equation*}
    \log\Bigg(\frac{\Norm_p}{\Norm_{t,p}}\Bigg) \leq d\log(2) + \frac{d}{2}\log(2+\epsilon)\,.
\end{equation*}
\end{proof}

\section{Concentration of the covariances estimations (Proposition~\ref{prop:ChiDeviations2})}
\FavorableEventCovariance*
\label{app:Covariance}

It is a direct application of the following proposition: 

\begin{restatable}{proposition}{ChiDeviations}

\label{prop:ChiDeviations}
Let $\delta\in(0,1)$. Then with probability $1-\delta$, for all $t\geq d(d+1)/2$ and $(i,j)\in[d]^2$ ``reachable'',
    \begin{align*}
        |\hat{\chi}_{t,(i,j)} -\SSigma_{i,j}| 
        &\leq \frac{B_iB_j}{4}\bigg(\frac{5h_{t,\delta}}{\sqrt{n_{t,(i,j)}}}+\frac{h_{t,\delta}^2}{n_{t,(i,j)}}+\frac{1}{n_{t,(i,j)}^2}\bigg)\,.
    \end{align*}
    where $h_{t,\delta} = (1+2\log(1/\delta)+2\log(d(d+1))+\log(1+t))^{1/2}$.
\end{restatable}

\begin{proof}
    Let $\delta>0$, $t\geq d(d+A)/2$. We remind 
    \begin{align*}
    \cC_t = \big\{\forall (i,j)\in[d]^2 \text{ ``reachable''}, \ \hat{\SSigma}_{t,(i,j)} \geq  \SSigma_{i,j} \big\}\,.
\end{align*}
    
Let $(i,j)\in[d]^2$ ``reachable''. Then 
\begin{align*}
    \hat{\chi}_{t,(i,j)} 
    &=  \hat{\Sq}_{t,(i,j)}-\hat{\mu}_{t,i}\hat\mu_{t,j}\\
    &=\frac{1}{n_{t,(i,j)}}\sum_{s=1}^t A_{s,i}A_{s,j}Y_{s,i}Y_{s,j} - \Big(\frac{1}{n_{t,i}}\sum_{s=1}^t A_{s,i}Y_{s,i}\Big)\Big(\frac{1}{n_{t,i}}\sum_{s=1}^t A_{s,j}Y_{s,j}\Big)\,,
\end{align*}
And,
\begin{align*}
    \hat{\chi}_{t,(i,j)} -\SSigma_{i,j} 
    &=\frac{1}{n_{t,(i,j)}}\sum_{s=1}^t A_{s,i}A_{s,j}Y_{s,i}Y_{s,j} - \Sq_{i,j}-\Bigg[\Big(\frac{1}{n_{t,i}}\sum_{s=1}^t A_{s,i}Y_{s,i}\Big)\Big(\frac{1}{n_{t,j}}\sum_{s=1}^t A_{s,j}Y_{s,j}\Big) - \mu_i\mu_j\Bigg] \\
    &= \frac{1}{n_{t,(i,j)}}\sum_{s=1}^t A_{s,i}A_{s,j}\Big[Y_{s,i}Y_{s,j} - \Sq_{i,j}\Big]-\Bigg[\bigg(\frac{1}{n_{t,i}}\sum_{s=1}^t A_{s,i}\Big[Y_{s,i}-\mu_i\Big]\bigg)\bigg(\frac{1}{n_{t,j}}\sum_{s=1}^t A_{s,j}\Big[Y_{s,j}-\mu_j\Big]\bigg) \\
    &\hspace{2cm} + \mu_j\bigg(\frac{1}{n_{t,i}}\sum_{s=1}^t A_{s,i}\Big[Y_{s,i}-\mu_i\Big]\bigg) + \mu_i\bigg(\frac{1}{n_{t,j}}\sum_{s=1}^t A_{s,j}\Big[Y_{s,j}-\mu_j\Big]\bigg)\Bigg]\,.
\end{align*}
A triangle inequality yields
\begin{align*}
    |\hat{\chi}_{t,(i,j)} -\SSigma_{i,j}|
    &\leq \Bigg|\frac{1}{n_{t,(i,j)}}\sum_{s=1}^t A_{s,i}A_{s,j}\Big[Y_{s,i}Y_{s,j} - \Sq_{i,j}\Big]\Bigg|+\bigg|\frac{1}{n_{t,i}}\sum_{s=1}^t A_{s,i}\Big[Y_{s,i}-\mu_i\Big]\bigg|\ \bigg|\frac{1}{n_{t,j}}\sum_{s=1}^t A_{s,j}\Big[Y_{s,j}-\mu_j\Big]\bigg|\\
    &\hspace{2cm} + \frac{B_j}{2} \bigg|\frac{1}{n_{t,i}}\sum_{s=1}^t A_{s,i}\Big[Y_{s,i}-\mu_i\Big]\bigg|+ \frac{B_i}{2} \bigg|\frac{1}{n_{t,j}}\sum_{s=1}^t A_{s,j}\Big[Y_{s,j}-\mu_j\Big]\bigg|\,.
\end{align*}

We make repeated use of the following Lemma
    \begin{restatable}{lemma}{OnlineDeviations}
    \label{lem:OnlineDeviations}
    Let $(\cH_t)_{t\in\bN^*}$ be a filtration, $(U_t)_{t\in\bN^*}$ be an $\cH_t$ adapted martingales bounded by $C\in\bR_+^*$ with $\bE[U_1]=0$, and $(\indicator\{V_t\})_{t\in\bN^*}$ be a predictable process and $\delta>0$. 
    
    Then with probability at least $1-\delta$, for all $t$                  
    \begin{align*}
    \bP\Bigg(\frac{\sum_{s=1}^t\indicator\{V_s\}U_s}{1+\sum_{s=1}^t\indicator\{V_s\}} > \frac{C}{\sqrt{1+\sum_{s=1}^t\indicator\{V_s\}}}\sqrt{2\log(1/\delta) + \log(1+\sum_{s=1}^t\indicator\{V_s\})}\Bigg)\leq \delta\,.
\end{align*}
\end{restatable}

Therefore, with probability at least $1-\delta/2$, for all $(i,j)$ and $t$,
\begin{align*}
    \Bigg|\frac{1}{n_{t,(i,j)}}\sum_{s=1}^t A_{s,i}A_{s,j}\Big[Y_{s,i}Y_{s,j} - \Sq_{i,j}\Big]\Bigg|
    &\leq \Bigg|\frac{1}{n_{t,(i,j)}+1}\sum_{s=1}^t A_{s,i}A_{s,j}\Big[Y_{s,i}Y_{s,j} - \Sq_{i,j}\Big]\Bigg|+\frac{B_iB_j}{4(n_{t,(i,j)}+1)}\\
    &\leq\frac{B_iB_j}{4}\frac{1}{\sqrt{n_{t,(i,j)}+1}}\sqrt{2\log(1/\delta)+2\log(d(d+1))+\log(1+t)}\\
    &\hspace{2cm}+\frac{B_iB_j}{4n_{t,(i,j)}}\\
    &\leq\frac{B_iB_j}{4\sqrt{n_{t,(i,j)}}}\sqrt{2\log(1/\delta)+2\log(d(d+1))+\log(1+t)}\\
    &\hspace{2cm}+\frac{B_iB_j}{4n_{t,(i,j)}}\,.
\end{align*}
With probability at least $1-\delta/2$, for all $i$ and $t$,
\begin{align*}
    \bigg|\frac{1}{n_{t,i}}\sum_{s=1}^t A_{s,i}\Big[Y_{s,i}-\mu_i\Big]\bigg|
    &\leq \bigg|\frac{1}{n_{t,i}+1}\sum_{s=1}^t A_{s,i}\Big[Y_{s,i}-\mu_i\Big]\bigg| + \frac{B_i}{2n_{t,(i,i)}}\\
    &\leq \frac{B_i}{2\sqrt{n_{t,(i,i)}}}\sqrt{2\log(1/\delta)+2\log(2d)+\log(1+t)} + \frac{B_i}{2n_{t,(i,i)}}\,.
\end{align*}

Therefore, reinjecting those expressions yields that with probability at least $1-\delta$, for all $(i,j)$ and $t$,
\begin{align*}
    |\hat{\chi}_{t,(i,j)} -\SSigma_{i,j}|
    &\leq \frac{B_iB_j}{4\sqrt{n_{t,(i,j)}}}\sqrt{2\log(1/\delta)+2\log(d(d+1))+\log(1+t)}+\frac{B_iB_j}{4n_{t,(i,j)}} \\
    &\hspace{1cm}+ \frac{B_iB_j\big(2\log(1/\delta)+2\log(2d)+\log(1+t)\big)}{4\sqrt{n_{t,(i,i)}n_{t,(j,j)}}}+\frac{B_iB_j}{4n_{t,(i,i)}n_{t,(j,j)}}\\
    &\hspace{1cm}+\frac{B_iB_j}{4}\Big(\frac{1}{n_{t,(j,j)}\sqrt{n_{t,(i,i)}}}+\frac{1}{n_{t,(i,i)}\sqrt{n_{t,(j,j)}}}\Big)\sqrt{2\log(1/\delta)+2\log(2d)+\log(1+t)}\\
    &\hspace{1cm}+\frac{B_iB_j}{4}\Big(\frac{1}{\sqrt{n_{t,(i,i)}}}+\frac{1}{\sqrt{n_{t,(j,j)}}}\Big)\sqrt{2\log(1/\delta)+2\log(2d)+\log(1+t)}\\
    &\leq \frac{B_iB_j}{4\sqrt{n_{t,(i,j)}}}\sqrt{2\log(1/\delta)+2\log(d(d+1))+\log(1+t)}+\frac{B_iB_j}{4n_{t,(i,j)}} \\
    &\hspace{1cm}+ \frac{B_iB_j\big(2\log(1/\delta)+2\log(2d)+\log(1+t)\big)}{4\sqrt{n_{t,(i,i)}n_{t,(j,j)}}}+\frac{B_iB_j}{4n_{t,(i,i)}n_{t,(j,j)}}\\
    &\hspace{1cm}+\frac{B_iB_j}{2}\Big(\frac{1}{\sqrt{n_{t,(i,i)}}}+\frac{1}{\sqrt{n_{t,(j,j)}}}\Big)\sqrt{2\log(1/\delta)+2\log(2d)+\log(1+t)}\,.
\end{align*}
To simplify this expression, using $n_{t,(i,j)}\leq \min\{n_{t,(i,i)}, n_{t,(j,j)}\}$ and $n_{t,(i,j)}\leq \sqrt{n_{t,(i,i)}n_{t,(j,j)}}$ yields
\begin{align*}
    |\hat{\chi}_{t,(i,j)} -\SSigma_{i,j}|
    &\leq 5\frac{B_iB_j}{4}\frac{1}{\sqrt{n_{t,(i,j)}}}\sqrt{2\log(1/\delta)+2\log(d(d+1))+\log(1+t)}\\
    &\hspace{1cm}+\frac{B_iB_j}{4}\frac{1}{n_{t,(i,j)}}\Big(1+2\log(1/\delta)+2\log(d(d+1))+\log(1+t)\Big)\\
    &\hspace{1cm} + \frac{B_iB_j}{4}\frac{1}{n_{t,(i,j)}^2}\,.
\end{align*}
Denoting $\smash h_{t,\delta} = \Big(1+2\log(1/\delta)+2\log\big(t\log(t)^2d(d+1)\big)+\log(1+t)\Big)^{1/2}$, we have with probability at least $1-\frac{2\delta}{d(d+1)t\log(t)^2}$
\begin{align*}
    |\hat{\chi}_{t,(i,j)} -\SSigma_{i,j}|
    &\leq \frac{B_iB_j}{4}\Big(\frac{5h_{t,\delta}}{\sqrt{n_{t,(i,j)}}}+\frac{h_{t,\delta}^2}{n_{t,(i,j)}}+\frac{1}{n_{t,(i,j)}^2}\Big)\,.
\end{align*}

A union bound yields the desired results.
\end{proof}

\subsection{Proof for Lemma~\ref{lem:OnlineDeviations}}
\OnlineDeviations*

\begin{proof}
    Let $t\geq 2$. Then $U_t$ is $C$ sub-Gaussian and for all $\lambda\in$
\begin{equation*}
    \bE\Bigg[\exp\bigg(\lambda\indicator\{V_t\}U_t - \frac{\lambda^2C^2}{2}\indicator\{V_t\}\bigg)\Bigg|\cH_{t-1}\Bigg] \leq 1
\end{equation*}
Then $(W_t(\lambda))_{t\in\bN^*}=\Big(\exp(\lambda\sum_{s=1}^t\indicator\{V_s\}U_s - \frac{\lambda^2C^2}{2}\sum_{s=1}^t\indicator\{V_s\})\Big)_{t\in\bN^*}$ is a supermatringale. We use the Method of Mixtures by integrating for a $\lambda\sim\cN(0, 1/C^2)$. This yield
\begin{align*}
    &\int_{\lambda\in\bR} \frac{C}{\sqrt{2\pi}} \exp\Big(-\frac{\lambda^2C^2}{2}\Big)W_t(\lambda)d\lambda\\
    &= \frac{C}{\sqrt{2\pi}}\int_{\lambda\in\bR}\exp\bigg(\lambda\sum_{s=1}^t\indicator\{V_s\}U_s - \frac{\lambda^2C^2}{2}(1+\sum_{s=1}^t\indicator\{V_s\})\bigg) d\lambda\\
    &= \frac{C}{\sqrt{2\pi}}\int_{\lambda\in\bR}\exp\bigg(\frac{\big(\sum_{s=1}^t\indicator\{V_s\}U_s\big)^2}{2C^2(1+\sum_{s=1}^t\indicator\{V_s\})} \\
    & \hspace{4cm}- \frac{1}{2}\big(\lambda-\frac{\sum_{s=1}^t\indicator\{V_s\}U_s}{C^2(1+\sum_{s=1}^t\indicator\{V_s\})}\big)^2C^2(1+\sum_{s=1}^t\indicator\{V_s\})\bigg) d\lambda\\
    &= \exp\bigg(\frac{\big(\sum_{s=1}^t\indicator\{V_s\}U_s\big)^2}{2C^2(1+\sum_{s=1}^t\indicator\{V_s\})}\bigg)\frac{1}{\sqrt{1+\sum_{s=1}^t\indicator\{V_s\}}}\\
    &\leq 1\,.
\end{align*}

Therefore,
\begin{align*}
    \bP\Bigg(\frac{\sum_{s=1}^t\indicator\{V_s\}U_s}{1+\sum_{s=1}^t\indicator\{V_s\}} > \frac{C}{\sqrt{1+\sum_{s=1}^t\indicator\{V_s\}}}\sqrt{2\log(1/\delta) + \log(1+\sum_{s=1}^t\indicator\{V_s\})}\Bigg)\leq \delta\,.
\end{align*}

Using the stopping time construction from \citet{abbasi2011improved} yields the property for all~$t$.
\end{proof}

\section{Behaviour in the high-probability events (Section~\ref{sec:Regret})}
\label{app:HPEvent}

The following proposition states that under some assumptions on the sequence of events $(\cE_t)$, the regret can be bounded by problem-dependent quantities (including $\SSigma$, $T$, or $d$). They are not all explicitly stated in \cref{prop:HPRegret} to make it adaptive to both algorithms but are hidden in the constants.

\begin{restatable}{proposition}{HPRegret}
    \label{prop:HPRegret}
    Let $r\in\bN$, $e\in(1, +\infty)^r$. Let $(\cE_t)_{t\geq d(d+1)/2}$ be a sequence of events such that for all $t\geq d(d+1)/2$, under $\cE_t$,
    \begin{equation}
        \frac{\Delta_{A_{t+1}}^2}{C} \leq \sum_{i\in A_{t+1}}\frac{C_{A_{t+1},i}}{n_{t,(i,j)}} + \sum_{s\in[r]}\Bigg[\sum_{(i,j)\in A_{t+1}} \frac{C_s}{n_{t,(i,j)}^{e_s}}\Bigg],\label{eq:DeltaAtDecomp} 
    \end{equation}
    where $C$ and $(C_s)_{s\in[r]}$ are problem-dependent positive constants. $C_{A_{t+1}, i}$ is a positive constant depending on $A_{t+1}$ and $i$ so that, for all $a\in\cA$, $C_{a,i}\leq 2m\SSigma_{i,i}$. Let $c\in\bR_+^*$ and $(c_s)_{s\in[r]}\in(\bR_+^*)^r$ be positive constants such that $1/c+\sum_{s\in[r]}1/c_s=1$.
    
    Then,
    \begin{align}
        &\sum_{t=d(d+1)/2}^{T-1} \Delta_{A_{t+1}}\indicator\{\cE_t\}\notag\\
    &\hspace{1cm}\leq 96c_1C\log(m)^2\sum_{i\in [d]}\Big(\max_{a\in\cA / i\in a}\frac{C_{a,i}}{\Delta_{a}}\Big)\notag\\
    &\hspace{1.5cm}+\sum_{s=1}^r \Bigg[\indicator\Big\{e_s=2\Big\}346\Big(c_sCC_s\log(m)\Big)^{1/2}md^2\Bigg(1 + \log\Big(\frac{\Delta_{\max}}{\Delta_{\min}}\Big)\Bigg) \notag\\
    &\hspace{2cm} + \indicator\Big\{1<e_s<2\Big\}60.30^{1/e_s}\Big(c_sCC_s\log(m)\Big)^{1/e_s}d^2m^{2/e_s}\Delta_{\min}^{1-2/e_s}\notag\\
    &\hspace{2cm} + \indicator\Big\{2<e_s\Big\}60.30^{1/e_s}\Big(c_sCC_s\log(m)\Big)^{1/e_s} \frac{e_s}{e_s-2}d^2m^{2/e_s}\Delta_{\max}^{1-2/e_s}\Bigg]\,.
    \end{align}
    where $(\alpha_k)_{k\in\bN^*}$, $(\beta_k)_{k\in\bN^*}$ and $k_0\in\bN^*$ are defined in \cref{app:DefAlphaBeta}.
\end{restatable}

\begin{proof}
   The proof is classical and involves a decomposition of the events \(\cE_t\) (see \citet{kveton2015tight, degenne2016combinatorial, perrault2020covariance}). By considering each of the $r$ sub-sum in \cref{eq:DeltaAtDecomp} and designing sets of event that can happen only a finite number of times. 

    We introduce two sequences $(\alpha_k)_{k\in\bN^*}$ and $(\beta_k)_{k\in\bN^*}$, both begin at $1$ and strictly decrease to $0$ (see \cref{app:DefAlphaBeta} for their definitions). These sequences are introduced to be able to consider the different terms of \cref{eq:DeltaAtDecomp} separately.

    Let $(c_s)_{s\in[r]}\in(\bR_+^*)^r$ such that $\sum_{s\in[r]}1/c_s=1$.

    Let $t\geq d(d+1)/2$, $k\in\bN^*$. We define the set
    \begin{align}
    S_{t,k} &= \Big\{i\in A_{t+1},\quad n_{t,(i,i)}\leq c_1m\alpha_k \ \frac{C}{\Delta_{A_{t+1}}^2}\ \frac{C_{A_{t+1},i}^2}{\SSigma_{i,i}^*}\Big\}\,,\label{eq:Stk}
    \end{align}
    and the event
    \begin{align}
    \bA_{t,k} &= \bigg\{\sum_{i\in S_{t,k}}\frac{\SSigma_{i,i}^*}{C_{A_{t+1},i}} \geq \beta_k m ;\quad  \forall l<k, \sum_{i\in S_{t,l}^1}\frac{\SSigma_{i,i}}{C_{A_{t+1},i}} < \beta_l m\bigg\}\,.\label{eq:Atk}
    \end{align}
    A notable difference from previous approaches is the use of \(\SSigma_{i,i}^*/C_{a,i}\) in $\bA_{t,k}^1$ instead of set cardinals. This enables the explicit appearance of the $C_{a, i}$ coefficients, which will involve the \(\sigma_{a,i}^2\) for the application of this proposition to our algorithms.
    
    For $s\in[r]$, we define
    \begin{align}
    S_{t,k}^s &= \Big\{(i,j)\in A_{t+1},\quad n_{t,(i,j)}^{e_s}\leq c_sm^2\alpha_k\ \frac{C}{\Delta_{A_{t+1}}^2}\ C_s\Big\}\, \label{eq:Stks}
    \end{align}
    and the events
    \begin{align}
        \bA_{t,k}^s &= \Big\{|S_{t,k}^s| \geq \beta_k m^2 ;\quad \forall l<k, |S_{t,l}^s| < \beta_l m^2\Big\}\label{eq:Atks}\,.
    \end{align}

    The following Lemma, proven in \cref{app:EtDecomp}, decomposes $(\cE_t)_{t\geq d(d+1)/2}$ using these events.
   \begin{restatable}{lemma}{EtDecomp}
    \label{lem:EtDecomp}
        Let's consider the assumptions of \cref{prop:HPRegret}. Let $\bA_{t,k}$ and $(\bA_{t,k}^s)_{s\in[r]}$ be the events defined in \cref{eq:Atk} and \cref{eq:Atks}. Let $k_0\in\bN^*$ such that \(0< m\beta_{k_0} < \frac{1}{2m}\) and $t\geq d(d+1)/2$. 
       \begin{equation*}
            \indicator\{\cE_t\} \leq \sum_{k=1}^{k_0}\indicator\{\bA_{t,k}\}+\sum_{s=1}^r\sum_{k=1}^{k_0}\indicator\{\bA_{t,k}^s\}\,.
        \end{equation*}
   \end{restatable}

Using it, we decompose
\begin{align}
\label{eq:SeparationRegret}
    \sum_{t=d(d+1)/2}^{T-1} \Delta_{A_{t+1}}\indicator\{\cE_t\}
    &\leq \sum_{t=d(d+1)/2}^{T-1}\Bigg[ \sum_{k=1}^{k_0}  \Delta_{A_{t+1}}\indicator\{\bA_{t,k}\} + \sum_{s=1}^r \sum_{k=1}^{k_0}  \Delta_{A_{t+1}}\indicator\{\bA_{t,k}^s\}\Big]\nonumber\\
    &= \sum_{t=d(d+1)/2}^{T-1} \Big[\Delta_{A_{t+1}} \sum_{k=1}^{k_0}  \indicator\{\bA_{t,k}\}\Big] + \sum_{s=1}^r \sum_{t=d(d+1)/2}^{T-1} \Big[\Delta_{A_{t+1}} \sum_{k=1}^{k_0}  \indicator\{\bA_{t,k}^s\}\Big]\,.
\end{align}

We begin with the first term of \cref{eq:SeparationRegret}. Let $t\geq d(d+1)/2$, and \(k\in[k_0]\). Then,
\begin{equation*}
        \bA_{t,k}
        = \bigg\{\sum_{i\in S_{t,k}^1}\frac{\SSigma_{i,i}}{C_{A_{t+1},i}} \geq \beta_k m ;\quad \forall l<k, \sum_{i\in S_{t,l}^1}\frac{\SSigma_{i,i}}{C_{A_{t+1},i}} < \beta_l\bigg\}
        \subseteq \bigg\{\frac{1}{\beta_km}\sum_{i\in S_{t,k}^1}\frac{\SSigma_{i,i}}{C_{A_{t+1},i}} \geq 1\bigg\}\,.
\end{equation*}
Therefore,
\begin{equation}
\label{eq:MajorEventAtk}
    \begin{aligned}
        \indicator\{\bA_{t,k}\}\leq \frac{1}{\beta_km}\sum_{i\in[d]}\frac{\SSigma_{i,i}}{C_{A_{t+1},i}}\indicator\Big\{\bA_{t,k} \cap \{i\in S_{t,k}\}\Big\}\,.
    \end{aligned}
\end{equation}

Summing over $t$ and $k$, and including the gaps yields
\begin{align} 
    \sum_{t=d(d+1)/2}^{T-1}& \Delta_{A_{t+1}}\sum_{k=1}^{k_0}\indicator\{\bA_{t,k}^1\} \\
    &\leq \sum_{t=d(d+1)/2}^T\Delta_{A_{t+1}}\sum_{k=1}^{k_0}\frac{1}{\beta_km}\sum_{i\in[d]}\frac{\SSigma_{i,i}}{C_{A_{t+1},i}}\indicator\Big\{\bA_{t,k}^1 \cap \{i\in S_{t,k}\}\Big\} \leftarrow \text{ by Eq.~\eqref{eq:MajorEventAtk}}\nonumber\\
    &\leq \sum_{i\in [d]}\SSigma_{i,i}\sum_{t=d(d+1)/2}^T\sum_{k=1}^{k_0}\frac{1}{\beta_km}\frac{\Delta_{A_{t+1}}}{C_{A_{t+1}, i}}\indicator\{i\in S_{t,k}\}\nonumber\\
    &= \sum_{i\in [d]}\SSigma_{i,i}\sum_{k=1}^{k_0}\frac{1}{\beta_km}\sum_{t=d(d+1)/2}^T\frac{\Delta_{A_{t+1}}}{C_{A_{t+1}, i}}\indicator\Bigg\{n_{t, (i,i)}\leq c_1m\alpha_k\frac{C}{\big(\frac{\Delta_{A_{t+1}}}{C_{A_{t+1}, i}}\big)^2\SSigma_{i,i}}\Bigg\}\,. \leftarrow \text{ by Eq. \eqref{eq:Stk}} \label{eq:SummingGaps}
\end{align}

Let \(i\in[d]\), we consider all the actions associated to it. Let \(q_i\in\bN^*\) be the number of actions associated to item \(i\). Let \(l\in[q_i]\), we denote \(e_{i}^l\in\cA\) the \(l\)-th action associated to item \(i\), sorted by decreasing \(\frac{\Delta_{e_{i}^l}}{C_{e_{i}^l,i}}\), with \(\frac{C_{e_{i}^0,i}}{\Delta_{e_{i}^0}} = 0\) by convention. Then

{\allowdisplaybreaks
\begin{align}
    &\sum_{t=d(d+1)/2}^{T-1}\frac{\Delta_{A_{t+1}}}{C_{A_{t+1}, i}}\indicator\Bigg\{n_{t, (i,i)}\leq c_1m\alpha_k\frac{C}{\big(\frac{\Delta_{A_{t+1}}}{C_{A_{t+1}, i}}\big)^2\SSigma_{i,i}}\Bigg\}\nonumber\\
    &\leq \sum_{t=0}^{T-1}\sum_{l=1}^{q_i}\frac{\Delta_{e_i^l}}{C_{e_i^l, i}}\indicator\Bigg\{n_{t, (i,i)}\leq c_1m\alpha_k\frac{C}{\big(\frac{\Delta_{e_i^l}}{C_{e_i^l, i}}\big)^2\SSigma_{i,i}}, \quad A_{t+1} = e_i^l\Bigg\}\nonumber\\
    &= \sum_{t=0}^{T-1}\sum_{l=1}^{q_i}\frac{\Delta_{e_i^l}}{C_{e_i^l, i}}\indicator\Bigg\{n_{t, (i,i)}\frac{\SSigma_{i,i}}{c_1m\alpha_k C}\leq \frac{1}{\big(\frac{\Delta_{e_i^l}}{C_{e_i^l, i}}\big)^2}, \quad A_{t+1} = e_i^l\Bigg\}\nonumber\\
    &= \sum_{t=0}^{T-1}\sum_{l=1}^{q_i}\frac{\Delta_{e_i^l}}{C_{e_i^l, i}}\sum_{p=1}^l\indicator\Bigg\{\frac{1}{\big(\frac{\Delta_{e_i^{p-1}}}{C_{e_i^{p-1}, i}^2}\big)^2}<n_{t, (i,i)}\frac{\SSigma_{i,i}}{c_1m\alpha_k C}\leq \frac{1}{\big(\frac{\Delta_{e_i^p}}{C_{e_i^p, i}^2}\big)}, \quad A_{t+1} = e_i^l\Bigg\}\leftarrow \text{decomposing the event}\nonumber \\
    &\leq \sum_{t=0}^{T-1}\sum_{l=1}^{q_i}\sum_{p=1}^l\frac{\Delta_{e_i^p}}{C_{e_i^p, i}}\indicator\Bigg\{\frac{1}{\big(\frac{\Delta_{e_i^{p-1}}}{C_{e_i^{p-1}, i}}\big)^2} < n_{t, (i,i)}\frac{\SSigma_{i,i}}{c_1m\alpha_kC} \leq \frac{1}{\big(\frac{\Delta_{e_i^p}}{C_{e_i^p, i}^2}\big)},\quad A_{t+1} = e_i^l\Bigg\} \leftarrow \text{ as } \frac{\Delta_{e_i^l}}{C_{e_i^l, i}} \leq \frac{\Delta_{e_i^p}}{C_{e_i^p, i}}\nonumber\\
    &= \sum_{p=1}^{q_i}\frac{\Delta_{e_i^p}}{C_{e_i^p, i}}\sum_{t=0}^{T-1}\sum_{l=p}^{q_i}\indicator\Bigg\{ \frac{1}{\big(\frac{\Delta_{e_i^{p-1}}}{C_{e_i^{p-1}, i}}\big)^2} < n_{t, (i,i)}\frac{\SSigma_{i,i}}{c_1m\alpha_k C} \leq \frac{1}{\big(\frac{\Delta_{e_i^p}}{C_{e_i^p, i}^2}\big)},\quad A_{t+1} = e_i^l\Bigg\}\nonumber\\
    &\leq \sum_{p=1}^{q_i}\frac{\Delta_{e_i^p}}{C_{e_i^p, i}}\sum_{t=0}^{T-1}\sum_{l=1}^{q_i}\indicator\Bigg\{\frac{1}{\big(\frac{\Delta_{e_i^{p-1}}}{C_{e_i^{p-1}, i}}\big)^2} < n_{t, (i,i)}\frac{\SSigma_{i,i}}{c_1m\alpha_kC}\leq \frac{1}{\big(\frac{\Delta_{e_i^p}}{C_{e_i^p, i}^2}\big)},\quad A_{t+1} = e_i^l\Bigg\} \leftarrow \text{we extend the sum over }l\nonumber\\
    &= \sum_{p=1}^{q_i}\frac{\Delta_{e_i^p}}{C_{e_i^p, i}}\sum_{t=0}^{T-1}\indicator\Bigg\{\frac{1}{\big(\frac{\Delta_{e_i^{p-1}}}{C_{e_i^{p-1}, i}^2}\big)^2} < n_{t, (i,i)}\frac{\SSigma_{i,i}}{c_1m\alpha_k C} \leq \frac{1}{\big(\frac{\Delta_{e_i^p}}{C_{e_i^p, i}}\big)^2},\quad i\in A_{t+1}\Bigg\} \leftarrow\text{we simplify the inner sum}\nonumber\\
    &\leq \sum_{p=1}^{q_i}\frac{\Delta_{e_i^p}}{C_{e_i^p, i}}\Bigg(\bigg\lfloor\bigg(\frac{C_{e_i^p,i}}{\Delta_{e_i^p}}\bigg)^2\frac{c_1m\alpha_kC}{\SSigma_{i,i}}\bigg\rfloor-\bigg\lfloor\bigg(\frac{C_{e_i^{p-1},i}}{\Delta_{e_i^{p-1}}}\bigg)^2\frac{c_1m\alpha_kC}{\SSigma_{i,i}}\bigg\rfloor\Bigg) \leftarrow \text{the event can only happen a given nbr. of times}\nonumber\\
    &= \Bigg(\bigg\lfloor\bigg(\frac{C_{e_i^{q_i}, i}}{\Delta_{e_i^{q_i}}}\bigg)^2\frac{c_1m\alpha_kC}{\SSigma_{i,i}}\bigg\rfloor\frac{\Delta_{e_i^{q_i}}}{C_{e_i^{q_i}, i}}+\sum_{p=1}^{q_i-1}\bigg\lfloor\bigg(\frac{C_{e_i^p, i}}{\Delta_{e_i^p}}\bigg)^2\frac{c_1m\alpha_kC}{\SSigma_{i,i}}\bigg\rfloor\bigg(\frac{\Delta_{e_i^p}}{C_{e_i^p, i}}-\frac{\Delta_{e_i^{p+1}}}{C_{e_i^{p+1}, i}}\bigg)\Bigg)\nonumber \leftarrow \text{ summation by parts}\\
    &\leq \frac{c_1m\alpha_kC}{\SSigma_{i,i}}\Bigg(\frac{C_{e_i^{q_i}, i}}{\Delta_{e_i^{q_i}}}+\sum_{p=1}^{q_i-1}\bigg(\frac{C_{e_i^p, i}}{\Delta_{e_i^p}}\bigg)^2\bigg(\frac{\Delta_{e_i^p}}{C_{e_i^p, i}}-\frac{\Delta_{e_i^{p+1}}}{C_{e_i^{p+1}, i}}\bigg)\Bigg)\nonumber \leftarrow \text{everything is positive}\\
    &\leq \frac{c_1m\alpha_kC}{\SSigma_{i,i}}\Bigg(\frac{C_{e_i^{q_i}, i}}{\Delta_{e_i^{q_i}}}+\int_{\Big(\frac{\Delta_{e_i^{q_i}}}{C_{e_i^{q_i}, i}}\Big)}^{\Big(\frac{\Delta_{e_i^{1}}}{C_{e_i^{1}, i}}\Big)} \frac{1}{x^2}dx\Bigg)\nonumber \\
    &= \frac{c_1m\alpha_kC}{\SSigma_{i,i}}\Bigg(\frac{C_{e_i^{q_i}, i}^2}{\Delta_{e_i^{q_i}}}+\frac{C_{e_i^{q_i}}^2}{\Delta_{e_i^{q_i}}} - \frac{C_{e_i^{1}}^2}{\Delta_{e_i^{1}}}\Bigg)\nonumber \\
    &\leq \frac{2c_1m\alpha_kC}{\SSigma_{i,i}}\frac{C_{e_i^{q_i},i}}{\Delta_{e_i^{q_i}}}\nonumber\\
    &\leq \frac{2c_1m\alpha_kC}{\SSigma_{i,i}}\Big(\max_{a\in\cA/i\in a}\frac{C_{a,i}}{\Delta_{a}}\Big).\label{eq:BigEqSigmai}
\end{align}
}

Reinjecting Eq.~\eqref{eq:BigEqSigmai} into \cref{eq:SummingGaps} yields
\begin{align}  
    \sum_{t=d(d+1)/2}^{T-1}\Delta_{A_{t+1}}\sum_{k=1}^{k_0}\indicator\{\bA_{t,k}^1\}
    &\leq \sum_{i\in [d]}\SSigma_{i,i}\sum_{k=1}^{k_0}\frac{1}{\beta_km} \frac{2c_1mC\alpha_k}{\SSigma_{i,i}}\Big(\max_{a\in\cA / i\in a}\frac{C_{a,i}}{\Delta_{a}}\Big)\nonumber\\
    &= 2c_1C\Big(\sum_{k=1}^{k_0}\frac{\alpha_k}{\beta_k}\Big)\sum_{i\in [d]}\Big(\max_{a\in\cA / i\in a}\frac{C_{a,i}}{\Delta_{a}}\Big)\notag\\
    &= 96c_1C\log(m)^2\sum_{i\in [d]}\Big(\max_{a\in\cA / i\in a}\frac{C_{a,i}}{\Delta_{a}}\Big)\,.
\label{eq:RegretSigma} 
\end{align}

We treat the $r$ other terms in a similar way. 
Let $s\in[r]$, $t\geq d(d+1)/2$, and \(k\in[k_0]\), 
\begin{equation*}
        \bA_{t,k}^s
        = \bigg\{|S_{t,k}^s| \geq \beta_k m^2 ;\quad \forall l<k, |S_{t,l}^s| < \beta_l m^2\bigg\}
        \subseteq \bigg\{\frac{1}{\beta_km^2}|S_{t, k}^2| \geq 1\bigg\}\,.
\end{equation*}
Therefore,
\begin{equation}
\label{eq:MajorEventAtks}
    \begin{aligned}
        \indicator\{\bA_{t,k}^s\}\leq \frac{1}{\beta_km^2}\sum_{(i, j)\in[d]^2}\indicator\Big\{\bA_{t,k}^s \cap \{(i,j)\in S_{t,k}^s\}\Big\}\,.
    \end{aligned}
\end{equation}

Summing over $t$ and $k$ yields
{\allowdisplaybreaks
\begin{align} 
    \sum_{t=d(d+1)/2}^{T-1} & \Delta_{A_{t+1}}\sum_{k=1}^{k_0}\indicator\{\bA_{t,k}^s\} \nonumber\\
    &\leq \sum_{t=d(d+1)/2}^{T-1}\Delta_{A_{t+1}}\sum_{k=1}^{k_0}\frac{1}{\beta_km^2}\sum_{(i,j)\in[d]^2}\indicator\Big\{\bA_{t,k}^s \cap \{(i,j)\in S_{t,k}^s\}\Big\} \leftarrow \text{ by Eq.~\eqref{eq:MajorEventAtks}}\nonumber\\
    & \leq \sum_{(i,j)\in [d]^2}\sum_{t=d(d+1)/2}^{T-1}\sum_{k=1}^{k_0}\frac{1}{\beta_km^2}\Delta_{A_{t+1}}\indicator\{(i,j)\in S_{t,k}^s\}\nonumber\\
    & = \sum_{(i,j)\in [d]^2}\sum_{k=1}^{k_0}\frac{1}{\beta_km^2}\sum_{t=d(d+1)/2}^{T-1}\Delta_{A_{t+1}}\indicator\Bigg\{n_{t, (i,j)}\leq m^{2/e_s}(c_s\alpha_kCC_s)^{1/e_s}\frac{1}{\Delta_{A_{t+1}}^{2/e_s}}\Bigg\}\,. \leftarrow \text{ by Eq. \eqref{eq:Stks}} \label{eq:SummingGapss}
\end{align}
}

Let \((i,j)\in[d]^2\), we consider all the actions which are associated to it. Let $q_{(i,j)}\in \bN^*$ be the number of actions associated to the tuple $(i,j)$. Let \(l\in[q_{(i,j)}]\), this time, we denote \(e_{(i,j)}^l\in\cA\) the \(l\)-th action associated to tuple \((i,j)\), sorted by decreasing \(\Delta_{e_{(i,j)}^l}\), with \(\frac{1}{\Delta_{e_{(i,j)}^0}} = 0\) by convention. Then,

{\allowdisplaybreaks
\begin{align}
    &\sum_{t=d(d+1)/2}^{T-1}\Delta_{A_{t+1}}\indicator\Bigg\{n_{t, (i,j)}\leq m^{2/e_s}(c_s\alpha_kCC_s)^{1/e_s}\frac{1}{\Delta_{A_{t+1}}^{2/e_s}}\Bigg\}\nonumber\\
    &\leq \sum_{t=0}^{T-1}\sum_{l=1}^{q_{(i,j)}}\Delta_{e_{(i,j)}^l}\indicator\Bigg\{n_{t, (i,j)}\leq m^{2/e_s}(c_s\alpha_kCC_s)^{1/e_s}\frac{1}{\Delta_{A_{t+1}}^{2/e_s}}, \quad A_{t+1} = e_{(i,j)}^l\Bigg\}\nonumber\\
    &= \sum_{t=0}^{T-1}\sum_{l=1}^{q_{(i,j)}}\Delta_{e_{(i,j)}^l}\indicator\Bigg\{n_{t, (i,j)}m^{-2/e_s}(c_s\alpha_kCC_s)^{-1/e_s}\leq \frac{1}{\Delta_{e_{(i,j)}^l}^{2/e_s}}, \quad A_{t+1} = e_{(i,j)}^l\Bigg\}\nonumber\\
    &= \sum_{t=0}^{T-1}\sum_{l=1}^{q_{(i,j)}}\Delta_{e_{(i,j)}^l}\sum_{p=1}^l\indicator\Bigg\{\frac{1}{\Delta_{e^{p-1}_{(i,j)}}^{2/e_s}}<n_{t, (i,j)}m^{-2/e_s}(c_s\alpha_kCC_s)^{-1/e_s}\leq \frac{1}{\Delta_{e_{(i,j)}^p}^{2/e_s}}, \quad A_{t+1} = e_{(i,j)}^l\Bigg\}\nonumber\\
    &\leq \sum_{t=0}^{T-1}\sum_{p=1}^{q_{(i,j)}}\Delta_{e_{(i,j)}^p}\sum_{l=p}^{q_{(i,j)}}\indicator\Bigg\{\frac{1}{\Delta_{e^{p-1}_{(i,j)}}^{2/e_s}}<n_{t, (i,j)}m^{-2/e_s}(c_s\alpha_kCC_s)^{-1/e_s}\leq \frac{1}{\Delta_{e_{(i,j)}^p}^{2/e_s}}, \quad A_{t+1} = e_{(i,j)}^l\Bigg\}\nonumber\\
    &\leq \sum_{p=1}^{q_{(i,j)}}\Delta_{e_{(i,j)}^p}\sum_{t=0}^{T-1}\indicator\Bigg\{\frac{1}{\Delta_{e^{p-1}_{(i,j)}}^{2/e_s}}<n_{t, (i,j)}m^{-2/e_s}(c_s\alpha_kCC_s)^{-1/e_s}\leq \frac{1}{\Delta_{e_{(i,j)}^p}^{2/e_s}}, \quad i\in A_{t+1}\Bigg\}\nonumber\\
    &\leq \sum_{p=1}^{q_{(i,j)}}\Delta_{e_{(i,j)}^p}\Bigg(\bigg\lfloor\frac{m^{2/e_s}(c_s\alpha_kCC_s)^{1/e_s}}{\Delta_{e_{(i,j)}^p}^{2/e_s}}\bigg\rfloor-\bigg\lfloor\frac{m^{2/e_s}(c_s\alpha_kCC_s)^{1/e_s}}{\Delta_{e_{(i,j)}^{p-1}}^{2/e_s}}\bigg\rfloor\Bigg)\nonumber\\
    &= \bigg\lfloor\frac{m^{2/e_s}(c_s\alpha_kCC_s)^{1/e_s}}{\Delta_{e_{(i,j)}^{q_{(i,j)}}}^{2/e_s}}\bigg\rfloor\Delta_{e_{(i,j)}^{q_{(i,j)}}}+\sum_{p=1}^{q_{(i,j)}-1}\bigg\lfloor\frac{m^{2/e_s}(c_s\alpha_kCC_s)^{1/e_s}}{\Delta_{e_{(i,j)}^p}^{2/e_s}}\bigg\rfloor\Bigg(\Delta_{e_{(i,j)}^p}-\Delta_{e_{(i,j)}^{p+1}}\Bigg)\nonumber\\
    &\leq m^{2/e_s}(c_s\alpha_kCC_s)^{1/e_s} \Big(\Delta_{e_{(i,j)}^{q_{(i,j)}}}\Big)^{1-2/e_s}+\sum_{p=1}^{q_{(i,j)}-1}\Big(\Delta_{e_{(i,j)}}^p\Big)^{-2/e_s}\Bigg(\Delta_{e_{(i,j)}^p}-\Delta_{e_{(i,j)}^{p+1}}\Bigg)\nonumber\\
    &\leq m^{2/e_s}(c_s\alpha_kCC_s)^{1/e_s} \Bigg(\Big(\Delta_{e_{(i,j)}^{q_{(i,j)}}}\Big)^{1-2/e_s} + \int_{\Delta_{e_{(i,j)}^{q_{(i,j)}}}}^{\Delta_{e_{(i,j)}^{1}}}x^{-2/e_s}dx\Bigg)\,. \nonumber
\end{align}

\underline{If $e_s=2$}, then 
\begin{align*}
    &\sum_{t=d(d+1)/2}^{T-1}\Delta_{A_{t+1}}\indicator\Bigg\{n_{t, (i,j)}\leq m^{2/e_s}(c_s\alpha_kCC_s)^{1/e_s}\frac{1}{\Delta_{A_{t+1}}^{2/e_s}}\Bigg\}\nonumber\\
    &\leq m(c_s\alpha_kCC_s)^{1/2} \Bigg(1 + \int_{\Delta_{e_{(i,j)}^{q_{(i,j)}}}}^{\Delta_{e_{(i,j)}^{1}}}x^{-1}dx\Bigg)\\
    &\leq m(c_s\alpha_kCC_s)^{1/2} \Bigg(1 + \log\Big(\frac{\Delta_{\max}}{\Delta_{\min}}\Big)\Bigg)\,.
\end{align*}
}

Reinjecting this expression into yields \cref{eq:SummingGapss}, for $e_s=2$
\begin{align}
    \sum_{t=d(d+1)/2}^{T-1} \Delta_{A_{t+1}}\sum_{k=1}^{k_0}\indicator\{\bA_{t,k}^s\}
    &\leq \sum_{(i,j)\in[d]^2}\sum_{k\in[k_0]}\frac{1}{\beta_km^2}m(c_s\alpha_kCC_s)^{1/2} \Bigg(1 + \log\Big(\frac{\Delta_{\max}}{\Delta_{\min}}\Big)\Bigg)\nonumber\\
    &=(c_sCC_s)^{1/2}\frac{d^2}{m}\Bigg(1 + \log\Big(\frac{\Delta_{\max}}{\Delta_{\min}}\Big)\Bigg)\sum_{k\in[k_0]}\frac{\alpha_k^{1/2}}{\beta_k} \notag\\
    &\leq 346\Big(c_sCC_s\log(m)\Big)^{1/2}md^2\Bigg(1 + \log\Big(\frac{\Delta_{\max}}{\Delta_{\min}}\Big)\Bigg)\,.\label{eq:RegretS2}
\end{align}

\underline{Else, for $1<e_s< 2$}, then
\begin{align*}
    &\sum_{t=d(d+1)/2}^{T-1}\Delta_{A_{t+1}}\indicator\Bigg\{n_{t, (i,j)}\leq m^{2/e_s}(c_s\alpha_kCC_s)^{1/e_s}\frac{1}{\Delta_{A_{t+1}}^{2/e_s}}\Bigg\}\nonumber\\
    &\leq m^{2/e_s}(c_s\alpha_kCC_s)^{1/e_s} \Bigg(\Big(\Delta_{e_{(i,j)}^{q_{(i,j)}}}\Big)^{1-2/e_s} + \int_{\Delta_{e_{(i,j)}^{q_{(i,j)}}}}^{\Delta_{e_{(i,j)}^{1}}}x^{-2/e_s}dx\Bigg)\\
    &= m^{2/e_s}(c_s\alpha_kCC_s)^{1/e_s} \Bigg(\Big(\Delta_{e_{(i,j)}^{q_{(i,j)}}}\Big)^{1-2/e_s} + \frac{e_s}{e_s-2}\Bigg(\Delta_{e_{(i,j)}^{1}}^{1-2/e_s}-\Delta_{e_{(i,j)}^{q_{(i,j)}}}^{1-2/e_s}\Bigg)\Bigg)\\
    &\leq m^{2/e_s}(c_s\alpha_kCC_s)^{1/e_s} \Bigg(\Big(\Delta_{e_{(i,j)}^{q_{(i,j)}}}\Big)^{1-2/e_s} - e_s\Bigg(\Delta_{e_{(i,j)}^{1}}^{1-2/e_s}-\Delta_{e_{(i,j)}^{q_{(i,j)}}}^{1-2/e_s}\Bigg)\Bigg)\\
    &\leq 3m^{2/e_s}(c_s\alpha_kCC_s)^{1/e_s} \Delta_{\min}^{1-2/e_s}\,.
\end{align*}
This yield
\begin{align}
    \sum_{t=d(d+1)/2}^{T-1} \Delta_{A_{t+1}}\sum_{k=1}^{k_0}\indicator\{\bA_{t,k}^s\}
    &\leq \sum_{(i,j)\in[d]^2}\sum_{k\in[k_0]}\frac{1}{\beta_km^2}3m^{2/e_s}(c_s\alpha_kCC_s)^{1/e_s} \Delta_{\min}^{1-2/e_s}\notag\\
    &=3(c_sCC_s)^{1/e_s}d^2m^{2/e_s-2}\Delta_{\min}^{1-2/e_s}\sum_{k\in[k_0]}\frac{\alpha_k^{1/e_s}}{\beta_k}\notag\\
    &\leq 189.30^{1/e_s}\Big(c_sCC_s\log(m)\Big)^{1/e_s}d^2m^{2/e_s}\Delta_{\min}^{1-2/e_s}\,.\label{eq:RegretS1}
\end{align}

\underline{Finally, for $e_s>2$},
\begin{align*}
    &\sum_{t=d(d+1)/2}^{T-1}\Delta_{A_{t+1}}\indicator\Bigg\{n_{t, (i,j)}\leq m^{2/e_s}(c_s\alpha_kCC_s)^{1/e_s}\frac{1}{\Delta_{A_{t+1}}^{2/e_s}}\Bigg\}\nonumber\\
    &\leq m^{2/e_s}(c_s\alpha_kCC_s)^{1/e_s} \Bigg(\Big(\Delta_{e_{(i,j)}^{q_{(i,j)}}}\Big)^{1-2/e_s} + \int_{\Delta_{e_{(i,j)}^{q_{(i,j)}}}}^{\Delta_{e_{(i,j)}^{1}}}x^{-2/e_s}dx\Bigg)\\
    &= m^{2/e_s}(c_s\alpha_kCC_s)^{1/e_s} \Bigg(\Big(\Delta_{e_{(i,j)}^{q_{(i,j)}}}\Big)^{1-2/e_s} + \frac{e_s}{e_s-2}\Bigg(\Delta_{e_{(i,j)}^{1}}^{1-2/e_s}-\Delta_{e_{(i,j)}^{q_{(i,j)}}}^{1-2/e_s}\Bigg)\Bigg)\\
    &\leq m^{2/e_s}(c_s\alpha_kCC_s)^{1/e_s} \frac{e_s}{e_s-2}\Big(\Delta_{\max}\Big)^{1-2/e_s}\,,
\end{align*}
and
\begin{align}
    \sum_{t=d(d+1)/2}^{T-1} \Delta_{A_{t+1}}\sum_{k=1}^{k_0}\indicator\{\bA_{t,k}^s\}
    &\leq \sum_{(i,j)\in[d]^2}\sum_{k\in[k_0]}\frac{1}{\beta_km^2}m^{2/e_s}(c_s\alpha_kCC_s)^{1/e_s} \frac{e_s}{e_s-2}\Big(\Delta_{\max}\Big)^{1-2/e_s}\notag\\
    &= (c_sCC_s)^{1/e_s} \frac{e_s}{e_s-2}d^2m^{2/e_s-2}\Delta_{\max}^{1-2/e_s}\sum_{k\in[k_0]}\frac{\alpha_k^{1/e_s}}{\beta_k}\notag\\
    &\leq 63.30^{1/e_s}\Big(c_sCC_s\log(m)\Big)^{1/e_s} \frac{e_s}{e_s-2}d^2m^{2/e_s}\Delta_{\max}^{1-2/e_s}\,.\label{eq:RegretS3}
\end{align}

All in all, we reinject \cref{eq:RegretSigma}, \cref{eq:RegretS2}, \cref{eq:RegretS1} and  \cref{eq:RegretS3} into \cref{eq:SeparationRegret}, yielding
\begin{align}
    &\sum_{t=d(d+1)/2}^{T-1} \Delta_{A_{t+1}}\indicator\{\cE_t\}\notag\\
    &\hspace{1cm}\leq \sum_{t=d(d+1)/2}^{T-1} \Big[\Delta_{A_{t+1}} \sum_{k=1}^{k_0}  \indicator\{\bA_{t,k}\}\Big] + \sum_{s=1}^r \sum_{t=d(d+1)/2}^{T-1} \Big[\Delta_{A_{t+1}} \sum_{k=1}^{k_0}  \indicator\{\bA_{t,k}^s\}\Big]\notag\\
    &\hspace{1cm}\leq 96c_1C\log(m)^2\sum_{i\in [d]}\Big(\max_{a\in\cA / i\in a}\frac{C_{a,i}}{\Delta_{a}}\Big)\notag\\
    &\hspace{1.5cm}+\sum_{s=1}^r \Bigg[\indicator\Big\{e_s=2\Big\}346\Big(c_sCC_s\log(m)\Big)^{1/2}md^2\Bigg(1 + \log\Big(\frac{\Delta_{\max}}{\Delta_{\min}}\Big)\Bigg) \notag\\
    &\hspace{2cm} + \indicator\Big\{1<e_s<2\Big\}63.30^{1/e_s}\Big(c_sCC_s\log(m)\Big)^{1/e_s}d^2m^{2/e_s}\Delta_{\min}^{1-2/e_s}\notag\\
    &\hspace{2cm} + \indicator\Big\{2<e_s\Big\}63.30^{1/e_s}\Big(c_sCC_s\log(m)\Big)^{1/e_s} \frac{e_s}{e_s-2}d^2m^{2/e_s}\Delta_{\max}^{1-2/e_s}\Bigg]\,.
\end{align}

\end{proof}

\subsection{Definition of the sequences \texorpdfstring{$(\alpha_k)$ and $(\beta_k)$}{(alpha\_k) and (beta\_k)}}
\label{app:DefAlphaBeta}

Let \(\beta = 1/5\), \(x>0\). We define \(\beta_0=\alpha_0=1\). For \(k\geq1\), we define 
\begin{equation}
    \beta_k = \beta^k, \qquad \qquad \alpha_k = x\beta^k\,.
\end{equation}

Let's first look for an adequate \(k_0\) for \cref{lem:EtDecomp}, taking \(1\leq k_0 = \lceil\frac{2\log(\sqrt{2}m)}{\log(1/\beta)}+1\rceil\leq (2\log(m)+3)\) is sufficient to have \(0< m\beta_{k_0} < \frac{1}{2m}\).
This choice particularly yields
\begin{equation}
\label{eq:MajorAlphaBeta1}
\begin{aligned}
    \Bigg(\sum_{k=1}^{k_0-1}\frac{\beta_{k-1}-\beta_k}{\alpha_k} + \frac{\beta_{k_0-1}}{\alpha_{k_0}}\Bigg) 
    &= \Bigg(\sum_{k=1}^{k_0-1}\frac{1-\beta}{\beta} + \frac{1}{\beta}\Bigg)\frac{1}{x}\\
    &= \Bigg((k_0-1)\frac{1-\beta}{\beta} + \frac{1}{\beta}\Bigg)\frac{1}{x}\\
    &= \Bigg(4{k_0} + 1\Bigg)\frac{1}{x}\\
    &< 1\,,
\end{aligned}
\end{equation}
for \(x = 4k_0+2\).

Besides,
\begin{equation}
    \label{eq:SumAlphaBeta}
        \sum_{k=1}^{k_0} \frac{\alpha_k}{\beta_k} 
        = (4k_0+2)k_0
        \leq 16\log(m)^2+52\log(m)+42
        \leq 48\log(m)^2\,
\end{equation}
as $m\geq 5$.
Let $c\in\bR$, $c>1$. Then
\begin{align*}
        \sum_{k=1}^{k_0} \frac{\alpha_k^{1/c}}{\beta_k} 
        &= (4k_0+2)^{1/c}\ \sum_{k=1}^{k_0} (\beta^{1/c-1})^{k}\nonumber\\
        &\leq (8\log(m)+14)^{1/c}\sum_{k=1}^{k_0} (5^{\frac{c-1}{c}})^k\\
        &\leq 30^{1/c}\log(m)^{1/c}\sum_{k=1}^{k_0} 5^k\\
        &= 30^{1/c}\log(m)^{1/c}\ 5 \frac{5^{k_0}-1}{5-1}\nonumber\\
        &= 30^{1/c}\log(m)^{1/c}\ \frac{5}{4}(5^{k_0}-1)\nonumber\\
        &= 30^{1/c}\log(m)^{1/c}\ \frac{5}{4}\Bigg(\exp\bigg(\log(5)\Big(\frac{2\log(\sqrt{2}m)}{\log(5)}+2\Big)\bigg)-1\Bigg)\nonumber\\
        &= 30^{1/c}\log(m)^{1/c}\ \frac{5}{4}\Bigg(50m^2-1\Bigg)\nonumber\\
        &\leq 63m^2 \big(30^{1/c} \log(m)^{1/c}\big)\\
        &\leq 63.30^{1/c}m^2\log(m)^{1/c}\,.\nonumber\\
\end{align*}

\subsection{Proof of Lemma~\ref{lem:EtDecomp}}
\label{app:EtDecomp}
\EtDecomp*

\begin{proof}

Let's consider the assumptions of \ref{prop:HPRegret}, $\bA_{t,k}$ and $(\bA_{t,k}^s)_{s\in[r]}$ be the events defined in \cref{eq:Atk} and \cref{eq:Atks}. Let $k_0\in\bN^*$ such that \(0< m\beta_{k_0} < \frac{1}{2m}\) and $t\geq d(d+1)/2$. 

We first prove that the events for $k\geq k_0$ cannot happen. Let $k\geq k_0$, 
\begin{align*}
        \bA_{t,k} &= \bigg\{\sum_{i\in S_{t,k}^1}\frac{\SSigma_{i,i}^*}{C_{A_{t+1},i}} \geq \beta_k m ;\quad  \forall l<k, \sum_{i\in S_{t,l}^1}\frac{\SSigma_{i,i}}{C_{A_{t+1},i}} < \beta_l m\bigg\}\,.
\end{align*}
As \(\beta_km < \beta_{k_0}m < \frac{1}{2m} \leq \min_{i,a}\frac{\SSigma_{i,i}}{C_{a,i}}\) and $(S_{t,l}^1)_l$ is a decreasing sequence of sets, $\sum_{i\in S_{t,k_0}}\frac{\SSigma_{i,i}}{C_{A_{t+1},i}} < \beta_{k_0} m$ imply $S_{t, k_0} = \emptyset$ and $\sum_{i\in S_{t,k}}\frac{\SSigma_{i,i}^*}{C_{A_{t+1}}} = 0 < \beta_k m $. Therefore, $\bA_{t,k}$ cannot happen and we denote
\begin{align*}
    \bA_t
    = \bigcup_{k\geq 1}\bA_{t,k}
    = \bigcup_{k\in[k_0]}\bA_{t,k}
    = \bigcup_{k\in[k_0]}\bigg\{\sum_{i\in S_{t,k}}\frac{\SSigma_{i,i}}{C_{A_{t+1},i}} \geq \beta_k m ;\quad  \forall l<k, \sum_{i\in S_{t,l}^1}\frac{\SSigma_{i,i}}{C_{A_{t+1},i}} < \beta_l m\bigg\}\,.
\end{align*}

Likewise, for $k>k_0$ and $s\in[r]$, 
\begin{align*}
        \bA_{t,k}^s &= \bigg\{|S_{t,k}^s| \geq \beta_k m^2 ;\quad  \forall l<k, |S_{t,k}^s| < \beta_l m^2\bigg\}\,.
\end{align*}
As \(\beta_{k_0}m^2 < 1/2 < 1\) and $(S_{t,l}^s)_l$ is a decreasing sequence of sets, then $|S_{t,k_0}^s| < \beta_{k_0} m^2$ imply $S_{t, k_0}^s = \emptyset$ and $|S_{t,k}^s| = 0<\beta_km^2$. Therefore, $\bA_{t,k}^s$ cannot happen and we denote
\begin{align*}
    \bA_t^s
    = \bigcup_{k\geq 1}\bA_{t,k}^s
    = \bigcup_{k\in[k_0]}\bA_{t,k}^s
    = \bigcup_{k\in[k_0]}\bigg\{|S_{t,k}^s| \geq \beta_k m^2 ;\quad  \forall l<k, |S_{t,k}^s| < \beta_l m^2\bigg\}\,.
\end{align*}

The idea is now to prove that
\begin{equation*}
    \Bigg(\bA_t\cup\bigcup_{s=1}^r \bA_t^{s}\Bigg)^c
    = \bA_t^c\cap\cap_{s=1}^r\Big(\bA_t^{s}\Big)^c
    \subseteq \cE_t^c\,.
\end{equation*}

We begin by considering \((\bA_t^1)^c\),
\begin{align}
\label{eq:Atc}
        (\bA_t)^c 
        &= \cap_{k=1}^{k_0}(\bA_{t,k})^c \notag\\
        &= \cap_{k=1}^{k_0}\Bigg(\bigg\{\sum_{i\in S_{t,k}}\frac{\SSigma_{i,i}}{C_{A_{t+1},i}} < \beta_km\bigg\}\bigcup_{l=1}^{k-1}\bigg\{\sum_{i\in S_{t,l}^1}\frac{\SSigma_{i,i}}{C_{A_{t+1},i}}\geq \beta_lm\bigg\}\Bigg)\notag\\
        &= \cap_{k=1}^{k_0}\bigg\{\sum_{i\in S_{t,k}^1}\frac{\SSigma_{i,i}}{C_{A_{t+1},i}}<\beta_km\bigg\}\,.
\end{align}

Then, under \((\bA_t)^c\), denoting \(S_{t,0} = A_{t+1}\), as \(S_{t,k_0} = \emptyset\) and the sets \(S_{t,k}\) are decreasing with respect to \(k\),
{\allowdisplaybreaks
\begin{align}
\label{eq:PartitionSigmai}
        \sum_{i\in A_{t+1}}\frac{C_{A_{t+1}, i}}{n_{t,(i,i)}} 
        &= \sum_{k=1}^{k_0}\sum_{i\in S_{t,k-1}^1\setminus S_{t,k}^1}\frac{C_{A_{t+1}, i}}{n_{t,(i,i)}} \nonumber \\
        &\leq \sum_{k=1}^{k_0}\sum_{i\in S_{t,k-1}^1\setminus S_{t,k}^1}C_{A_{t+1},i}\frac{1}{3m\alpha_k}\frac{\Delta_{A_{t+1}}^2}{C}\frac{\SSigma_{i,i}^*}{C_{A_{t+1},i}^2} \leftarrow \text{ by Eq.~\eqref{eq:Stk}} \nonumber\\
        &= \frac{\Delta_{A_{t+1}}^2}{c_1mC}\sum_{k=1}^{k_0}\frac{1}{\alpha_k}\sum_{i\in S_{t,k-1}\setminus S_{t,k}}\frac{\SSigma_{i,i}}{C_{A_{t+1},i}}  \nonumber\\
        &= \frac{\Delta_{A_{t+1}}^2}{c_1mC}\sum_{k=1}^{k_0}\frac{1}{\alpha_k}\bigg(\sum_{i\in S_{t,k-1}^1}\frac{\SSigma_{i,i}}{C_{A_{t+1},i}}-\sum_{i\in S_{t,k}^1}\frac{\SSigma_{i,i}}{C_{A_{t+1},i}}\bigg)\nonumber\\
        &= \frac{\Delta_{A_{t+1}}^2}{c_1mC}\sum_{k=0}^{k_0-1}\frac{1}{\alpha_{k+1}}\bigg(\sum_{i\in S_{t,k}^1}\frac{\SSigma_{i,i}}{C_{A_{t+1},i}}-\sum_{i\in S_{t,k+1}^1}\frac{\SSigma_{i,i}}{C_{A_{t+1},i}}\bigg)\nonumber\\
        &= \frac{\Delta_{A_{t+1}}^2}{c_1mC}\Bigg(\frac{1}{\alpha_1}\sum_{i\in S_{t,0}^1}\frac{\SSigma_{i,i}}{C_{A_{t+1},i}}+\sum_{k=1}^{k_0-1}\bigg(\frac{1}{\alpha_{k+1}}-\frac{1}{\alpha_k}\bigg)\sum_{i\in S_{t,k}^1}\frac{\SSigma_{i,i}}{C_{A_{t+1},i}}\Bigg)\nonumber\\
        &< \frac{\Delta_{A_{t+1}}^2}{c_1mC}\Bigg(\frac{m}{\alpha_1} + \sum_{k=1}^{k_0-1}m\beta_k\bigg(\frac{1}{\alpha_{k+1}}-\frac{1}{\alpha_{k}}\bigg)\Bigg) \leftarrow S_{t,0} = A_{t+1} \text{ and Eq.~\eqref{eq:Atc}}\nonumber\\
        &= \frac{\Delta_{A_{t+1}}^2}{c_1C}\Bigg(\sum_{k=1}^{k_0-1}\frac{\beta_{k-1}-\beta_k}{\alpha_k} + \frac{\beta_{k_0-1}}{\alpha_{k_0}}\Bigg) \nonumber\\
        &\leq \frac{1}{c}\frac{\Delta_{A_{t+1}}^2}{C}\,. \leftarrow \text{ by Eq.~\eqref{eq:MajorAlphaBeta1}}
\end{align}}

Likewise for $s\in[r]$,
\begin{equation}
\label{eq:Atcs}
    \begin{aligned}
        (\bA_t^s)^c 
        &= \cap_{k=1}^{k_0}\bigg\{|S_{t,k}^s|<\beta_km^2\bigg\}\,.
    \end{aligned}
\end{equation}
Denoting \(S_{t,0}^s = A_{t+1}\times A_{t+1}\), as \(S_{t,k_0}^s = \emptyset\),
{\allowdisplaybreaks
\begin{align}
        \sum_{(i,j)\in A_{t+1}}\frac{C_s}{n_{t,(i,j)}^{e_s}}
        &= \sum_{k=1}^{k_0}\sum_{(i,j)\in S_{t,k-1}\setminus S_{t,k}}\frac{C_s}{n_{t,i}^{e_s}} \nonumber\\
        &\leq \sum_{k=1}^{k_0}\sum_{(i,j)\in S_{t,k-1}\setminus S_{t,k}}C_s\frac{1}{c_2m^2\alpha_k}\frac{\Delta_{A_{t+1}}^2}{C}\frac{1}{C_s} \leftarrow\text{ by Eq.~\eqref{eq:Stks}}\nonumber\\
        &= \frac{\Delta_{A_{t+1}}^2}{c_sm^2C}\sum_{k=1}^{k_0}\frac{1}{\alpha_k}\Big(|S_{t,k-1}|-|S_{t,k}|\Big)\nonumber\\
        &= \frac{\Delta_{A_{t+1}}^2}{c_sm ^2C}\sum_{k=0}^{k_0-1}\frac{1}{\alpha_{k+1}}\Big(|S_{t,k}|-|S_{t,k+1}|\Big)\nonumber\\
        &=\frac{\Delta_{A_{t+1}}^2}{c_sm^2C}\Bigg(\frac{| S_{t,0}|}{\alpha_1} + \sum_{k=1}^{k_0-1}| S_{t,k}|\bigg(\frac{1}{\alpha_{k+1}}-\frac{1}{\alpha_{k}}\bigg)\Bigg)\nonumber\\
        &< \frac{\Delta_{A_{t+1}}^2}{c_sm^2C}\Bigg(\frac{1}{\alpha_1}m^2 + \sum_{k=1}^{k_0-1}\beta_km^2\bigg(\frac{1}{\alpha_{k+1}}-\frac{1}{\alpha_{k}}\bigg)\Bigg) \leftarrow \text{ by Eq.~\eqref{eq:Atcs}}\nonumber\\
        &= \frac{\Delta_{A_{t+1}}^2}{c_2C}\Bigg(\sum_{k=1}^{k_0-1}\frac{\beta_{k-1}-\beta_k}{\alpha_k} + \frac{\beta_{k_0-1}}{\alpha_{k_0}}\Bigg)\nonumber\\
        &\leq \frac{1}{c_2}\frac{\Delta_{A_{t+1}}^2}{C}\,.\leftarrow \text{ by Eq.~\eqref{eq:MajorAlphaBeta1}} 
\label{eq:Partitions}
\end{align}}

Therefore, under  $\bA_t^c\cap\cap_{s=1}^r\Big(\bA_t^{s}\Big)^c$, summing \cref{eq:PartitionSigmai} and \cref{eq:Partitions}
\begin{align*}
    \sum_{i\in A_{t+1}}\frac{C_{A_{t+1},i}}{n_{t,(i,j)}} + \sum_{s\in[r]}\Bigg[\sum_{(i,j)\in A_{t+1}} \frac{C_s}{n_{t,(i,j)}^{e_s}}\Bigg]
    &< \Bigg(\frac{1}{c}+\sum_{s\in[r]}\frac{1}{c_s}\Bigg)\frac{\Delta_{A_{t+1}}^2}{C}\\
    &= \frac{\Delta_{A_{t+1}}^2}{C}\,,
\end{align*}
which contradict  \cref{eq:DeltaAtDecomp}  and thus imply \(\cE_t^c\). By contraposition, we have proved that \(\cE_t\) imply $\bA_t\cap\bigcup_{s=1}^r\big(\bA_{t}^s\big)$. Therefore,
\begin{equation*}
        \indicator\{\cE_t\} \leq \sum_{k=1}^{k_0}\indicator\{\bA_{t,k}\}+\sum_{s=1}^r\sum_{k=1}^{k_0}\indicator\{\bA_{t,k}^s\}\,.
\end{equation*}
\end{proof}

\section{Details for \OLSUCBC (Section~\ref{sec:TechnicalOLSUCBC})}
\label{app:OLSUCBC}

\subsection{Proof of Proposition~\ref{prop:OLSUCBCHPRegret}}
\OLSUCBCHPRegret*
\label{app:OLSUCBCHPRegret}

\begin{proof}
The objective is to use \cref{prop:HPRegret}.
\HPRegret*

We need to check that its hypotheses are satisfied. Let $t\geq d(d+1)/2$ and $\delta>0$, then we have the Lemma.
\begin{restatable}{lemma}{DeltaAtDecomp}
\label{lem:DeltaAtDecomp}
    Let $\delta>0$ and $t\geq d(d+1)/2$. Then under $\{\cG_{t}\cap \cC_t\}$, {\normalfont \OLSUCBC} satisfies
\begin{align*}
    \frac{\Delta_{A_{t+1}}^2}{f_{T, \delta}^2} 
    &\sum_{i\in A_{t+1}} \frac{4\bar \sigma_{A_{t+1}, i}^2}{n_{t,(i,i)}} + \sum_{(i,j)\in A_{t+1}}\frac{(4d+h_{t, \delta}^2)\|B\|^2_\infty}{n_{t,(i,j)}^2} + \sum_{(i,j)\in A_{t+1}} \frac{5h_{t, \delta}\|B\|^2_{\infty}}{n_{t,(i,j)}^{3/2}} + \sum_{(i,j)\in A_{t+1}}\frac{\|B\|^2_{\infty}}{n_{t,(i,j)}^3}\,, 
\end{align*}
where $\bar\sigma_{A_{t+1},i}^2 = 2\sum_{j\in A_{t+1} / \SSigma_{j,j}\leq\SSigma_{i,i}}(\SSigma_{i,j})_+ \leq 2 \sigma_{A_{t+1},i}^2$.
\end{restatable}

Therefore, we can choose $r=3$, $e=(2,\ 3/2,\ 3)$ and $(\cE_t) = (\cG_t\cap\cC_t)$. Taking $c=(4, 4, 4, 4)$ and identifying the rest of the coefficients yields that \OLSUCBC satisfies

\begin{align*}
        &\sum_{t=d(d+1)/2}^{T-1} \Delta_{A_{t+1}}\indicator\{\cG_t\cap\cC_t\}\notag\\
    &\hspace{1cm}\leq 384f_{T, \delta}^2\log(m)^2\sum_{i\in [d]}\Big(\max_{a\in\cA / i\in a}\frac{\bar \sigma_{a,i}^2}{\Delta_{a}}\Big)\\
    &\hspace{1.5cm}+ 692\|B\|_\infty f_{T, \delta}(4d+h_{t, \delta}^2)^{1/2}\log(m)^{1/2}\Bigg(1 + \log\Big(\frac{\Delta_{\max}}{\Delta_{\min}}\Big)\Bigg)\notag\\
    &\hspace{1.5cm} + 1460\|B\|_\infty^{4/3}f_{T, \delta}^{4/3}h_{t, \delta}^{2/3}\log(m)^{2/3}d^2m^{2/3}\Delta_{\min}^{-1/3}\\
    &\hspace{1.5cm} + 296\|B\|_\infty^{2/3}f_{T,\delta}^{2/3}\log(m)^{1/3}d^2m^{2/3}\Delta_{\max}^{1/3}\,.
    \end{align*}

As
\begin{itemize}[nosep, label=]
    \item $\smash  f_{t,\delta} = 6\log(1/\delta) + 6\Big(\log(t)+ (d+2)\log(\log(t))\Big) + 3d\Big(2\log(2) +\log(1+e)\Big)$,
    \item $\smash h_{t,\delta} = \Big(1+2\log(1/\delta)+2\log\big(t\log(t)^2d(d+1)\big)+\log(1+t)\Big)^{1/2}$,
\end{itemize}
we deduce 

\begin{align}
        &\sum_{t=d(d+1)/2}^{T-1} \Delta_{A_{t+1}}\indicator\{\cG_t\cap\cC_t\}= O\Bigg( \log(T)^2\log(m)^2\sum_{i\in [d]}\Big(\max_{a\in\cA / i\in a}\frac{\bar \sigma_{a,i}^2}{\Delta_{a}}\Big)\Bigg)\,.
    \end{align}

\end{proof}

\subsection{Proof of Lemma~\ref{lem:DeltaAtDecomp}}
\DeltaAtDecomp*

\begin{proof}
    Let $t\geq d(d+1)/2$ and $\delta>0$. \OLSUCBC statisfies the following Lemma.

\begin{restatable}{lemma}{DeltaGtC}
\label{lem:DeltaGtC}
Let \(t\geq d(d+1)/2\) and $\delta>0$. Then for \OLSUCBC, under the event \(\{\cG_t\cap\cC_t\}\), 
$$\Delta_{A_{t+1}} \leq f_{t, \delta}\big(\|\diagcounts_t^{-1}A_{t+1}\|_{\Rdesignmat_t} + \|\diagcounts_t^{-1}A_{t+1}\|_{\hat{\Rdesignmat}_t}\big)\,.$$
\end{restatable}
Therefore, under $\{\cG_t \cap \cC_t\}$, and
    \begin{align}
        0\leq \Delta_{A_{t+1}} &\leq f_{t, \delta}\big(\|\diagcounts_t^{-1}A_{t+1}\|_{\Rdesignmat_t} + \|\diagcounts_t^{-1}A_{t+1}\|_{\hat{\Rdesignmat}_t}\big)\notag\\
        \Delta_{A_{t+1}}^2 &\leq f_{t, \delta}^2\big(\|\diagcounts_t^{-1}A_{t+1}\|_{\Rdesignmat_t} + \|\diagcounts_t^{-1}A_{t+1}\|_{\hat{\Rdesignmat}_t}\big)^2 \nonumber\\
        &\leq 2f_{t, \delta}^2\big(\|\diagcounts_t^{-1}A_{t+1}\|_{\Rdesignmat_t}^2 + \|\diagcounts_t^{-1}A_{t+1}\|_{\hat{\Rdesignmat}_t}^2\big) \nonumber\\
        \frac{\Delta_{A_{t+1}}^2}{2f_{t\delta}^2} &\leq \|\diagcounts_t^{-1}A_{t+1}\|_{\Rdesignmat_t}^2 + \|\diagcounts_t^{-1}A_{t+1}\|_{\hat{\Rdesignmat}_t}^2\,. \label{eq:DeltaDecomp}
    \end{align}
    From, here, we develop the right-hand side,
    \begin{align*}
        \|\diagcounts_t^{-1}A_{t+1}\|_{\Rdesignmat_t}^2 
        &= A_{t+1}^\top \diagcounts_t^{-1}\Rdesignmat_t\diagcounts_t^{-1} A_{t+1}\\
        &= \sum_{(i,j)\in A_{t+1}} \frac{(\Rdesignmat_t)_{i,j}}{n_{t,(i,i)}n_{t, (j,j)}}\,.
    \end{align*}
    As $\Rdesignmat_t = \sum_{s=1}^t\diagmat_{A_s}{\SSigma}^*\diagmat_{A_s} + \diagmat_{{\SSigma}^*} \diagcounts_t+\|B\|^2\diagI$, we get
    \begin{align*}
        \|\diagcounts_t^{-1}A_{t+1}\|_{\Rdesignmat_t}^2 
        &= \sum_{(i,j)\in A_{t+1}} \frac{n_{t,(i,j)}\SSigma_{i,j}}{n_{t,(i,i)}n_{t, (j,j)}} + \sum_{i\in A_{t+1}}\frac{n_{t(i,i)}\SSigma_{i,i}}{n_{t,(i,i)}^2} + \sum_{i\in A_{t+1}}\frac{\|B\|^2}{n_{t,(i,i)}^2}\\
        &\leq \sum_{i\in A_{t+1}} \Big(2\sum_{j\in A_{t+1}/\SSigma_{j,j}\leq \SSigma_{i,i}}\frac{n_{t,(i,j)}\SSigma_{i,j}}{n_{t,(i,i)}n_{t, (j,j)}} \Big)+ \sum_{i\in A_{t+1}}\frac{\|B\|^2}{n_{t,(i,i)}^2}\,,
    \end{align*}
    by rearranging terms. 
    
    Now as for all $(i,j)\in[d]^2$, $n_{t, (i,j)}\leq \min\{n_{t, (i,i)}, n_{t, (j,j)}\}$, then 
    \begin{align*}
        \|\diagcounts_t^{-1}A_{t+1}\|_{\Rdesignmat_t}^2 
        &\leq \sum_{i\in A_{t+1}} \frac{1}{n_{t,(i,i)}}\Big(2\sum_{j\in A_{t+1}/\SSigma_{j,j}\leq \SSigma_{i,i}}\SSigma_{i,j} \Big) + \sum_{i\in A_{t+1}}\frac{\|B\|^2}{n_{t,(i,i)}^2}\,.
    \end{align*}
    Denoting $\bar\sigma^2_{A_{t+1}, i}=2\sum_{j\in A_{t+1}/\SSigma_{j,j}\leq \SSigma_{i,i}}(\SSigma_{i,j})_+$ yields
    \begin{equation}
        \|\diagcounts_t^{-1}A_{t+1}\|_{\Rdesignmat_t}^2 
        \leq \sum_{i\in A_{t+1}} \frac{\bar\sigma_{A_{t+1}, i}^2}{n_{t,(i,i)}} + \sum_{i\in A_{t+1}}\frac{\|B\|^2}{n_{t,(i,i)}^2}\,. \label{eq:DeltaDecomp1}
    \end{equation}
    The second term from the right-hand side of \cref{eq:DeltaDecomp} is developed in the same manner but involves more terms.
    \begin{align*}
        \|\diagcounts_t^{-1}A_{t+1}\|_{\hat\Rdesignmat_t}^2 
        &= A_{t+1}^\top \diagcounts_t^{-1}\hat\Rdesignmat_t\diagcounts_t^{-1} A_{t+1}\\
        &= \sum_{(i,j)\in A_{t+1}} \frac{(\hat\Rdesignmat_t)_{i,j}}{n_{t,(i,i)}n_{t, (j,j)}}\,.
    \end{align*}
    We remind that $\hat \Rdesignmat_t = \sum_{s=1}^t\diagmat_{A_s}\hat{\SSigma}_t\diagmat_{A_s} + \diagmat_{\hat{\SSigma}_t} \diagcounts_t+\|B\|^2\diagI$ where for all $(i,j)\in[d]^2$, 
    \begin{align*}
        \hat{\SSigma}_{t,(i,j)} &= \hat{\chi}_{t,(i,j)} + \frac{B_iB_j}{4}\bigg(\frac{5h_{t,\delta}}{\sqrt{n_{t,(i,j)}}}+\frac{h_{t,\delta}^2}{n_{t,(i,j)}}+\frac{1}{n_{t,(i,j)}^2}\bigg)\,.
    \end{align*}
    Being under the event $\cC$, \cref{prop:ChiDeviations2} yields $\hat{\SSigma}_{t,(i,j)} \leq  \SSigma_{i,j} + \frac{B_iB_j}{4}\bigg(\frac{5h_{t,\delta}}{\sqrt{n_{t,(i,j)}}}+\frac{h_{t,\delta}^2}{n_{t,(i,j)}}+\frac{1}{n_{t,(i,j)}^2}\bigg)$. Then,
    \begin{align}
    \|\diagcounts_t^{-1}&A_{t+1}\|_{\hat\Rdesignmat_t}^2 
        \leq \sum_{(i,j)\in A_{t+1}} \frac{n_{t,(i,j)}\SSigma_{i,j}}{n_{t,(i,i)}n_{t, (j,j)}} \notag\\
        & \hspace{.6cm}+ \sum_{(i,j)\in A_{t+1}} \frac{n_{t,(i,j)}}{n_{t,(i,i)}n_{t, (j,j)}}\frac{B_iB_j}{4}\bigg(\frac{5h_{t,\delta}}{\sqrt{n_{t,(i,j)}}}+\frac{h_{t,\delta}^2}{n_{t,(i,j)}}+\frac{1}{n_{t,(i,j)}^2}\bigg) \notag\\
        &\hspace{.6cm} + \sum_{i\in A_{t+1}}\frac{n_{t(i,i)}\SSigma_{i,i}}{n_{t,(i,i)}^2} + \sum_{i\in A_{t+1}}\frac{B_i^2}{4n_{t,(i,i)}}\bigg(\frac{5h_{t,\delta}}{\sqrt{n_{t,(i,i)}}}+\frac{h_{t,\delta}^2}{n_{t,(i,i)}}+\frac{1}{n_{t,(i,i)}^2}\bigg)\notag\\
        &\hspace{.6cm}+\sum_{i\in A_{t+1}}\frac{\|B\|^2}{n_{t,(i,i)}^2}\notag\\
        &\leq \sum_{i\in A_{t+1}} \frac{\bar \sigma_{A_{t+1}, i}^2}{n_{t,(i,i)}} + \sum_{i\in A_{t+1}}\frac{\|B\|^2}{n_{t,(i,i)}^2} \notag\\
        &\hspace{.6cm}+ \sum_{(i,j)\in A_{t+1}} \frac{5h_{t, \delta}\|B\|^2_{\infty}}{4}\frac{1}{n_{t,(i,j)}^{3/2}} + \sum_{(i,j)\in A_{t+1}} \frac{h_{t, \delta}^2\|B\|^2_{\infty}}{4}\frac{1}{n_{t,(i,j)}^2}+ \sum_{(i,j)\in A_{t+1}}\frac{\|B\|^2_{\infty}}{4}\frac{1}{n_{t,(i,j)}^3}\notag\\
        &\hspace{.6cm}+ \sum_{i\in A_{t+1}} \frac{5h_{t, \delta}\|B\|^2_\infty}{4}\frac{1}{n_{t,(i,i)}^{3/2}} + \sum_{i\in A_{t+1}} \frac{h_{t, \delta}^2\|B\|^2_\infty}{4}\frac{1}{n_{t,(i,i)}^2}+ \sum_{i\in A_{t+1}}\frac{\|B\|^2_\infty}{4}\frac{1}{n_{t,(i,i)}^3}\,.\label{eq:DeltaDecomp2}
    \end{align}
    Reinjecting \cref{eq:DeltaDecomp1} and \cref{eq:DeltaDecomp2} into \cref{eq:DeltaDecomp} yields
    \begin{align*}
        \frac{\Delta_{A_{t+1}}^2}{2f_{t,\delta}^2} 
        &\leq \|\diagcounts_t^{-1}A_{t+1}\|_{\Rdesignmat_t}^2 + \|\diagcounts_t^{-1}A_{t+1}\|_{\hat{\Rdesignmat}_t}^2\\
        &\leq \sum_{i\in A_{t+1}} \frac{2\bar \sigma_{A_{t+1}, i}^2}{n_{t,(i,i)}} + \sum_{(i,j)\in A_{t+1}}\frac{(2d+h_{t, \delta}^2/2)\|B\|^2_\infty}{n_{t,(i,j)}^2}\\
        &\hspace{1cm}+ \sum_{(i,j)\in A_{t+1}} \frac{5h_{t, \delta}\|B\|^2_{\infty}/2}{n_{t,(i,j)}^{3/2}} + \sum_{(i,j)\in A_{t+1}}\frac{\|B\|^2_{\infty}/2}{n_{t,(i,j)}^3}\,.
    \end{align*}
    The desired inequality just comes from $f_{T, \delta}\geq f_{t, \delta}$
\end{proof}

\subsection{Proof of Lemma~\ref{lem:DeltaGtC}}
\DeltaGtC*

\begin{proof}
Let $t\geq d(d+1)/2$ and $\delta>0$. The error in estimating the mean reward for action $a$ with $\langle a, \hat \mu_t\rangle$ is bounded as
\begin{align*}
        \big|a^\top(\hat{\mu}_t-\mu) \big| 
        &\leq \|\diagcounts_t^{-1}a\|_{\Rdesignmat_t} \ \big\|\textstyle \sum_{s=1}^t \diagmat_{A_s} \eta_s\big\|_{\Rdesignmat_t^{-1}}\,.
\end{align*}

The definition of $\cG_t = \big\{\ \big\|\sum_{s=1}^t \diagmat_{A_s} \eta_s\big\|_{\Rdesignmat_t^{-1}} \leq f_{t, \delta} \big\}$ yields that in this event,
\[
    \langle A_{t+1}, \hat{\mu}_t \rangle \leq \langle A_{t+1}, \mu \rangle + f_{t, \delta}\|\diagcounts_t^{-1}A_{t+1}\|_{\Rdesignmat_t}\,,
\]
and
\[
\langle a^*, \mu \rangle - f_{t, \delta}\|\diagcounts_t^{-1}a^*\|_{\Rdesignmat_t} \leq \langle a^*, \hat{\mu}_t \rangle \,.
\]

By the definition of $A_{t+1}$ for \OLSUCBC in~\eqref{eq:OLSUCBC},
\begin{equation*}
  \hspace*{-5pt}  \langle a^*, \hat{\mu}_t \rangle + f_{t, \delta}\|\diagcounts_t^{-1}a^*\|_{\hat \Rdesignmat_t} 
     \leq \langle A_{t+1}, \hat{\mu}_t \rangle + 
    f_{t, \delta}\|\diagcounts_t^{-1}A_{t+1}\|_{\hat \Rdesignmat_t}\,.
\end{equation*}
Combining the expressions gives
\begin{align*}
    &\langle a^*, \mu \rangle + f_{t, \delta}(\|\diagcounts_t^{-1}a^*\|_{\hat \Rdesignmat_t}-\|\diagcounts_t^{-1}a^*\|_{\Rdesignmat_t}) \leq \langle A_{t+1}, \mu \rangle + f_{t, \delta}(\|\diagcounts_t^{-1}A_{t+1}\|_{\Rdesignmat_t} + \|\diagcounts_t^{-1}A_{t+1}\|_{\hat \Rdesignmat_t})\,.
\end{align*}

Now, we use the fact that under $\cC_t$, $\hat{\Rdesignmat}_t$ uses coefficient-wise upper bounds of $\SSigma$, which yields that
$$
\|\diagcounts_t^{-1}a^*\|_{\Rdesignmat_t}^2 \leq \|\diagcounts_t^{-1}a^*\|_{\hat \Rdesignmat_t}^2\,.
$$

Rearranging terms the desired result.
\end{proof}

\subsection{Proof of the gap-free bound}
\label{app:OLSUCBCGapFree}
\RegretOLSUCBC*

\begin{proof}
     Let \(\Delta>0\), then
\begin{align*}
    \sum_{t=d(d+1)}^{T-1}\Delta_{A_{t+1}}\indicator\{\cG_{t}\cap \cC_t\}
    &= \sum_{t=d(d+1)}^{T-1}\Delta_{A_{t+1}}\indicator\Big\{\cG_{t}\cap\cC_t\cap (\Delta_{A_{t+1}}\leq \Delta)\Big\} \\
    & \hspace{3cm} + \sum_{t=d(d+1)}^{T-1}\Delta_{A_{t+1}}\indicator\Big\{\cG_{t}\cap\cC_t\cap (\Delta_{A_{t+1}}>\Delta)\Big\}\\
    &\leq T\Delta + \sum_{t=d(d+1)}^{T-1}\Delta_{A_{t+1}}\indicator\Big\{\cG_{t}\cap\cC_t\cap (\Delta_{A_{t+1}}>\Delta)\Big\}\,.
\end{align*}
Adapting Proposition \ref{prop:HPRegret} to account for \(\Delta_{A_{t+1}}>\Delta\) yields
\begin{align*}
    \sum_{t=0}^{T-1} \Delta_{A_{t+1}}&\indicator\{\cG_{t}\cap\cC_t\cap(\Delta_{A_{t+1}}>\Delta)\} 
    \lesssim \frac{1}{\Delta} \log(T)^{2}\log(m)^2\sum_{i\in [d]}\Big(\max_{a\in\cA / i\in a}\sigma_{a,i}^2\Big)\,.
\end{align*}
where $\lesssim$ is an inequality up to constant factors (when $T$ varies).

Balancing $T\Delta$ and $ \frac{1}{\Delta} \log(T)^2\log(m)^2\sum_{i\in [d]}\Big(\max_{a\in\cA / i\in a}\sigma_{a,i}^2\Big)$ yields
\begin{equation*}
    \bE[R_T] = O\Bigg(\log(m)\log(T)\sqrt{T}\sqrt{\sum_{i\in [d]}\max_{a\in\cA/i\in a}\sigma_{a,i}^2}\Bigg)\,.
\end{equation*}
\end{proof}

\section{Details for \COSV (Section~\ref{sec:TechnicalCOSV})}
\label{app:COSV}

\subsection{Proof for Lemma~\ref{lem:ProbaH}}
\begin{restatable}{lemma}{ProbaH}
\label{lem:ProbaH}
Let $\delta>0$. Then {\COSV\normalfont} satisfies 
$$
\sum_{s=d(d+A)/2}^T\bP(\cH_t^c) \leq \delta\sum_{t=1}^T \frac{1}{t\log(t)^2}.
$$
\end{restatable}

\begin{proof}
    Let $\delta>0$, $t\geq d(d+1)/2$. We remind
\begin{equation}
    \textstyle \cH_t= \Bigg\{\forall i\in [d],\ \bigg|\bigg(\hat\mu_{t,i}+(1+g_{t,\delta})f_{t,\delta}\frac{(\hat{\Rdesignmat}_{t,i})^{1/2}}{n_{t,i}}\bigg)-\tilde\mu_{t,i}\bigg|\leq g_{t,\delta}f_t\frac{(\hat{\Rdesignmat}_{t,i})^{1/2}}{n_{t,i}} \Bigg\}\,,
\end{equation}
where $g_{t,\delta}=\Big(2\log\big(2dt\log(t)^2\big)+\log(1/\delta)\Big)^{1/2}$. 

Conditionally to $\cF_{t} = \sigma(A_1, Y_1, \dots, A_{t}, Y_{t})$, for all $i\in[d]$
\begin{equation*}
    \textstyle \tilde\mu_{t,i} \sim \cN \bigg(\hat{\mu}_{t,i}+(1+g_{t,\delta})f_{t,\delta}\frac{\hat{\Rdesignmat}_{t,(i,i)}^{1/2}}{n_{t,(i,i)}},\,\,
    f_{t,\delta}^2\frac{\hat{\Rdesignmat}_{t,(i,i)}}{n_{t,(i,i)}^2}\bigg)\,.
\end{equation*}

Let $i\in a^*$. Then Gaussian concentration yields
\begin{align*}
    \bP_{\cF_t}\Bigg(\Bigg|\Big(\hat{\mu}_{t,i} + (1+g_{t,\delta})f_{t,\delta}\frac{\hat{\Rdesignmat}_{t,(i,i)}^{1/2}}{n_{t,(i,i)}}\Big) - \tilde\mu_{t,i}\Bigg| > \sqrt{2\log(2dt\log(t)^2/\delta)}f_{t,\delta}\frac{\hat{\Rdesignmat}_{t,(i,i)}^{1/2}}{n_{t,(i,i)}}\Bigg) \leq \frac{\delta}{dt\log(t)^2}\,,
\end{align*}
and 
\begin{align*}
    \bP\Bigg(\Bigg| \Big(\hat{\mu}_{t,i} + (1+g_{t,\delta})f_{t,\delta}\frac{\hat{\Rdesignmat}_{t,(i,i)}^{1/2}}{n_{t,(i,i)}}\Big) - \tilde\mu_{t,i}\Bigg|> \sqrt{2\log(2dt\log(t)^2/\delta)}f_{t,\delta}\frac{\hat{\Rdesignmat}_{t,(i,i)}^{1/2}}{n_{t,(i,i)}}\Bigg) \leq \frac{\delta}{dt\log(t)^2}\,.
\end{align*}
by integration.

A union bound on $i\in[d]$ and $t\geq d(d+1)/2$ yields the result
\begin{align*}
\sum_{t=d(d+1)/2}^T\bP(\cH_t^c)\leq \sum_{t\in[T]}\frac{\delta}{t(\log(t)^2}\,.
\end{align*}
\end{proof}

\subsection{Proof for Proposition~\ref{prop:COSVHPRegret}}
\label{app:COSVHPRegret}
\COSVHPRegret*

\begin{proof}
    Let $\delta>0$. We first make use of the following Lemma.
    
\begin{restatable}{lemma}{DecompDeltaSampling}
    \label{lem:DecompDeltaSampling}
    Let $t\geq d(d+1)/2$, $\delta>0$. Then for {\normalfont \COSV}, under $\{\cG_t \cap \cC_t \cap \cH_t\}$,
    \begin{align}
    \frac{\Delta_{A_{t+1}}}{f_{T, \delta}^2g_{T, \delta}^2}
    &\leq \sum_{i\in A_{t+1}} \frac{40m\SSigma_{i,i}}{n_{t,(i,i)}} +  \sum_{i\in A_{t+1}}\frac{29mdh_{t, \delta}^2\|B\|_\infty^2}{n_{t,(i,i)}^2}\\
    &\hspace{2cm}+\sum_{i\in A_{t+1}}\frac{45mh_{t, \delta}\|B\|_{\infty}^2}{n_{t, (i,i)}^{3/2}} + \sum_{i\in A_{t+1}}\frac{9m\|B\|_{\infty}^2}{n_{t, (i,i)}^3}\,.
    \end{align}
\end{restatable}

This enables to use a ``modified'' version of \cref{prop:HPRegret}, which do not consider covariances.
\begin{restatable}{proposition}{HPRegret2}
    \label{prop:HPRegret2}
    Let $r\in\bN$, $e\in(1, +\infty)^r$. Let $(\cE_t)_{t\geq d(d+1)/2}$ be a sequence of events such that for all $t\geq d(d+1)/2$, under $\cE_t$,
    \begin{equation}
        \frac{\Delta_{A_{t+1}}^2}{C} \leq \sum_{i\in A_{t+1}}\frac{C_{i}}{n_{t,(i,j)}} + \sum_{s\in[r]}\Bigg[\sum_{i\in A_{t+1}} \frac{C_s}{n_{t,(i,j)}^{e_s}}\Bigg]\,
    \end{equation}
    where $C$ and $(C_s)_{s\in[r]}$ are problem-dependent positive constants. $C_i$ is a positive constant depending on $i$ so that, $C_{i}\leq 2m\SSigma_{i,i}$. Let $c\in\bR_+^*$ and $(c_s)_{s\in[r]}\in(\bR_+^*)^r$ be positive constants such that $1/c+\sum_{s\in[r]}1/c_s=1$.
    
    Then,
    \begin{align}
        &\sum_{t=d(d+1)/2}^{T-1} \Delta_{A_{t+1}}\indicator\{\cE_t\}\notag\\
    &\hspace{1cm}\leq 96c_1C\log(m)^2\sum_{i\in [d]}\Big(\frac{C_{i}}{\Delta_{i,\min }}\Big)\notag\\
    &\hspace{1.5cm}+\sum_{s=1}^r \Bigg[\indicator\Big\{e_s=2\Big\}346\Big(c_sCC_s\log(m)\Big)^{1/2}m^{3/2}d\Bigg(1 + \log\Big(\frac{\Delta_{\max}}{\Delta_{\min}}\Big)\Bigg) \notag\\
    &\hspace{2cm} + \indicator\Big\{1<e_s<2\Big\}60.30^{1/e_s}\Big(c_sCC_s\log(m)\Big)^{1/e_s}dm^{1+1/e_s}\Delta_{\min}^{1-2/e_s}\notag\\
    &\hspace{2cm} + \indicator\Big\{2<e_s\Big\}60.30^{1/e_s}\Big(c_sCC_s\log(m)\Big)^{1/e_s} \frac{e_s}{e_s-2}dm^{1+1/e_s}\Delta_{\max}^{1-2/e_s}\Bigg]\,,
    \end{align}
    where $(\alpha_k)_{k\in\bN^*}$, $(\beta_k)_{k\in\bN^*}$ and $k_0\in\bN^*$ are defined in \cref{app:DefAlphaBeta}.
\end{restatable}

Applied to \COSV, this yields
\begin{align*}
    &\sum_{t=d(d+1)/2}^{T-1} \Delta_{A_{t+1}}\indicator\big\{\cG_{t}\cap \cC\cap\cH\big\}\notag\\
    &\hspace{1cm}\leq 15360f_{T, \delta}^2g_{T, \delta}^2\log(m)^2\sum_{i\in [d]}\Big(\frac{m\SSigma_{i,i}}{\Delta_{i, \min }}\Big) \notag\\
    &\hspace{1.5cm}+ 3727f_{T, \delta}g_{T, \delta}h_{T, \delta}(\log(m))^{1/2}\|B\|_{\infty}m^2d^2\Bigg(1 + \log\Big(\frac{\Delta_{\max}}{\Delta_{\min}}\Big)\Bigg)\notag\\
    &\hspace{1.5cm}+ 7329(f_{T, \delta}g_{T, \delta})^{4/3}h_{T, \delta}^{2/3}\log(m)^{2/3}\|B\|^{4/3}_{\infty}m^{5/3}d\Delta_{\min}^{-1/3}\\
    &\hspace{1.5cm}+ 3745(f_{T, \delta}g_{T, \delta})^{2/3}\log(m)^{1/3}\|B\|_{\infty}^{2/3}m^{4/3}d\Delta_{\max}^{1/3}\,,
\end{align*}
where
\begin{itemize}[nosep, label=]
    \item $\smash f_{t,\delta} = 6\log(1/\delta) + 6\Big(\log(t)+ (d+2)\log(\log(t))\Big) + 3d\Big(2\log(2) +\log(1+e)\Big)$,
    \item $\smash h_{t,\delta} = (1+2\log(1/\delta)+2\log(d(d+1))+\log(1+t))^{1/2}$,
    \item $\smash g_{t,\delta}=(1+\log(2dt\log(t)^2)+\log(1/\delta))^{1/2}$. 
\end{itemize}

We deduce 
\begin{align*}
    \bE\bigg[\sum_{t=d(d+1)}^{T-1}\Delta_{A_{t+1}}\indicator\big\{\cG_{t}\cap \cC\cap\cH\big\}\bigg] = O \Bigg(  \log(T)^{3}\log (m)^2\Big(\sum_{i=1}^d \frac{m\SSigma_{i,i}}{\Delta_{i,\min}} \Big)\Bigg)\,.
\end{align*}

\end{proof}

\subsection{Proof for Lemma~\ref{lem:DecompDeltaSampling}}
\DecompDeltaSampling*

\begin{proof}
    Let $t\geq d(d+1)/2$ and $\delta>0$. Then
\begin{align*}
    \Delta_{A_{t+1}} 
    &= \langle a^*-A_{t+1}, \mu\rangle\\
    &= \langle a^*, \mu-\tilde \mu_{t}\rangle + \langle a^*-A_{t+1}, \tilde \mu_t\rangle + \langle A_{t+1}, \tilde\mu_t-\mu\rangle\\
    &\leq \langle a^*, \mu-\tilde \mu_{t}\rangle + \langle A_{t+1}, \tilde\mu_t-\mu\rangle\,
\end{align*}
by definition of $A_{t+1}$.

Besides,
\begin{align*}
    \langle a^*,\mu-\tilde\mu_t\rangle
    &= \sum_{i\in a^*} \mu_i-\tilde\mu_{t,i}\\
    &= \sum_{i\in a^*} \Big(\mu_i-\hat\mu_{t,i} + \hat\mu_{t,i}-\tilde\mu_{t,i}\Big)\,.
\end{align*}

Under $\cG_t\cap\cC_t$, for all $i\in a^*$, 
\begin{align*}
    \mu_i-\hat\mu_{t,i} \leq f_{t,\delta}\frac{\Rdesignmat_{t,i}^{1/2}}{n_{t,(i,i)}}.
\end{align*}
Under $\cH_t$, for all $i\in a^*$, 
\begin{align*}
    \hat \mu_{t,i} + (1+g_{t, \delta})f_{t,\delta}\frac{\hat \Rdesignmat_{t,i}^{1/2}}{n_{t,(i,i)}}-\tilde\mu_{t,i} 
    &\leq g_{t, \delta}f_{t,\delta}\frac{\hat \Rdesignmat_{t,i}^{1/2}}{n_{t,(i,i)}}\\
    \hat \mu_{t,i} -\tilde\mu_{t,i} &\leq -f_{t,\delta}\frac{\hat \Rdesignmat_{t,i}^{1/2}}{n_{t,(i,i)}}\,.
\end{align*}

Therefore, under $\{\cG_t\cap\cH_t\cap\cC_t\}$,
\begin{align*}
    0\leq \Delta_{A_{t+1}} \leq \langle A_{t+1}, \tilde\mu_t-\mu\rangle\,.
\end{align*}

We now develop the expression
\begin{align*}
    \Delta_{A_{t+1}}^2 
    &\leq \Big(\langle A_{t+1}, \tilde \mu_{t}-\mu\rangle\Big)^2\\
    &\leq 2\Big(\langle A_{t+1}, \tilde \mu_{t}-\hat\mu_t\rangle\Big)^2 + 2\Big(\langle A_{t+1}, \hat\mu_{t}-\mu\rangle\Big)^2\\
    &\leq 2\Big(\sum_{i\in A_{t+1}} \tilde\mu_{t,i}-\hat\mu_{t,i}\Big)^2 + 2f_{t, \delta}^2\|\diagcounts_{t}^{-1}A_{t+1}\|^2_{\Rdesignmat_t}\\
    &\leq 2m\sum_{i\in A_{t+1}} \Big(\tilde\mu_{t,i}-\hat\mu_{t,i}\Big)^2 + 2f_{t, \delta}^2\|\diagcounts_{t}^{-1}A_{t+1}\|^2_{\Rdesignmat_{t}}\\
    &\leq 2m(1+2g_{t, \delta})^2f_{t, \delta}^2\sum_{i\in A_{t+1}} \frac{\hat\Rdesignmat_{t, (i,i)}}{n_{t,(i,i)}^2} + 2f_{t, \delta}^2\|\diagcounts_{t}^{-1}A_{t+1}\|^2_{\Rdesignmat_{t}}\\
    &\leq 18mg_{t, \delta}^2f_{t, \delta}^2\sum_{i\in A_{t+1}} \frac{\hat\Rdesignmat_{t, (i,i)}}{n_{t,(i,i)}^2} + 2f_{t, \delta}^2\|\diagcounts_{t}^{-1}A_{t+1}\|^2_{\Rdesignmat_{t}}\,.
\end{align*}

As
\begin{align*}
    \sum_{i\in A_{t+1}} \frac{\hat\Rdesignmat_{t, (i,i)}}{n_{t,(i,i)}^2} 
    &= 2\sum_{i\in A_{t+1}} \frac{\hat \SSigma_{i,i}}{n_{t,(i,i)}} + \sum_{i\in A_{t+1}}\frac{d\|B\|_\infty^2}{n_{t,(i,i)}^2}\\
    &\leq 2\sum_{i\in A_{t+1}} \frac{\SSigma_{i,i}}{n_{t,(i,i)}} +  \sum_{i\in A_{t+1}}\frac{d\|B\|_\infty^2}{n_{t,(i,i)}^2} +\sum_{i\in A_{t+1}}\frac{ \frac{5}{2}h_{t, \delta}\|B\|_{\infty}^2}{n_{t, (i,i)}^{3/2}} \\
    &\quad \quad+ \sum_{i\in A_{t+1}} \frac{h_{t, \delta}^2\|B\|_{\infty}^2/2}{n_{t,(i,i)}^2} + \sum_{i\in A_{t+1}}\frac{\|B\|_{\infty}^2/2}{n_{t, (i,i)}^3}\\
    &=\sum_{i\in A_{t+1}} \frac{2\SSigma_{i,i}}{n_{t,(i,i)}} +  \sum_{i\in A_{t+1}}\frac{(d+h_{t, \delta}^2/2)\|B\|_\infty^2}{n_{t,(i,i)}^2} \\
    &\quad \quad + \sum_{i\in A_{t+1}}\frac{ \frac{5}{2}h_{t, \delta}\|B\|_{\infty}^2}{n_{t, (i,i)}^{3/2}} + \sum_{i\in A_{t+1}}\frac{\|B\|_{\infty}^2/2}{n_{t, (i,i)}^3}\,,
\end{align*}
and
\begin{align*}
     \|\diagcounts_t^{-1}A_{t+1}\|_{\Rdesignmat_t}^2 
    &= \sum_{(i,j)\in A_{t+1}} \frac{n_{t,(i,j)}\SSigma_{i,j}}{n_{t,(i,i)}n_{t, (j,j)}} + \sum_{i\in A_{t+1}}\frac{n_{t(i,i)}\SSigma_{i,i}}{n_{t,(i,i)}^2} + \sum_{i\in A_{t+1}}\frac{\|B\|^2}{n_{t,(i,i)}^2}\\
    &\leq \sum_{(i,j)\in A_{t+1}} \frac{n_{t,(i,j)}\sqrt{\SSigma_{i,i}}\sqrt{\SSigma_{j,j}}}{n_{t,(i,i)}n_{t, (j,j)}} + \sum_{i\in A_{t+1}}\frac{\SSigma_{i,i}}{n_{t,(i,i)}} + \sum_{i\in A_{t+1}}\frac{d\|B\|^2_{\infty}}{n_{t,(i,i)}^2}\\
    &\leq \sum_{(i,j)\in A_{t+1}} \frac{n_{t,(i,j)}(\SSigma_{i,i}+\SSigma_{j,j})/2}{n_{t,(i,i)}n_{t, (j,j)}} + \sum_{i\in A_{t+1}}\frac{\SSigma_{i,i}}{n_{t,(i,i)}} + \sum_{i\in A_{t+1}}\frac{d\|B\|^2_{\infty}}{n_{t,(i,i)}^2}\\
    &\leq \sum_{i\in A_{t+1}} \frac{(m+1)\SSigma_{i,i}}{n_{t,(i,i)}} + \sum_{i\in A_{t+1}}\frac{d\|B\|^2_{\infty}}{n_{t,(i,i)}^2}\,.
\end{align*}

Therefore,
{\allowdisplaybreaks
\begin{align*}
    \Delta_{A_{t+1}}^2 
    &\leq \sum_{i\in A_{t+1}} \frac{36mg_{t, \delta}^2f_{t, \delta}^2\SSigma_{i,i}}{n_{t,(i,i)}} +  \sum_{i\in A_{t+1}}\frac{9mg_{t, \delta}^2f_{t, \delta}^2(2d+h_{t, \delta}^2)\|B\|_\infty^2}{n_{t,(i,i)}^2}\\
    &\hspace{1.5cm}+\sum_{i\in A_{t+1}}\frac{45g_{t, \delta}^2f_{t, \delta}^2h_{t, \delta}\|B\|_{\infty}^2}{n_{t, (i,i)}^{3/2}} + \sum_{i\in A_{t+1}}\frac{9mg_{t, \delta}^2f_{t, \delta}^2\|B\|_{\infty}^2}{n_{t, (i,i)}^3}\\
    &\hspace{1.5cm}+\sum_{i\in A_{t+1}} \frac{4mf_{t, \delta}^2\SSigma_{i,i}}{n_{t,(i,i)}} + \sum_{i\in A_{t+1}}\frac{2f_{t, \delta}^2d\|B\|^2_{\infty}}{n_{t,(i,i)}^2}\\
    &\leq \sum_{i\in A_{t+1}} \frac{4mf_{t, \delta}^2\SSigma_{i,i}(9g_{t, \delta}^2+1)}{n_{t,(i,i)}} +  \sum_{i\in A_{t+1}}\frac{f_{t, \delta}^2\|B\|_\infty^2\Big(27mdg_{t, \delta}^2h_{t, \delta}^2+2d\Big)}{n_{t,(i,i)}^2}\\
    &\hspace{1.5cm}+\sum_{i\in A_{t+1}}\frac{45g_{t, \delta}^2f_{t, \delta}^2h_{t, \delta}\|B\|_{\infty}^2}{n_{t, (i,i)}^{3/2}} + \sum_{i\in A_{t+1}}\frac{9mg_{t, \delta}^2f_{t, \delta}^2\|B\|_{\infty}^2}{n_{t, (i,i)}^3}\\
    &\leq \sum_{i\in A_{t+1}} \frac{40mf_{t, \delta}^2g_{t, \delta}^2\SSigma_{i,i})}{n_{t,(i,i)}} +  \sum_{i\in A_{t+1}}\frac{29mdf_{t, \delta}^2\|B\|_\infty^2g_{t, \delta}^2h_{t, \delta}^2}{n_{t,(i,i)}^2}\\
    &\hspace{1.5cm}+\sum_{i\in A_{t+1}}\frac{45g_{t, \delta}^2f_{t, \delta}^2h_{t, \delta}\|B\|_{\infty}^2}{n_{t, (i,i)}^{3/2}} + \sum_{i\in A_{t+1}}\frac{9mg_{t, \delta}^2f_{t, \delta}^2\|B\|_{\infty}^2}{n_{t, (i,i)}^3}\,.
\end{align*}
}

This finally yields
\begin{align*}
    \frac{\Delta_{A_{t+1}}}{f_{T, \delta}^2g_{T, \delta}^2}
    &\leq \sum_{i\in A_{t+1}} \frac{40m\SSigma_{i,i}}{n_{t,(i,i)}} +  \sum_{i\in A_{t+1}}\frac{29mdh_{t, \delta}^2\|B\|_\infty^2}{n_{t,(i,i)}^2}\\
    &\qquad+\sum_{i\in A_{t+1}}\frac{45mh_{t, \delta}\|B\|_{\infty}^2}{n_{t, (i,i)}^{3/2}} + \sum_{i\in A_{t+1}}\frac{9m\|B\|_{\infty}^2}{n_{t, (i,i)}^3}\,.
\end{align*}
\end{proof}

\section{Experimental results}
\label{app:Expe}

This section outlines some experimental results.

\subsection{Theoretical regret upper bound}

In this experiment, the objective is to show the effect of the smallest suboptimality gap $\Delta_{\min}$ over theoretical gap-dependent regret upper bounds for \ESCBC and \OLSUCBC.
To that end, we sampled $100$ environments with different $\Delta_{\min }$, with a constant number of items $d=20$, a horizon of $T=10^5$ rounds, and randomly sampled structures. We represent theoretical upper bounds with respect to $1/\Delta_{\min }$ in \cref{fig:th_regret}. 

For readability reasoning, we have rescaled and reweighted the different components of the sums so that the leading term in the upper-bounds ($1/\Delta_{\min }$ or $1/\Delta_{\min }^2$ for \ESCBC or \OLSUCBC) is greater/smaller than the rest, in a significant number of cases. In particular, all the theoretical upper bounds have the form

\begin{align*}
    R_T &\leq \frac{C}{\Delta_{\min }} + \frac{C'}{\Delta_{\min }^2} + C_r\text{Rest}\,,
\end{align*}
where $C,\ C'$ and $C_r$ are the tuned constants. 

For \OLSUCBC, 
\begin{align*}
    \text{Rest} 
    &= \Delta_{\max }(d(d+1)/2)\\
    &\hspace{1.5cm}+ \|B\|_\infty f_{T, \delta}(4d+h_{t, \delta}^2)^{1/2}\log(m)^{1/2}\Bigg(1 + \log\Big(\frac{\Delta_{\max}}{\Delta_{\min}}\Big)\Bigg)\notag\\
    &\hspace{1.5cm} + \|B\|_\infty^{4/3}f_{T, \delta}^{4/3}h_{t, \delta}^{2/3}\log(m)^{2/3}d^2m^{2/3}\Delta_{\min}^{-1/3}\\
    &\hspace{1.5cm} + \|B\|_\infty^{2/3}f_{T,\delta}^{2/3}\log(m)^{1/3}d^2m^{2/3}\Delta_{\max}^{1/3}\,.
\end{align*}

For \ESCBC, 
\begin{align*}
    \text{Rest} 
    &= \Delta_{\max }(d(d+1)/2)\\
    &\hspace{1.5cm}+\log(T)\log(m)^2\sum_{i\in [d]}\frac{\max_{a\in\cA / i\in a}\bar \sigma_{a,i}^2}{\Delta_{i_{\min }}} + \log(T)\log(m)\sum_{i, j}\log\Big(\frac{\Delta_{(i, j), \max}}{\Delta_{(i, j), \min}}\Big)\notag\\
    &\hspace{1.5cm} + \log(T)\log(m)\sum_{i}\log\Big(\frac{\Delta_{i, \max}}{\Delta_{i, \min}}\Big)+ \log(T)\log(m)\sum_{i, j}\Delta_{(i, j), \min}^{-1/3}\,.
\end{align*}

\begin{figure}[ht]
    \centering
    \includegraphics[width=.75\linewidth]{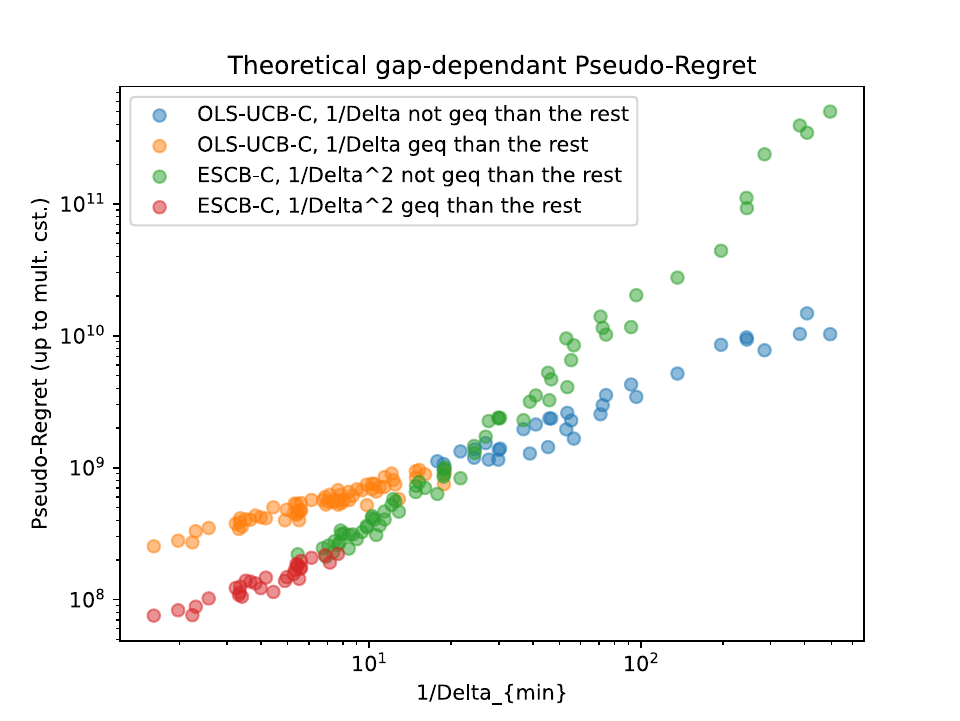}
    \caption{Evolution of regret upper bounds.}
    \label{fig:th_regret}
\end{figure}

When the minimal gap is too small (right part of Fig.~\ref{fig:th_regret}), both upper-bounds are of the magnitude of either $1/\Delta_{\min }^2$ or $1/\Delta_{\min }$ (depending on the algorithm). In this case, the theoretical regret bound of \OLSUCBC outperforms the one of \ESCBC (green dots vs. blue dots).  On the other side, when the gap is big enough, the remaining terms have more impact. In this case, \ESCBC has a better theoretical guarantee (orange dots vs. red dots). 

\subsection{Comparison between \ESCBC and \OLSUCBC}

We evaluate \ESCBC (approximated as proposed in \citealp{perrault2020covariance}) and \OLSUCBC on $d=5$ items, $P=10$ actions, $T=10^5$ rounds and randomly sampled structures.

We represent the pseudo-regret evolutions in \cref{fig:pregrets}. The evolutions remain the same until $10^3$ rounds. After that, \ESCBC seemingly performs better than \OLSUCBC which has a supplementary $\log(t)$ factor and is more conservative. However, just before $10^5$ rounds, we can observe a slight regime change for \ESCBC while the pseudo-regret of \OLSUCBC continues to increase smoothly. The average regret of \ESCBC seems to have an inflexion point upward to meet the q75 curve.

\begin{figure}[ht]
    \centering
    \includegraphics[width=.75\linewidth]{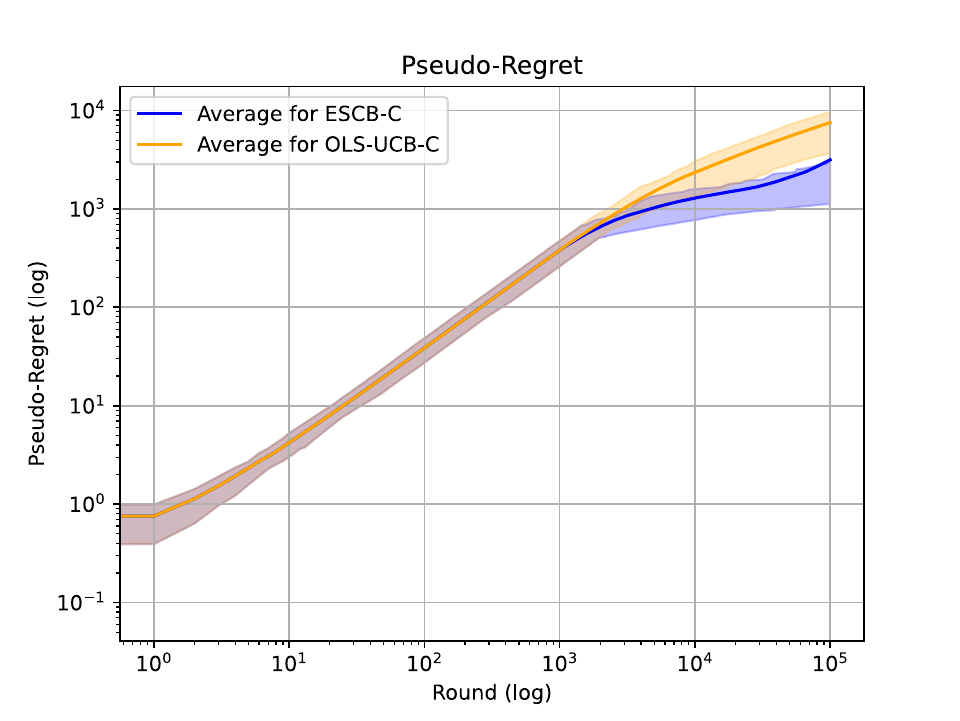}
    \caption{Pseudo-regret for \ESCBC and \OLSUCBC for randomly sampled environments (with q25 and q75 confidence intervals).}
    \label{fig:pregrets}
\end{figure}

When observing the final regret with respect to $1/\Delta_{\min }$ in \cref{fig:pregret_delta}, overall \ESCBC seems to outperform \OLSUCBC except on some corner cases. Those cases skew the distribution for \ESCBC. Especially, for the case with the smallest suboptimality gap (the rightmost part of the figure), \OLSUCBC  outperforms \ESCBC.

\begin{figure}[ht]
    \centering
    \includegraphics[width=0.8\linewidth]{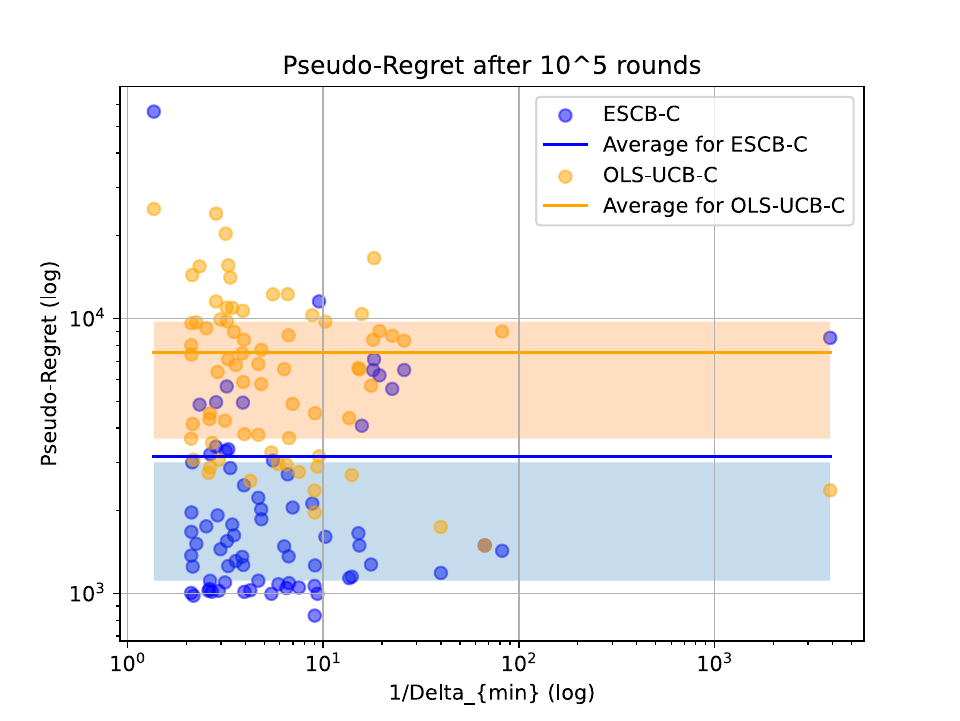}
    \caption{Pseudo-Regret with respect to $1/\Delta_{\min }$.}
    \label{fig:pregret_delta}
\end{figure}

The evolution of the pseudo-regret in this case with the smallest suboptimality gap is presented in \cref{fig:pregret_worst}. While \ESCBC seems to fare better in the beginning, we actually see a sharp increase in its pseudo-regret before $10^5$ rounds. It could have been caused by the computational approximation of \ESCBC (described in \citet{perrault2020covariance}), and/or it could be the impact of the $1/\Delta_{\min }^2$ term.

\begin{figure}[ht]
    \centering
    \includegraphics[width=0.8\linewidth]{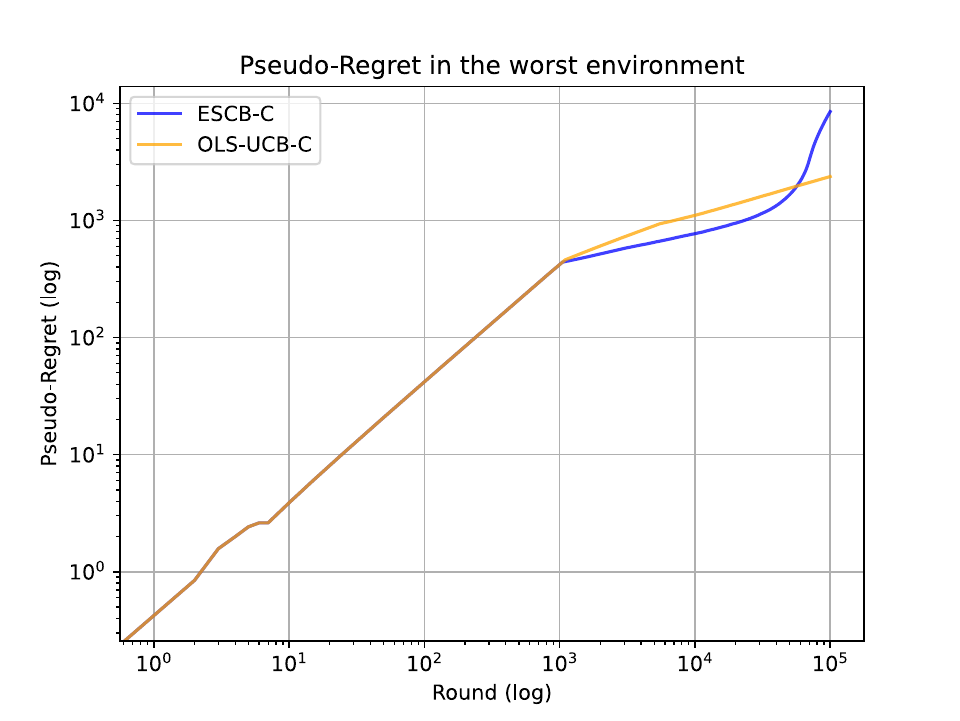}
    \caption{Pseudo-Regret in the ``worst'' environment.}
    \label{fig:pregret_worst}
\end{figure}

\newpage
\section*{NeurIPS Paper Checklist}

\begin{enumerate}

\item {\bf Claims}
    \item[] Question: Do the main claims made in the abstract and introduction accurately reflect the paper's contributions and scope?
    \item[] Answer: \answerYes{}{} 
    \item[] Justification: The main claims of the paper are stated in Theorems and Propositions which for which the technical proofs are in the Appendix.
    \item[] Guidelines:
    \begin{itemize}
        \item The answer NA means that the abstract and introduction do not include the claims made in the paper.
        \item The abstract and/or introduction should clearly state the claims made, including the contributions made in the paper and important assumptions and limitations. A No or NA answer to this question will not be perceived well by the reviewers. 
        \item The claims made should match theoretical and experimental results, and reflect how much the results can be expected to generalize to other settings. 
        \item It is fine to include aspirational goals as motivation as long as it is clear that these goals are not attained by the paper. 
    \end{itemize}

\item {\bf Limitations}
    \item[] Question: Does the paper discuss the limitations of the work performed by the authors?
    \item[] Answer: \answerYes 
    \item[] Justification: The results and theorems are discussed with comments made especially about some limitations. In particular, computational complexity is addressed.
    \item[] Guidelines:
    \begin{itemize}
        \item The answer NA means that the paper has no limitation while the answer No means that the paper has limitations, but those are not discussed in the paper. 
        \item The authors are encouraged to create a separate "Limitations" section in their paper.
        \item The paper should point out any strong assumptions and how robust the results are to violations of these assumptions (e.g., independence assumptions, noiseless settings, model well-specification, asymptotic approximations only holding locally). The authors should reflect on how these assumptions might be violated in practice and what the implications would be.
        \item The authors should reflect on the scope of the claims made, e.g., if the approach was only tested on a few datasets or with a few runs. In general, empirical results often depend on implicit assumptions, which should be articulated.
        \item The authors should reflect on the factors that influence the performance of the approach. For example, a facial recognition algorithm may perform poorly when image resolution is low or images are taken in low lighting. Or a speech-to-text system might not be used reliably to provide closed captions for online lectures because it fails to handle technical jargon.
        \item The authors should discuss the computational efficiency of the proposed algorithms and how they scale with dataset size.
        \item If applicable, the authors should discuss possible limitations of their approach to address problems of privacy and fairness.
        \item While the authors might fear that complete honesty about limitations might be used by reviewers as grounds for rejection, a worse outcome might be that reviewers discover limitations that aren't acknowledged in the paper. The authors should use their best judgment and recognize that individual actions in favor of transparency play an important role in developing norms that preserve the integrity of the community. Reviewers will be specifically instructed to not penalize honesty concerning limitations.
    \end{itemize}

\item {\bf Theory Assumptions and Proofs}
    \item[] Question: For each theoretical result, does the paper provide the full set of assumptions and a complete (and correct) proof?
    \item[] Answer: \answerYes{} 
    \item[] Justification: The Theorems and proofs are numbered and cross-referenced. The proofs for the main results are outlined in the paper and formally written in the Appendix.
    \item[] Guidelines:
    \begin{itemize}
        \item The answer NA means that the paper does not include theoretical results. 
        \item All the theorems, formulas, and proofs in the paper should be numbered and cross-referenced.
        \item All assumptions should be clearly stated or referenced in the statement of any theorems.
        \item The proofs can either appear in the main paper or the supplemental material, but if they appear in the supplemental material, the authors are encouraged to provide a short proof sketch to provide intuition. 
        \item Inversely, any informal proof provided in the core of the paper should be complemented by formal proofs provided in appendix or supplemental material.
        \item Theorems and Lemmas that the proof relies upon should be properly referenced. 
    \end{itemize}

    \item {\bf Experimental Result Reproducibility}
    \item[] Question: Does the paper fully disclose all the information needed to reproduce the main experimental results of the paper to the extent that it affects the main claims and/or conclusions of the paper (regardless of whether the code and data are provided or not)?
    \item[] Answer: \answerNA{} 
    \item[] Justification: The paper is mainly states theoretical results and include no experiments.
    \item[] Guidelines:
    \begin{itemize}
        \item The answer NA means that the paper does not include experiments.
        \item If the paper includes experiments, a No answer to this question will not be perceived well by the reviewers: Making the paper reproducible is important, regardless of whether the code and data are provided or not.
        \item If the contribution is a dataset and/or model, the authors should describe the steps taken to make their results reproducible or verifiable. 
        \item Depending on the contribution, reproducibility can be accomplished in various ways. For example, if the contribution is a novel architecture, describing the architecture fully might suffice, or if the contribution is a specific model and empirical evaluation, it may be necessary to either make it possible for others to replicate the model with the same dataset, or provide access to the model. In general. releasing code and data is often one good way to accomplish this, but reproducibility can also be provided via detailed instructions for how to replicate the results, access to a hosted model (e.g., in the case of a large language model), releasing of a model checkpoint, or other means that are appropriate to the research performed.
        \item While NeurIPS does not require releasing code, the conference does require all submissions to provide some reasonable avenue for reproducibility, which may depend on the nature of the contribution. For example
        \begin{enumerate}
            \item If the contribution is primarily a new algorithm, the paper should make it clear how to reproduce that algorithm.
            \item If the contribution is primarily a new model architecture, the paper should describe the architecture clearly and fully.
            \item If the contribution is a new model (e.g., a large language model), then there should either be a way to access this model for reproducing the results or a way to reproduce the model (e.g., with an open-source dataset or instructions for how to construct the dataset).
            \item We recognize that reproducibility may be tricky in some cases, in which case authors are welcome to describe the particular way they provide for reproducibility. In the case of closed-source models, it may be that access to the model is limited in some way (e.g., to registered users), but it should be possible for other researchers to have some path to reproducing or verifying the results.
        \end{enumerate}
    \end{itemize}

\item {\bf Open access to data and code}
    \item[] Question: Does the paper provide open access to the data and code, with sufficient instructions to faithfully reproduce the main experimental results, as described in supplemental material?
    \item[] Answer: \answerNA{} 
    \item[] Justification: The paper does not include experiments requiring code.
    \item[] Guidelines:
    \begin{itemize}
        \item The answer NA means that paper does not include experiments requiring code.
        \item Please see the NeurIPS code and data submission guidelines (\url{https://nips.cc/public/guides/CodeSubmissionPolicy}) for more details.
        \item While we encourage the release of code and data, we understand that this might not be possible, so “No” is an acceptable answer. Papers cannot be rejected simply for not including code, unless this is central to the contribution (e.g., for a new open-source benchmark).
        \item The instructions should contain the exact command and environment needed to run to reproduce the results. See the NeurIPS code and data submission guidelines (\url{https://nips.cc/public/guides/CodeSubmissionPolicy}) for more details.
        \item The authors should provide instructions on data access and preparation, including how to access the raw data, preprocessed data, intermediate data, and generated data, etc.
        \item The authors should provide scripts to reproduce all experimental results for the new proposed method and baselines. If only a subset of experiments are reproducible, they should state which ones are omitted from the script and why.
        \item At submission time, to preserve anonymity, the authors should release anonymized versions (if applicable).
        \item Providing as much information as possible in supplemental material (appended to the paper) is recommended, but including URLs to data and code is permitted.
    \end{itemize}

\item {\bf Experimental Setting/Details}
    \item[] Question: Does the paper specify all the training and test details (e.g., data splits, hyperparameters, how they were chosen, type of optimizer, etc.) necessary to understand the results?
    \item[] Answer: \answerNA{}{} 
    \item[] Justification: The answer NA means that the paper does not include experiments.
    \item[] Guidelines:
    \begin{itemize}
        \item The answer NA means that the paper does not include experiments.
        \item The experimental setting should be presented in the core of the paper to a level of detail that is necessary to appreciate the results and make sense of them.
        \item The full details can be provided either with the code, in appendix, or as supplemental material.
    \end{itemize}

\item {\bf Experiment Statistical Significance}
    \item[] Question: Does the paper report error bars suitably and correctly defined or other appropriate information about the statistical significance of the experiments?
    \item[] Answer: \answerNA{} 
    \item[] Justification: The paper does not include experiments
    \item[] Guidelines:
    \begin{itemize}
        \item The answer NA means that the paper does not include experiments.
        \item The authors should answer "Yes" if the results are accompanied by error bars, confidence intervals, or statistical significance tests, at least for the experiments that support the main claims of the paper.
        \item The factors of variability that the error bars are capturing should be clearly stated (for example, train/test split, initialization, random drawing of some parameter, or overall run with given experimental conditions).
        \item The method for calculating the error bars should be explained (closed form formula, call to a library function, bootstrap, etc.)
        \item The assumptions made should be given (e.g., Normally distributed errors).
        \item It should be clear whether the error bar is the standard deviation or the standard error of the mean.
        \item It is OK to report 1-sigma error bars, but one should state it. The authors should preferably report a 2-sigma error bar than state that they have a 96\% CI, if the hypothesis of Normality of errors is not verified.
        \item For asymmetric distributions, the authors should be careful not to show in tables or figures symmetric error bars that would yield results that are out of range (e.g. negative error rates).
        \item If error bars are reported in tables or plots, The authors should explain in the text how they were calculated and reference the corresponding figures or tables in the text.
    \end{itemize}

\item {\bf Experiments Compute Resources}
    \item[] Question: For each experiment, does the paper provide sufficient information on the computer resources (type of compute workers, memory, time of execution) needed to reproduce the experiments?
    \item[] Answer: \answerNA{} 
    \item[] Justification: The paper does not include experiments.
    \item[] Guidelines:
    \begin{itemize}
        \item The answer NA means that the paper does not include experiments.
        \item The paper should indicate the type of compute workers CPU or GPU, internal cluster, or cloud provider, including relevant memory and storage.
        \item The paper should provide the amount of compute required for each of the individual experimental runs as well as estimate the total compute. 
        \item The paper should disclose whether the full research project required more compute than the experiments reported in the paper (e.g., preliminary or failed experiments that didn't make it into the paper). 
    \end{itemize}
    
\item {\bf Code Of Ethics}
    \item[] Question: Does the research conducted in the paper conform, in every respect, with the NeurIPS Code of Ethics \url{https://neurips.cc/public/EthicsGuidelines}?
    \item[] Answer: \answerYes{} 
    \item[] Justification: We have reviewed the NeurIPS Code of Ethics. The paper does not involve human subjects or participants, and the data-related concerns are not applicable.
    \item[] Guidelines:
    \begin{itemize}
        \item The answer NA means that the authors have not reviewed the NeurIPS Code of Ethics.
        \item If the authors answer No, they should explain the special circumstances that require a deviation from the Code of Ethics.
        \item The authors should make sure to preserve anonymity (e.g., if there is a special consideration due to laws or regulations in their jurisdiction).
    \end{itemize}

\item {\bf Broader Impacts}
    \item[] Question: Does the paper discuss both potential positive societal impacts and negative societal impacts of the work performed?
    \item[] Answer: \answerNA{} 
    \item[] Justification: The paper is mainly theoretical and is not directly tied to an application.
    \item[] Guidelines:
    \begin{itemize}
        \item The answer NA means that there is no societal impact of the work performed.
        \item If the authors answer NA or No, they should explain why their work has no societal impact or why the paper does not address societal impact.
        \item Examples of negative societal impacts include potential malicious or unintended uses (e.g., disinformation, generating fake profiles, surveillance), fairness considerations (e.g., deployment of technologies that could make decisions that unfairly impact specific groups), privacy considerations, and security considerations.
        \item The conference expects that many papers will be foundational research and not tied to particular applications, let alone deployments. However, if there is a direct path to any negative applications, the authors should point it out. For example, it is legitimate to point out that an improvement in the quality of generative models could be used to generate deepfakes for disinformation. On the other hand, it is not needed to point out that a generic algorithm for optimizing neural networks could enable people to train models that generate Deepfakes faster.
        \item The authors should consider possible harms that could arise when the technology is being used as intended and functioning correctly, harms that could arise when the technology is being used as intended but gives incorrect results, and harms following from (intentional or unintentional) misuse of the technology.
        \item If there are negative societal impacts, the authors could also discuss possible mitigation strategies (e.g., gated release of models, providing defenses in addition to attacks, mechanisms for monitoring misuse, mechanisms to monitor how a system learns from feedback over time, improving the efficiency and accessibility of ML).
    \end{itemize}
    
\item {\bf Safeguards}
    \item[] Question: Does the paper describe safeguards that have been put in place for responsible release of data or models that have a high risk for misuse (e.g., pretrained language models, image generators, or scraped datasets)?
    \item[] Answer: \answerNA{} 
    \item[] Justification: The paper poses no such risks.
    \item[] Guidelines:
    \begin{itemize}
        \item The answer NA means that the paper poses no such risks.
        \item Released models that have a high risk for misuse or dual-use should be released with necessary safeguards to allow for controlled use of the model, for example by requiring that users adhere to usage guidelines or restrictions to access the model or implementing safety filters. 
        \item Datasets that have been scraped from the Internet could pose safety risks. The authors should describe how they avoided releasing unsafe images.
        \item We recognize that providing effective safeguards is challenging, and many papers do not require this, but we encourage authors to take this into account and make a best faith effort.
    \end{itemize}

\item {\bf Licenses for existing assets}
    \item[] Question: Are the creators or original owners of assets (e.g., code, data, models), used in the paper, properly credited and are the license and terms of use explicitly mentioned and properly respected?
    \item[] Answer: \answerNA{} 
    \item[] Justification: The paper does not use existing assets.
    \item[] Guidelines: 
    \begin{itemize}
        \item The answer NA means that the paper does not use existing assets.
        \item The authors should cite the original paper that produced the code package or dataset.
        \item The authors should state which version of the asset is used and, if possible, include a URL.
        \item The name of the license (e.g., CC-BY 4.0) should be included for each asset.
        \item For scraped data from a particular source (e.g., website), the copyright and terms of service of that source should be provided.
        \item If assets are released, the license, copyright information, and terms of use in the package should be provided. For popular datasets, \url{paperswithcode.com/datasets} has curated licenses for some datasets. Their licensing guide can help determine the license of a dataset.
        \item For existing datasets that are re-packaged, both the original license and the license of the derived asset (if it has changed) should be provided.
        \item If this information is not available online, the authors are encouraged to reach out to the asset's creators.
    \end{itemize}

\item {\bf New Assets}
    \item[] Question: Are new assets introduced in the paper well documented and is the documentation provided alongside the assets?
    \item[] Answer: \answerNA{} 
    \item[] Justification: The paper does not release new assets.
    \item[] Guidelines:
    \begin{itemize}
        \item The answer NA means that the paper does not release new assets.
        \item Researchers should communicate the details of the dataset/code/model as part of their submissions via structured templates. This includes details about training, license, limitations, etc. 
        \item The paper should discuss whether and how consent was obtained from people whose asset is used.
        \item At submission time, remember to anonymize your assets (if applicable). You can either create an anonymized URL or include an anonymized zip file.
    \end{itemize}

\item {\bf Crowdsourcing and Research with Human Subjects}
    \item[] Question: For crowdsourcing experiments and research with human subjects, does the paper include the full text of instructions given to participants and screenshots, if applicable, as well as details about compensation (if any)? 
    \item[] Answer: \answerNA{} 
    \item[] Justification: The paper does not involve crowdsourcing nor research with human subjects.
    \item[] Guidelines:
    \begin{itemize}
        \item The answer NA means that the paper does not involve crowdsourcing nor research with human subjects.
        \item Including this information in the supplemental material is fine, but if the main contribution of the paper involves human subjects, then as much detail as possible should be included in the main paper. 
        \item According to the NeurIPS Code of Ethics, workers involved in data collection, curation, or other labor should be paid at least the minimum wage in the country of the data collector. 
    \end{itemize}

\item {\bf Institutional Review Board (IRB) Approvals or Equivalent for Research with Human Subjects}
    \item[] Question: Does the paper describe potential risks incurred by study participants, whether such risks were disclosed to the subjects, and whether Institutional Review Board (IRB) approvals (or an equivalent approval/review based on the requirements of your country or institution) were obtained?
    \item[] Answer: \answerNA{} 
    \item[] Justification: The paper does not involve crowdsourcing nor research with human subjects.
    \item[] Guidelines:
    \begin{itemize}
        \item The answer NA means that the paper does not involve crowdsourcing nor research with human subjects.
        \item Depending on the country in which research is conducted, IRB approval (or equivalent) may be required for any human subjects research. If you obtained IRB approval, you should clearly state this in the paper. 
        \item We recognize that the procedures for this may vary significantly between institutions and locations, and we expect authors to adhere to the NeurIPS Code of Ethics and the guidelines for their institution. 
        \item For initial submissions, do not include any information that would break anonymity (if applicable), such as the institution conducting the review.
    \end{itemize}

\end{enumerate}

\end{document}